\documentclass[twoside]{article}

\usepackage[accepted]{aistats2023}

\usepackage[round]{natbib}
\bibliographystyle{plainnat}

\usepackage[utf8]{inputenc} %
\usepackage[T1]{fontenc}    %
\usepackage[colorlinks=true,linkcolor=blue,citecolor=blue]{hyperref} 
\usepackage{multirow}
\usepackage{arydshln}
\usepackage{url}            %
\usepackage{booktabs}       %
\usepackage{amsfonts}       %
\usepackage{nicefrac}       %
\usepackage{microtype}      %

\usepackage{rotating}

\usepackage{amsmath, amssymb, amsthm}
\usepackage{dsfont}

\usepackage{algorithm}
\usepackage{algorithmic}

\usepackage{bbm}
\usepackage{graphicx}
\usepackage{xcolor}
\usepackage{enumitem}
\usepackage{dsfont}

\newcommand{\E}{\mathbb{E}}
\newcommand{\R}{\mathbb{R}}
\newcommand{\N}{\mathbb{N}}

\newcommand{\xb}{\mathbf{x}}
\newcommand{\Xb}{\mathbf{X}}
\newcommand{\zb}{\mathbf{z}}
\newcommand{\Zb}{\mathbf{Z}}
\newcommand{\Ub}{\mathbf{U}}
\newcommand{\Rb}{\mathbf{R}}
\newcommand{\Kb}{\mathbf{K}}
\newcommand{\Bb}{\mathbf{B}}
\newcommand{\Ab}{\mathbf{A}}
\newcommand{\Ib}{\mathbf{I}}
\newcommand{\Qb}{\mathbf{Q}}
\newcommand{\Wb}{\mathbf{W}}
\newcommand{\Vb}{\mathbf{V}}
\newcommand{\Lamb}{\mathbf{\Lambda}}
\newcommand{\wb}{\mathbf{w}}
\newcommand{\Ufair}{\mathbf{U}_{\text{fair}}}
\newcommand{\Ust}{\mathbf{U}_{\text{st}}}
\newcommand{\Idk}{\mathbf{I}_{k\times k}}
\newcommand{\nullb}{\mathbf{0}}

\makeatletter
\newcommand*{\transpose}{%
  {\mathpalette\@transpose{}}%
}
\newcommand*{\@transpose}[2]{%
  \raisebox{\depth}{$\m@th#1\intercal$}%
}
\makeatother

\DeclareMathOperator{\argmin}{argmin}
\DeclareMathOperator{\argmax}{argmax}

\newcommand{\charfct}{\mathds{1}}
\DeclareMathOperator{\Psymb}{Pr}
\DeclareMathOperator{\trace}{trace}
\DeclareMathOperator{\rank}{rank}
\DeclareMathOperator{\Var}{Var}
\DeclareMathOperator{\Cov}{Cov}

\newtheorem{proposition}{Proposition}

\begin{document}

\runningauthor{Matthäus Kleindessner, Michele Donini, Chris Russell, Muhammad Bilal Zafar }

\twocolumn[

\aistatstitle{Efficient 
fair PCA for fair representation learning}

\aistatsauthor{ Matthäus Kleindessner \And Michele Donini }

\aistatsaddress{ Amazon Web Services\\ Tübingen, Germany \And  Amazon Web Services\\ Berlin, Germany }

\aistatsauthor{ Chris Russell \And Muhammad Bilal Zafar }

\aistatsaddress{  Amazon Web Services\\ Tübingen, Germany \And Amazon Web Services\\ Berlin, Germany}

]

\begin{abstract}

We revisit the problem of fair principal component analysis (PCA), where the 
goal is to learn the 
best 
low-rank 
linear approximation 
of the data that obfuscates 
 demographic information. 
We propose a conceptually 
simple approach that allows for an analytic solution 
similar to standard PCA and can be kernelized.  
Our methods have the same complexity as standard PCA, or kernel PCA,  
and 
run much faster than existing methods for fair PCA 
based on semidefinite programming or manifold optimization, while achieving similar~results.

\end{abstract}

\newcommand{\abstparagraph}{2pt}

\section{INTRODUCTION}\label{sec:introduction}

Over the last decade, fairness in machine learning  \citep{barocas-hardt-narayanan} has become an established field. 
Numerous 
definitions of fairness, 
and algorithms trying to 
satisfy these,  
have been proposed. In 
the context of 
classification, two of the most prominent 
fairness 
notions are 
demographic parity \citep[DP;][]{kamiran2011} and equality of opportunity \citep[EO;][]{hardt2016equality}. DP   requires a classifier's prediction to be independent of a datapoint's demographic attribute (such as a person's gender or race), 
and EO requires the prediction to be independent of the 
attribute given that the 
datapoint's 
ground-truth label is positive. 
Formally, 
in the case of binary classification, 
\begin{align}\label{eq:fairness_definitions}
\begin{split}
\text{DP:}~~\Psymb(\hat{Y}=1|Z=z)&=\Psymb(\hat{Y}=1),\\
\text{EO:}\Psymb(\hat{Y}=1|Z=z,Y=1)&=\Psymb(\hat{Y}=1|Y=1),
\end{split}
\end{align}
where $\Psymb$ is a probability distribution over 
random variables~$Y,\hat{Y}\in\{0,1\}$ and $Z\in\mathcal{Z}$, with $Y$ representing the ground-truth label, $\hat{Y}$
representing 
the classifier's prediction and
$Z$ representing the demographic attribute.

An appealing approach to satisfy DP or EO is fair representation learning (e.g., \citealp{zemel2013}; see Section~\ref{sec:related_work} for related work): let $X\in\mathcal{X}$ denote a random vector representing features based on which predictions are made. The idea 
of fair representation learning 
is to learn a \emph{fair} 
feature 
representation $f:\mathcal{X}\rightarrow \mathcal{X}'$~such~that $f(X)$ is 
(approximately) 
independent of the demographic attribute~$Z$ (conditioned on $Y=1$ if one aims to satisfy EO). Once a fair representation is found, any model trained on this representation will 
also 
be fair. Of course, the representation still needs  to contain 
some 
information about $X$ in order~to~be~useful. 

Leaving fairness aside, one of the most prominent methods for 
representation learning (in its special form of dimensionality reduction)
is principal component analysis \citep[PCA; e.g.,][]{shalev2014understanding}. PCA projects the data onto a linear subspace such that the approximation error is minimized.
The key idea of our paper is to alter PCA such that it 
gives 
a fair representation. 
This idea is not new: \citet{olfat2019} and \citet{Lee2022} already proposed formulations of fair PCA that aim for the same goal. We discuss the differences between our paper and these works in detail in Section~\ref{sec:related_work}. In short, the differences are twofold: (i)~while the goal is the same, the derivations are different, and we consider our derivation to be simpler and more intuitive. (ii)~the different derivations lead to different algorithms, with our 
main 
algorithm being very similar to standard PCA. While our formulation allows for an analytical solution by means of eigenvector computations,  the methods by \citeauthor{olfat2019} and \citeauthor{Lee2022} rely on semidefinite programming 
or 
manifold optimization.
While our algorithm can be implemented in a few lines 
of code 
and runs very fast, with the same complexity as standard PCA, their algorithms rely on 
specialized libraries and suffer from a huge running time. 
We believe that 
because of these advantages 
our new 
derivation of fair PCA 
and 
our proposed approach 
add value
to the existing 
literature.

\paragraph{Outline} 
In Section~\ref{sec:methods}, we first review 
PCA and then derive our formulation of  fair PCA. We discuss extensions and variants, including a kernelized version, in Section~\ref{sec:extensions}. We provide a detailed discussion of related work in Section~\ref{sec:related_work}
and present extensive experiments in Section~\ref{sec:experiments}.
Some 
details and 
experiments are deferred to the appendix.

\paragraph{Notation}
For $n\in\N$, let $[n]=\{1,\ldots,n\}$. 
We generally denote scalars by non-bold letters, vectors by bold lower-case letters, and matrices by bold upper-case letters. 
All vectors $\xb\in\R^d\equiv\R^{d\times 1}$ are column vectors, except that we use $\nullb$ to denote 
both a column vector 
and a row vector 
(and also a matrix) 
of all zeros. Let $\xb^\transpose\in\R^{1\times d}$ be the transposed row vector of $\xb$.
We denote the Euclidean norm of $\xb$ by 
$\|\xb\|_2=\sqrt{\sum_{i} \xb_i^2}$. 
For a matrix~$\Xb\in\R^{d_1\times d_2}$, let $\Xb^\transpose\in\R^{d_2\times d_1}$ be 
its transpose.
$\Idk$ denotes the identity matrix of size~$k$. 
 For 
 $\Xb\in\R^{d\times d}$, let $\trace(\Xb)=\sum_{i=1}^d \Xb_{ii}$.

\section{FAIR PCA FOR FAIR REPRESENTATION LEARNING}\label{sec:methods}

We first review 
PCA and then 
derive 
our formulation of fair PCA. Our formulation is a relaxation of a strong constraint imposed on the PCA objective. We provide a natural interpretation of the relaxation and show that it is equivalent to the original constraint under a particular 
data~model.

\vspace{\abstparagraph}
\textbf{PCA}~~ 
We 
represent 
a dataset of $n$ points $\xb_1,\ldots,\xb_n\in \R^d$ 
as a matrix $\Xb\in\R^{d\times n}$, where the $i$-th column equals $\xb_i$. Given a target dimension~$k\in[d-1]$, 
PCA \citep[e.g.,][]{shalev2014understanding} finds 
a best-approximating projection 
of the dataset 
onto a $k$-dimensional linear subspace.  
That is, 
PCA 
 finds $\Ub\in\R^{d\times k}$ solving
 \begin{align}\label{eq:standard_PCA}
 \begin{split}
 &\argmin_{\Ub\in\R^{d\times k}:\, \Ub^\transpose\Ub=\Idk}\sum_{i=1}^n\|\xb_i-\Ub\Ub^\transpose\xb_i\|_2^2\\
 &~~~~\equiv \argmax_{\Ub\in\R^{d\times k}:\, \Ub^\transpose\Ub=\Idk} \trace(\Ub^\transpose\Xb\Xb^\transpose\Ub).
 \end{split}
 \end{align} 
 $\Ub^\transpose\xb_i\in \R^k$ is the projection of $\xb_i$ onto the subspace 
 spanned by the columns of $\Ub$ 
 viewed as a point in 
 the 
lower-dim 
 space 
 $\R^k$, and $\Ub\Ub^\transpose\xb_i\in \R^d$ is the projection viewed as a point in the original space~$\R^d$.  
 A solution~to~\eqref{eq:standard_PCA} is given by any 
 $\Ub$ 
 that comprises 
 as columns orthonormal eigenvectors, corresponding to the largest $k$ eigenvalues,~of~$\Xb\Xb^\transpose$.

\vspace{\abstparagraph}
\textbf{Our formulation of 
fair PCA}~~
In fair PCA, we aim to 
remove demographic information when projecting the dataset onto the $k$-dimensional linear subspace. We look for a best-approximating projection such that the projected data does not contain demographic information anymore: 
let $z_i\in\{0,1\}$ denote the demographic attribute of datapoint~$\xb_i$, which encodes membership in one of two demographic groups (we discuss how to extend our approach to multiple groups in Section~\ref{subsec:multiple_groups} 
and to multiple attributes in Section~\ref{subsec:multiple_dem_attributes}). Ideally, we would like that no classifier can predict $z_i$ when getting to see only the projection 
of $\xb_i$ onto the $k$-dimensional
subspace, 
that is 
we would want to solve
\begin{align}\label{eq:fair_PCA}
\begin{split}
& ~~~~~~~~~~\argmax_{\Ub\in\mathcal{U}} \trace(\Ub^\transpose\Xb\Xb^\transpose\Ub),\quad\text{where}\\
 &\mathcal{U}=\left\{\Ub\in\R^{d\times k}:\,\Ub^\transpose\Ub=\Idk~\text{and $\forall h: \R^k\rightarrow \R$,}\right.\\
 & ~~~~~~~\left.\text{ $h(\Ub^\transpose\xb_i)$ and $z_i$ are statistically independent}\right\}. 
\end{split}
\end{align} 

It is not hard to see, that for a given target dimension~$k$ the set $\mathcal{U}$ defined in \eqref{eq:fair_PCA} may be empty and hence Problem~\eqref{eq:fair_PCA} not well defined 
(see Appendix~\ref{app:not_well_defined} for an example). The reason is that linear projections are not flexible enough  to always remove all demographic information from a dataset.\footnote{
Also more powerful ``projections'' in methods for learning adversarially fair representations (cf. Section~\ref{sec:related_work}) have been found to fail 
removing all demographic information; that is, a sufficiently strong adversary 
can 
still predict demographic information from the supposedly fair representation \citep[e.g., ][]{balunovic2022}.
} 
As a remedy, we relax Problem~\eqref{eq:fair_PCA} by %
expanding 
the set~$\mathcal{U}$ in two ways: first, rather than preventing arbitrary functions~$h: \R^k\rightarrow \R$ from recovering $z_i$, we restrict our goal to linear functions of the form $h(\xb)= \wb^\transpose\xb+b$ 
(we provide a non-linear kernelized version of fair PCA in Section~\ref{subsec:kernelized_version} and another variant 
that can deal, 
to some extent, 
with non-linear~$h$ 
in Section~\ref{subsec:covariance_extension}); second, rather than requiring $h(\Ub^\transpose\xb_i)$ and $z_i$ to be independent, we only require 
the two variables 
to be uncorrelated, that is their covariance to be zero. This leaves us with the following problem:
\begin{align}\label{eq:fair_PCA_relaxed}
\begin{split}
& ~~~~~~~~~~\argmax_{\Ub\in\mathcal{U}'} \trace(\Ub^\transpose\Xb\Xb^\transpose\Ub),\quad\text{where}\\
 &\mathcal{U}'=\left\{\Ub\in\R^{d\times k}:\,\Ub^\transpose\Ub=\Idk~\text{and $\forall \wb\in\R^k,b\in\R$,}\right.\\
 & ~~~~~~~~~~~\text{ $\wb^\transpose\Ub^\transpose\xb_i+b$ and $z_i$ are uncorrelated, that is} \\
 & ~~~~~~~~~~~\left.\Cov(\wb^\transpose\Ub^\transpose\xb_i+b,z_i)=0\right\}.
\end{split}
\end{align} 

We show 
that Problem~\eqref{eq:fair_PCA_relaxed} is well defined. 
Conveniently, it can be solved analytically similarly to standard PCA: with $\bar{z}=\frac{1}{n} \sum_{i=1}^n z_i$ and $\zb=(z_1-\bar{z},\ldots,z_n-\bar{z})^\transpose \in\R^{n}$, 
\begin{align*}
\forall \wb\in\R^k,b\in\R: \text{$\wb^\transpose\Ub^\transpose\xb_i+b$ and $z_i$ are uncorr.}~\Leftrightarrow\\
\forall \wb\in\R^k,b\in\R: \sum_{i=1}^n (z_i-\bar{z})\cdot(\wb^\transpose\Ub^\transpose\xb_i+b)=0~\Leftrightarrow\\
\forall \wb: \wb^\transpose \Ub^\transpose\Xb\zb=0~\Leftrightarrow~
\Ub^\transpose\Xb\zb=\nullb~\Leftrightarrow~ \zb^\transpose\Xb^\transpose\Ub=\nullb.
\end{align*}
We assume that $\zb^\transpose\Xb^\transpose\neq \nullb$ (otherwise Problem~\eqref{eq:fair_PCA_relaxed} is the same as the standard PCA Problem~\eqref{eq:standard_PCA}). 
Let $\Rb\in\R^{d\times (d-1)}$ comprise as columns an orthonormal basis of 
the nullspace of $\zb^\transpose\Xb^\transpose$. 
Every $\Ub\in \mathcal{U}'$ can then be written as 
$\Ub=\Rb\Lamb$ for $\Lamb\in\R^{(d-1)\times k}$ with $\Lamb^\transpose\Lamb=\Idk$, and the objective of \eqref{eq:fair_PCA_relaxed} becomes $\trace(\Lamb^\transpose\Rb^\transpose\Xb\Xb^\transpose\Rb\Lamb)$, where we now maximize w.r.t.~$\Lamb$. The latter problem has exactly the form of \eqref{eq:standard_PCA} with $\Xb\Xb^\transpose$ replaced by $\Rb^\transpose\Xb\Xb^\transpose\Rb$, and we know that a solution is given by orthonormal eigenvectors, corresponding to the largest $k$ eigenvalues, of $\Rb^\transpose\Xb\Xb^\transpose\Rb$. Once we have 
$\Lamb$, we obtain a solution~$\Ub$ of \eqref{eq:fair_PCA_relaxed} by computing~$\Ub=\Rb\Lamb$. 
We summarize the procedure 
as our proposed formulation of fair PCA 
in Algorithm~\ref{alg:fair_PCA}.
Its 
running time 
is $\mathcal{O}(nd^2+d^3)$, which is the same as the running time of standard PCA.

\begin{algorithm}[t!]
   \caption{Fair PCA (for two demographic groups)
   }\label{alg:fair_PCA}
\begin{algorithmic}
   \STATE {\bfseries Input:} data matrix $\Xb\in\R^{d\times n}$; demographic attr.~$z_i\in\{0,1\}$, $i\in[n]$;
   target dimension
   $k\in[d-1]$

\vspace{1mm}
   \STATE {\bfseries Output:} a solution $\Ub$ to Problem~\eqref{eq:fair_PCA_relaxed}

   \begin{itemize}[leftmargin=*]
   \setlength{\itemsep}{-2pt}
   \item set $\zb=(z_1-\bar{z},\ldots,z_n-\bar{z})^\transpose$ with $\bar{z}=\frac{1}{n} \sum_{i=1}^n z_i$ 
\item compute an orthonormal basis of the nullspace of $\zb^\transpose\Xb^\transpose$ and build  matrix~$\Rb$ comprising the basis vectors as columns
  \item compute orthonormal eigenvectors, corresponding to the largest $k$ eigenvalues, of $\Rb^\transpose\Xb\Xb^\transpose\Rb$ and build matrix~$\Lamb$ comprising the eigenvectors as columns
  \item return $\Ub=\Rb\Lamb$
   \end{itemize}
\end{algorithmic}
\end{algorithm}

The derivation above yields a natural interpretation of the relaxed Problem~\eqref{eq:fair_PCA_relaxed}. It is easy to see that the condition~$\Ub^\transpose\Xb\zb=\nullb$ is equivalent to
\begin{align*}%
    \frac{1}{|\{i:z_i=0\}|}\sum_{i: z_i=0} \Ub^\transpose\xb_i = \frac{1}{|\{i:z_i=1\}|}\sum_{i: z_i=1} \Ub^\transpose\xb_i.
\end{align*}
Hence, fair PCA finds a best-approximating projection 
such 
that the projected data's group-conditional means coincide.  
This interpretation implies that for a special data-generating model the relaxed Problem~\eqref{eq:fair_PCA_relaxed} 
solved by fair PCA 
coincides with Problem~\eqref{eq:fair_PCA}, which we originally wanted~to~solve.

\begin{proposition}\label{prop:gaussian_data}
If 
datapoints are sampled from a mixture of two Gaussians with identical covariance \mbox{matrices} and the two Gaussians 
corresponding to 
demographic groups,  
then, in the limit of $n\rightarrow \infty$, 
\eqref{eq:fair_PCA} and~\eqref{eq:fair_PCA_relaxed}~are~equivalent.
\end{proposition}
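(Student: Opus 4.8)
The plan is to exploit the fact that Problems~\eqref{eq:fair_PCA} and~\eqref{eq:fair_PCA_relaxed} share the identical objective $\trace(\Ub^\transpose\Xb\Xb^\transpose\Ub)$, so that proving equivalence reduces to showing that their feasible sets $\mathcal{U}$ and $\mathcal{U}'$ coincide in the limit $n\to\infty$. One inclusion is immediate and holds for every finite $n$: statistical independence implies vanishing covariance, hence $\mathcal{U}\subseteq\mathcal{U}'$ always. The real work lies in the reverse inclusion, and this is where the Gaussian structure with shared covariance must be used.

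First I would compute the limiting form of the constraint defining $\mathcal{U}'$. Writing $\mu_0,\mu_1$ for the two Gaussian means and $\bar{\xb}_0,\bar{\xb}_1$ for the empirical group-conditional means, the derivation in the main text shows that $\Ub^\transpose\Xb\zb=\nullb$ is equivalent to $\Ub^\transpose\bar{\xb}_0=\Ub^\transpose\bar{\xb}_1$. By the law of large numbers $\bar{\xb}_g\to\mu_g$ almost surely, so the empirical constraint of~\eqref{eq:fair_PCA_relaxed} converges to $\Ub^\transpose(\mu_1-\mu_0)=\nullb$; that is, $\mathcal{U}'$ tends to $\{\Ub:\Ub^\transpose\Ub=\Idk,\ \Ub^\transpose\mu_0=\Ub^\transpose\mu_1\}$. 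Note that the independence requirement in~\eqref{eq:fair_PCA} is a distributional statement about the underlying $(X,Z)$, so it is already at the population level and the limit is needed only to bring the relaxed problem's empirical constraint to the same footing.

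Next I would compute the limiting form of the constraint defining $\mathcal{U}$, which I expect to be the main obstacle to state cleanly. The requirement that $h(\Ub^\transpose\xb_i)$ be independent of $z_i$ for \emph{every} $h:\R^k\to\R$ is equivalent (taking $h$ to be indicators in one direction, and using that functions of independent variables are independent in the other) to requiring that $\Ub^\transpose X$ itself be independent of $Z$, i.e.\ that the conditional law of $\Ub^\transpose X$ given $Z=0$ equal the conditional law given $Z=1$. Under the mixture model, conditionally on $Z=g$ we have $X\sim\mathcal{N}(\mu_g,\Sigma)$, hence $\Ub^\transpose X\sim\mathcal{N}(\Ub^\transpose\mu_g,\Ub^\transpose\Sigma\Ub)$. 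Here is the crux: because the two Gaussians share the covariance $\Sigma$, the projected covariances $\Ub^\transpose\Sigma\Ub$ agree for both groups regardless of $\Ub$, so the two conditional laws are Gaussians that can differ only in their means. Two Gaussians with equal covariance coincide if and only if their means coincide, so the independence constraint also reduces to $\Ub^\transpose\mu_0=\Ub^\transpose\mu_1$.

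Finally I would conclude that in the limit both $\mathcal{U}$ and $\mathcal{U}'$ equal the common set $\{\Ub:\Ub^\transpose\Ub=\Idk,\ \Ub^\transpose(\mu_1-\mu_0)=\nullb\}$; since the two problems optimize the same objective over the same feasible set, they are equivalent. I would emphasize that the shared-covariance hypothesis is exactly what powers the reverse inclusion: were the group covariances different, matching only the projected means would not force distributional equality, and the mean-only (second-order) relaxation of~\eqref{eq:fair_PCA_relaxed} would be strictly weaker than the full independence requirement of~\eqref{eq:fair_PCA}.
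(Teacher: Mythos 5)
Your proof is correct and follows essentially the same route as the paper's: both rest on the observation that under a shared covariance $\mathbf{\Sigma}$ the projected group-conditional laws $\mathcal{N}(\Ub^\transpose\mu_l,\Ub^\transpose\mathbf{\Sigma}\Ub)$ can differ only in their means, so the mean-matching constraint of~\eqref{eq:fair_PCA_relaxed} forces full distributional equality and hence independence. Your write-up is simply more explicit than the paper's (the law-of-large-numbers step, the trivial inclusion $\mathcal{U}\subseteq\mathcal{U}'$, and the equivalence of all-$h$ independence with independence of $\Ub^\transpose X$ and $Z$ are left implicit there), but the key idea is identical.
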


\begin{proof}
Let $\mathbf{\mu}_0,\mathbf{\mu}_1\in\R^d$ be the means of the two Gaussians and $\mathbf{\Sigma}\in\R^{d\times d}$ their shared covariance matrix such that datapoints are distributed as 
$\xb|z=l \sim \mathcal{N}(\mu_l,\mathbf{\Sigma})$, $l\in\{0,1\}$. 
After projecting datapoints onto $\R^k$ using $\Ub$ 
we have 
$\Ub^\transpose\xb|z=l \sim \mathcal{N}(\Ub^\transpose\mu_l,\Ub^\transpose\mathbf{\Sigma} \Ub)$. 
For $\Ub\in\mathcal{U'}$ as defined in \eqref{eq:fair_PCA_relaxed} the interpretation 
from 
above shows that  $\Ub^\transpose\mu_0=\Ub^\transpose\mu_1$, and hence $h(\Ub^\transpose\xb)$ and $z$ are independent for 
any 
$h$. 
\end{proof}

\section{EXTENSIONS \& VARIANTS}\label{sec:extensions}

We discuss 
several 
extensions and variants of our formulation
 of fair PCA and our proposed algorithm 
 from 
 Section~\ref{sec:methods}.

\subsection{Trading Off Accuracy vs. Fairness}\label{subsec:tradeoff}

Requiring an ML model to be fair 
often 
leads to a loss in predictive performance. 
For example, in the case of 
DP 
as defined in \eqref{eq:fairness_definitions}
it is clear that any fair predictor cannot have perfect accuracy if 
$\Psymb(Y=1|Z=z_1)\neq \Psymb(Y=1|Z=z_2)$. Hence, it is desirable for bias mitigation methods to have a knob that one can turn 
to trade off accuracy vs. fairness. 
We introduce such a knob for fair PCA via 
the following strategy: if $\Ufair\in \R^{d\times k}$ denotes the projection matrix of fair PCA and $\Ust\in \R^{d\times k}$ the one of standard PCA, we concatenate the fair representation $\Ub^\transpose_{\text{fair}}\xb$ of a datapoint~$\xb$ with a rescaled version of the standard representation $\Ub^\transpose_{\text{st}}\xb$, that is we consider $(\Ub^\transpose_{\text{fair}}\xb;\lambda \cdot \Ub^\transpose_{\text{st}}\xb)\in \R^{2k}$ for some $\lambda \in [0,1]$. If $\lambda=0$, this representation contains only the information of 
fair PCA; 
if $\lambda=1$, it contains all the information of 
standard PCA (and hence, potentially, all the demographic information in the data).
For $0<\lambda\ll 1$, technically the new representation also contains all the information of 
standard PCA, 
but any ML model trained 
with weight regularization will have troubles to exploit that information and  will be approximately fair.\footnote{
To 
obtain 
some 
intuition, consider the following simple scenario: let $k=1$, so that  $\Ub^\transpose_{\text{fair}}\xb=:x_{\text{f}}\in\R$  and $\Ub^\transpose_{\text{st}}\xb=:x_{\text{s}}\in\R$, and assume that we train a linear model~$h:\R^2\rightarrow \R$, $h(u,v)=w_1\cdot u+w_2 \cdot v$, parameterized by $w_1$ and $w_2$, on the representation~$(x_{\text{f}};\lambda\cdot x_{\text{s}})$. 
With 
weight regularization,  
$w_1$ and $w_2$ are effectively bounded, and if $\lambda$ is small, 
$h(x_{\text{f}},\lambda\cdot x_{\text{s}})=w_1\cdot x_{\text{f}}+w_2\cdot\lambda \cdot x_{\text{s}}$
must mainly depend 
on the fair PCA representation~$x_{\text{f}}$ rather than the standard PCA representation~$x_{\text{s}}$.
} 
There 
is a 
risk of redundant information in the concatenated representation~$(\Ub^\transpose_{\text{fair}}\xb;\lambda \cdot \Ub^\transpose_{\text{st}}\xb)$, 
which could confuse the learning algorithm applied on top according to some papers on feature selection \citep[e.g.,][]{Koller1996,Yu2004}.
However,  
in our experiments in Section~\ref{subsec:experiments_bias_mitigation} this does not seem to be an issue
and 
we see that our proposed strategy   provides an effective way 
to trade~off~accuracy~vs.~\mbox{fairness}.

\subsection{Adaptation to Equal Opportunity}

Our 
formulation 
of fair PCA in Section~\ref{sec:methods} aimed at 
making the data representation independent of the demographic attribute, thus aiming for 
demographic parity 
fairness of 
arbitrary 
downstream classifiers. If we instead aim for 
equality of opportunity 
fairness, of downstream classifiers trained to solve a 
specific 
task (coming with ground-truth labels~$y_i$), 
we 
apply 
the procedure 
only to datapoints~$\xb_i$ with 
$y_i=1$. 

\subsection{Kernelizing Fair PCA}\label{subsec:kernelized_version} 

Fair PCA solves 
\begin{align}\label{eq:fair_PCA_simple}
\begin{split}
    \argmax_{\Ub\in\R^{d\times k}:\, \Ub^\transpose\Ub=\Idk}\trace(\Ub^\transpose\Xb\Xb^\transpose\Ub)\\
    \text{subject to}\quad \zb^\transpose\Xb^\transpose\Ub=\nullb.
    \end{split}
\end{align}
To kernelize fair PCA, we 
rewrite~\eqref{eq:fair_PCA_simple} fully in terms of the kernel matrix~$\Kb=\Xb^\transpose\Xb\in\R^{n\times n}$ and avoid 
using 
the data matrix~$\Xb$. 
By the representer theorem \citep{bernhard_representer_theorem}, the optimal $\Ub$ can be written as $\Ub=\Xb \Bb$ for some $\Bb\in\R^{n\times k}$. The objective $\trace(\Ub^\transpose\Xb\Xb^\transpose\Ub)$ then becomes $\trace(\Bb^\transpose\Xb^\transpose\Xb\Xb^\transpose\Xb\Bb)$, the constraint $\Ub^\transpose\Ub=\Idk$  becomes $\Bb^\transpose\Xb^\transpose\Xb\Bb=\Idk$, and the constraint $\zb^\transpose\Xb^\transpose\Ub=\nullb$ becomes $\zb^\transpose\Xb^\transpose\Xb\Bb=\nullb$. Hence, with $\Kb=\Xb^\transpose\Xb$, \eqref{eq:fair_PCA_simple}
is equivalent to
\begin{align}\label{fair_PCA_eq_K}
\begin{split}
    \argmax_{\Bb\in\R^{n\times k}:\, \Bb^\transpose\Kb\Bb=\Idk}\trace(\Bb^\transpose\Kb\Kb\Bb)\\
    \text{subject to}\quad \zb^\transpose\Kb\Bb=\nullb.
\end{split}
\end{align}
Let $\Rb\in\R^{n\times (n-1)}$ comprise as columns an orthonormal basis of 
the nullspace of $\zb^\transpose\Kb$. With $\Bb=\Rb\Lamb$ for $\Lamb\in \R^{(n-1)\times k}$, \eqref{fair_PCA_eq_K} is equivalent to
\begin{align}\label{fair_PCA_eq_K2}
    \argmax_{\Lamb:\,
    \Lamb^\transpose\Rb^\transpose\Kb\Rb\Lamb=\Idk}\trace(\Lamb^\transpose\Rb^\transpose\Kb\Kb\Rb\Lamb).
\end{align}
A solution $\Lamb$ is obtained by filling the columns of $\Lamb$ with the generalized eigenvectors, corresponding to the largest $k$ eigenvalues, that solve $\Rb^\transpose\Kb\Kb\Rb\Lamb=\Rb^\transpose\Kb\Rb\Lamb \Wb$, where $\Wb$ is a diagonal matrix containing the eigenvalues \citep{ghojogh2019}. 
When projecting datapoints onto the  linear subspace, we can write $\Ub^\transpose\Xb=\Bb^\transpose\Xb^\transpose\Xb=\Lamb^\transpose\Rb^\transpose\Kb$, and hence we have kernelized fair PCA.
We provide the pseudo code of kernelized fair PCA 
in Appendix~\ref{app:multiple_groups}. Its running time is $\mathcal{O}(n^3)$ when 
being 
given 
$\Kb$
as input, which is the same as the running time of standard kernel PCA.

\subsection{Multiple Groups}\label{subsec:multiple_groups}

We derive fair PCA for multiple demographic groups by 
means of a one-vs.-all approach:
assume that there are $m$ disjoint groups. For every datapoint~$\xb_i$ we consider $m$ many 
one-hot 
demographic attributes~$z_i^{(1)},\ldots,z_i^{(m)}$ with $z_i^{(l)}=1$ if $\xb_i$ belongs to group~$l$ and  $z_i^{(l)}=0$ otherwise. We now require that for all linear functions $h$, $h(\Ub^\transpose\xb_i)$ and $z_i^{(l)}$ are uncorrelated for all $l\in[m]$. This is equivalent to requiring that $\Zb^\transpose\Xb^\transpose\Ub=\nullb$, where $\Zb\in\R^{n\times m}$ and the $l$-th column of $\Zb$ equals $(z_1^{(l)}-\bar{z}^{(l)},\ldots,z_n^{(l)}-\bar{z}^{(l)})^\transpose$ with $\bar{z}^{(l)}=\frac{1}{n} \sum_{i=1}^n z_i^{(l)}$. The resulting optimization problem can be solved analogously to fair PCA for two groups as long as $k\leq d-m+1$, and for $m=2$ the formulation presented here is equivalent to the one of Section~\ref{sec:methods}. 
Also the interpretation provided there 
holds in an analogous way 
for multiple groups: 
fair PCA for multiple groups finds a best-approximating projection 
such 
that the projected data's group-conditional means coincide for all groups. 
We provide details and the pseudo code of fair PCA 
for 
multiple 
groups, 
also 
in 
its 
kernelized version, in Appendix~\ref{app:multiple_groups}.

\subsection{Multiple Demographic Attributes}\label{subsec:multiple_dem_attributes}

We can also adapt fair PCA 
to simultaneously obfuscate demographic information for multiple demographic attributes (e.g., gender \emph{and} race), each of them potentially defining multiple demographic groups: 
assume that there are 
$p$ many attributes, where the $r$-th attribute defines $m_r$ demographic groups. 
For $r\in[p]$, 
let $\Zb_r\in\R^{n\times m_r}$ be the matrix~$\Zb$ from Section~\ref{subsec:multiple_groups} for the $r$-th attribute. By stacking the matrices~$\Zb_r$ to form one matrix $\Zb_{\text{comb}}\in\R^{n\times (\sum_r m_r)}$ and replacing the matrix~$\Zb$ from Section~\ref{subsec:multiple_groups} or Algorithm~\ref{alg:fair_PCA_multi_groups} with $\Zb_{\text{comb}}$, we obtain fair PCA for multiple demographic attributes. The resulting %
algorithm is guaranteed to successfully terminate if 
$k\leq d - \sum_{r=1}^p m_r +p$.

\subsection{Higher-Order Variant: Equalizing Group-Conditional 
Covariance Matrices
}\label{subsec:covariance_extension}

Fair PCA finds a best-approximating projection that equalizes the group-conditional means.  
It is natural to ask whether one can additionally equalize group-conditional covariances in order to further exacerbate discriminability of the projected group-conditional distributions. For 
one demographic attribute with  
two 
demographic 
groups, this additional constraint would result in the following 
problem:
\begin{align}\label{eq:fair_PCA_with_covariance_constraint}
\begin{split}
    \argmax_{\Ub\in\R^{d\times k}:\, \Ub^\transpose\Ub=\Idk}\trace(\Ub^\transpose\Xb\Xb^\transpose\Ub)\\
    \text{s. t.}\quad \zb^\transpose\Xb^\transpose\Ub=\nullb~\wedge~\Ub^\transpose(\mathbf{\Sigma}_0-\mathbf{\Sigma}_1)\Ub=\nullb,
    \end{split}
\end{align}
where $\zb$ is the vector encoding group-membership as in Section~\ref{sec:methods} and $\mathbf{\Sigma}_0$ and $\mathbf{\Sigma}_1$ are the two group-conditional covariance matrices. Unfortunately, depending on $\mathbf{\Sigma}_0$ and $\mathbf{\Sigma}_1$, this problem may not have a solution (e.g., when the feature variances for one group are much bigger than for the other group and hence $\mathbf{\Sigma}_0-\mathbf{\Sigma}_1$ is positive or negative definite). However, 
for small $k$ 
 (or large $d$)
we can apply a simple strategy to solve \eqref{eq:fair_PCA_with_covariance_constraint} approximately.
After writing $\Ub=\Rb\Lamb$ as in Section~\ref{sec:methods}, the problem becomes
\begin{align}\label{eq:fair_PCA_with_covariance_constraint_substituted}
\begin{split}
    \argmax_{\Lamb\in\R^{(d-1)\times k}:\, \Lamb^\transpose\Lamb=\Idk}\trace(\Lamb^\transpose\Rb^\transpose\Xb\Xb^\transpose\Rb\Lamb)\\
    \text{subject to}\quad \Lamb^\transpose\Rb^\transpose(\mathbf{\Sigma}_0-\mathbf{\Sigma}_1)\Rb\Lamb=\nullb.
    \end{split}
\end{align}
For some parameter $l\in  \{k,\ldots,d-1\}$, we can compute the $l$~smallest (in magnitude) eigenvalues of $\Rb^\transpose(\mathbf{\Sigma}_0-\mathbf{\Sigma}_1)\Rb$ and corresponding orthonormal eigenvectors. Let $\Qb\in\R^{d-1 \times l}$ comprise these eigenvectors as columns. By substituting $\Lamb = \Qb\Vb$ for $\Vb\in \R^{l\times k}$ and solving
\begin{align*}
    \argmax_{\Vb\in\R^{l\times k}:\, \Vb^\transpose\Vb=\Idk}\trace(\Vb^\transpose\Qb^\transpose\Rb^\transpose\Xb\Xb^\transpose\Rb\Qb\Vb),
\end{align*}
which just requires to compute eigenvectors of $\Vb^\transpose\Qb^\transpose\Rb^\transpose\Xb\Xb^\transpose\Rb\Qb\Vb$, we optimize the objective of Problem~\eqref{eq:fair_PCA_with_covariance_constraint_substituted} while approximately satisfying its constraint. The running time of this procedure is 
$\mathcal{O}(nd^2+d^3)$ 
as for standard PCA. 
The smaller 
the parameter~$l$, 
the more we equalize the 
projected data's 
group-conditional covariance matrices. For $l=d-1$, our strategy becomes void and coincides with fair PCA as described in Section~\ref{sec:methods}. 
In our experiments in Section~\ref{sec:experiments} we choose $l=\max\{k,\lfloor 0.5d\rfloor\}$ or $l=\max\{k,\lfloor 0.85d\rfloor\}$ and observe good results. 
In particular, we see that the variant yields fairer non-linear downstream classifiers than fair PCA from Section~\ref{sec:methods}.  
An example  can be seen in Figure~\ref{fig:example_fair_PCA_same_covariance}: here, the data comes from a mixture of two Gaussians in $\R^{10}$ with highly different covariance matrices 
and $k=2$. 
Each Gaussian corresponds to 
one 
demographic group. We can see that fair PCA 
from 
Section~\ref{sec:methods} fails to obfuscate the demographic information since the group-conditional covariance matrices of the projected data are highly different (just as for the original data), while the variant 
of 
this section (with $l=5=\lfloor 0.5d \rfloor$) successfully obfuscates the demographic information.

\begin{figure}
    \centering
    \includegraphics[scale=0.25]{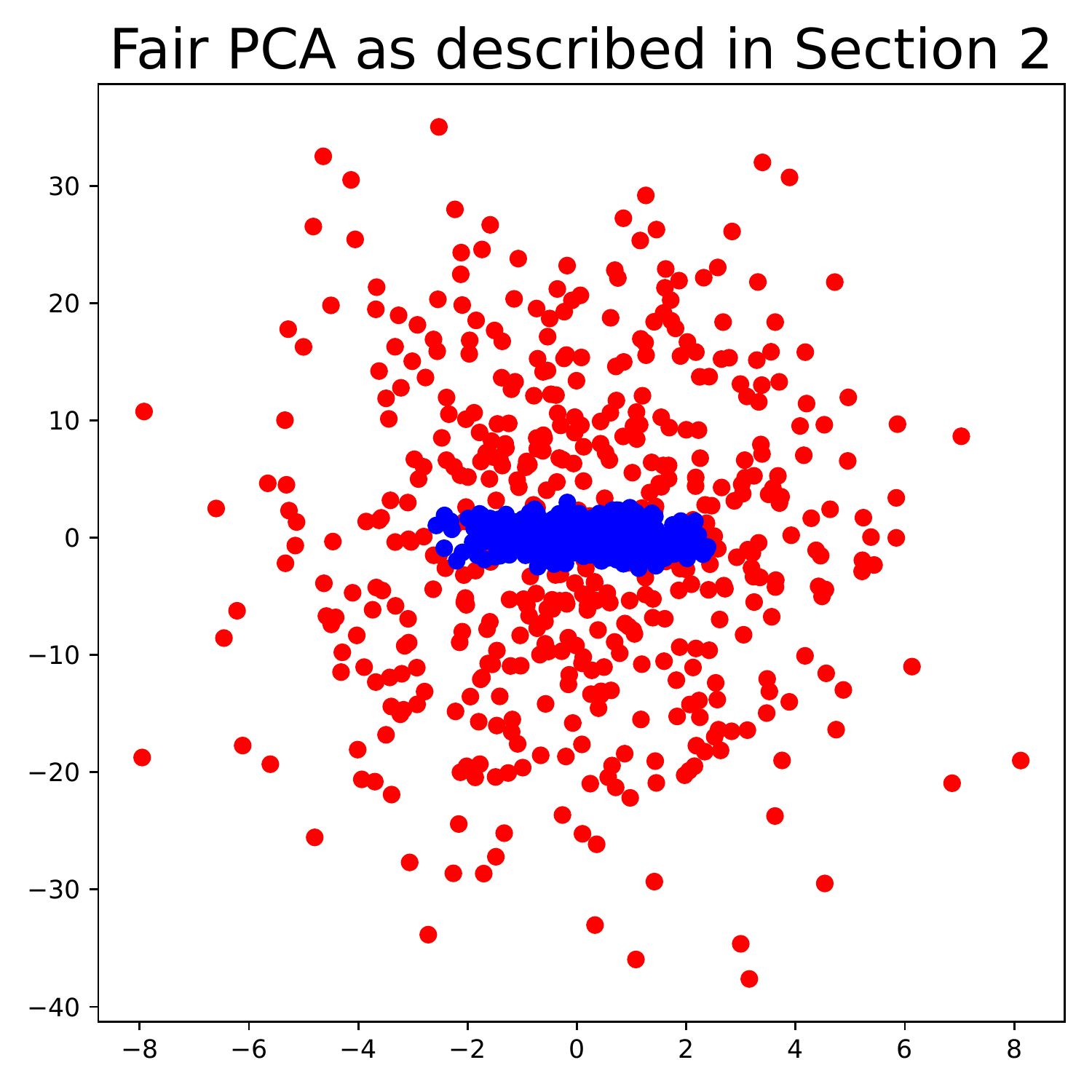}
    \includegraphics[scale=0.25]{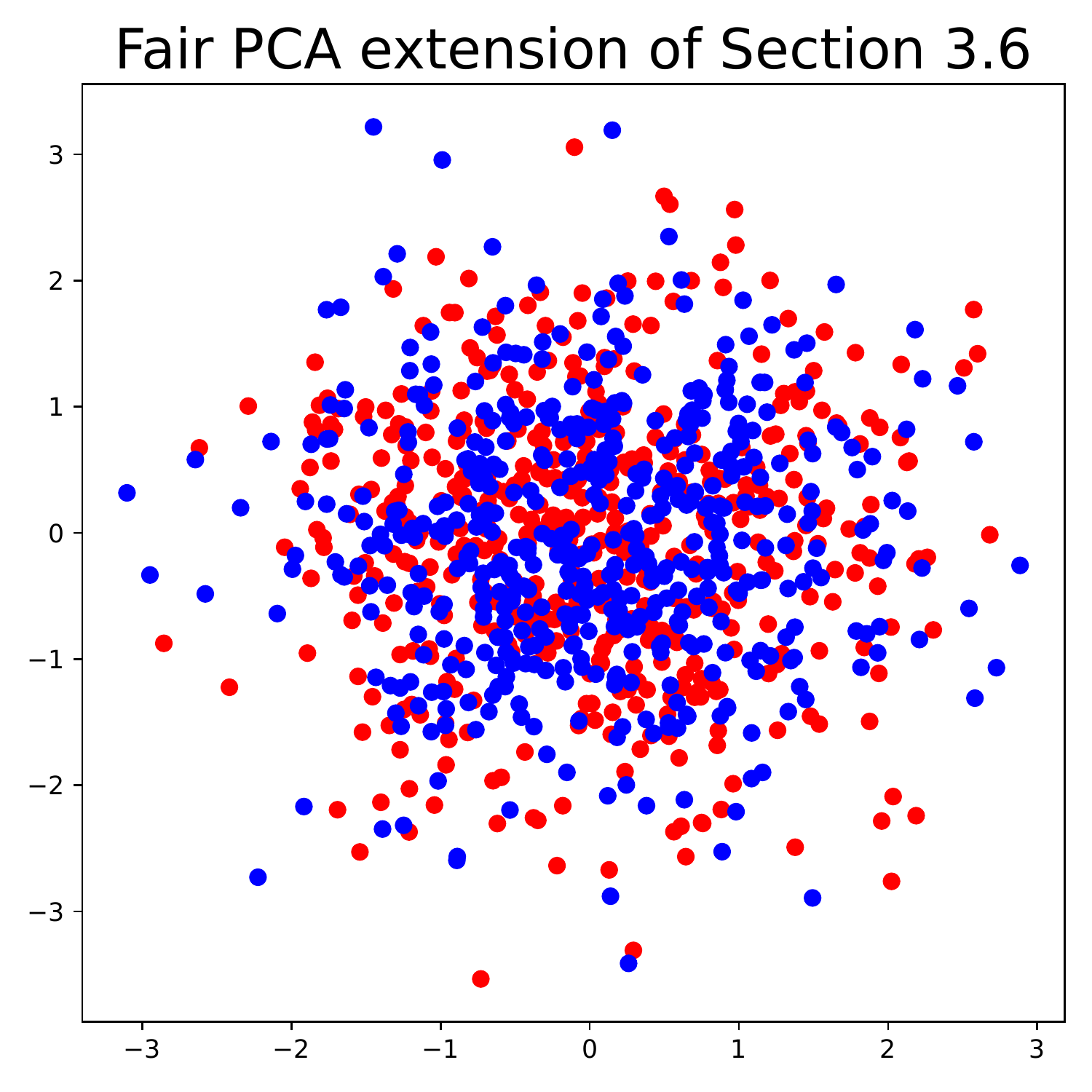}
    \caption{Fair PCA as described in Section~\ref{sec:methods} (left), which equalizes the group-conditional means, in comparison to the higher-order variant of Section~\ref{subsec:covariance_extension} (right), which additionally aims to equalize the group-conditional 
covariance matrices. Only the higher-order variant 
completely 
obfuscates 
the 
demographic information 
(encoded by color: 
red~vs.~blue).}
    \label{fig:example_fair_PCA_same_covariance}
\end{figure}

\section{RELATED WORK}\label{sec:related_work}

\paragraph{Fairness in machine learning (ML)}
Most works 
study the problem of fair classification 
\citep[e.g.,][]{zafar2019},  
but %
fairness has also been studied
for  
unsupervised learning tasks \citep[e.g.,][]{chierichetti2017fair}.  
Two of the most prominent definitions of fairness in classification are demographic parity \citep{kamiran2011}
and equal opportunity \citep{hardt2016equality} as introduced in Section~\ref{sec:introduction}. Methods for fair classification are commonly categorized into pre-processing, in-processing, and post-processing methods, depending on at which stage of the training pipeline they are applied \citep{alessandro2017}. In the following we %
discuss 
the 
works most closely related to our paper, 
all of which can generally be considered as pre-processing~methods.

\paragraph{Fair representation learning}
\citet{zemel2013} initiated the study of fair representation learning, where the goal is to learn an intermediate data representation that obfuscates demographic information while encoding other (non-demographic) information as well as possible. Once such a representation is found, any ML model trained on it 
should not be able to 
discriminate based on demographic information and hence 
be 
demographic parity fair. 
The approach of \citeauthor{zemel2013} 
learns prototypes and a probabilistic mapping of datapoints to these prototypes. Since then, numerous methods for fair representation learning have been proposed \citep[e.g.][]{louizos2016,moyer2018invariant,sarhan2020,balunovic2022,oh2022}, many of them formulating the problem as an adversarial game 
\citep[e.g.][]{edwards2016,Beutel2017DataDA,xie2017controllable,jia2018right,madras2018,raff2018gradientreversal,adel2019,alvi2019a,feng2019,song2019}
and some of them adapting their approach to 
aim for 
downstream classifiers to be equal opportunity fair \citep[e.g.][]{madras2018,song2019}.
In contrast to our proposed approach, none of these techniques allows for an  analytical solution and all of them require numerical optimization, which has often been found hard to perform, in particular for the adversarial approaches (cf. \citealp{feng2019}, Sec.~5, or \citealp{oh2022}, Sec.~2.2). 

\paragraph{Fair PCA 
 for fair representation learning 
 and 
 other 
 methods for linear guarding}
The methods discussed next are all methods for fair representation learning that bear some resemblance to our proposed approach.
Most closely related to our work are the papers by \citet{olfat2019}, \citet{Lee2022}, and \citet{Shao2022}. 

\citet{olfat2019} introduced a notion of fair PCA with the same goal 
that 
we are aiming for in our formulation, that is finding a best-approximating projection such that no linear classifier can predict demographic information from the projected data. They use Pinsker's inequality and an approximation of the group-conditional distributions  by two Gaussians 
to 
obtain 
an upper bound on the best linear classifier's accuracy. The upper bound is minimized when the projected data's group-conditional means and covariance matrices coincide. 
\citeauthor{olfat2019} then formulate a semidefinite program (SDP) to minimize the projection's reconstruction error while satisfying upper bounds on the differences in the projected data's group-conditional means and covariance matrices. This SDP approach has been criticized by \citet[][Section 5.1]{Lee2022} for its high runtime and its relaxation of the rank constraint to a trace constraint, 
``yielding sub-optimal outputs in
presence of (fairness) constraints, even to substantial order in
some cases''. In Section~\ref{sec:experiments} we rerun the experiments of \citeauthor{Lee2022} and also observe that the running time of the method by \citeauthor{olfat2019} is prohibitively high. Furthermore, we consider our derivation of fair PCA to be more intuitive since we do not %
rely on upper bounds or a Gaussian approximation.

Arguing that matching only group-conditional means and covariance matrices of the projected data might be too weak of a constraint,   \citet{Lee2022} define a version of fair PCA by 
requiring 
that the projected data's group-conditional distributions coincide. 
They use the maximum mean discrepancy to measure the deviation of the group-conditional distributions and a penalty method for manifold optimization to solve the resulting optimization problem. While running much faster than the method by \citet{olfat2019}, we 
find the running time of the method by \citeauthor{Lee2022} to be significantly higher than the running time of our proposed algorithms; still, in terms of the quality of the data representation our algorithms can 
compete.
\citeauthor{Lee2022} present their algorithm only for two demographic groups and 
it is unclear 
whether
it can be extended to more than~two~groups.

Concurrently with the writing of our paper, \citet{Shao2022} proposed the spectral attribute removal (SAL) algorithm to remove demographic information 
via 
a 
data projection. 
Their algorithm is based on the observation that a singular value decomposition of the cross-covariance matrix between feature vector~$\xb$ and demographic attribute~$z$ yields projections that maximize the covariance of $\xb$ and $z$. 
Although 
derived 
differently, 
it turns out that the SAL algorithm and our fair PCA method are closely related: SAL projects the data onto the subspace spanned by the columns of the matrix~$\Rb$ in our Algorithm~\ref{alg:fair_PCA}. Hence, for $k=d-1$ the two algorithms project the data onto the same subspace. However, SAL does not allow to choose an embedding dimension smaller than $d-1$.
While \citeauthor{Shao2022} also provide a kernelized variant of their algorithm, they do not provide the interpretation of matching group-conditional means or any extension to also match group-conditional covariances.

There are 
also 
papers that propose methods for linear guarding, that is finding a data representation from which no linear classifier can predict demographic information, that are not related to PCA: \citet{ravfogel2020} iteratively train a linear classifier to predict the demographic attribute and then project the data onto the classifier's nullspace; 
\citet{haghighatkhah2021} describe a procedure to find a projection such that the projected data is not linearly separable w.r.t. 
the 
demographic attribute anymore, but still linearly separable w.r.t. some other binary attributes; 
\citet{ravfogel2022} formulate 
the problem of linear guarding as 
a linear minimax game, where a projection matrix competes against the parameter vector of a 
linear model. In case of linear regression this game can be solved analytically, while for logistic regression and other linear models a relaxation of the game is solved  via alternate minimization and maximization.

\paragraph{Fair PCA for balancing reconstruction error}
A 
very
different notion of fair PCA was introduced by 
\citet{samira2018}, which views PCA as a standalone problem and 
wants 
to balance the excess reconstruction error across 
different 
demographic 
groups. This line of work, which is incomparable to our notion of fair PCA and the notions discussed above, has been 
extended 
by 
\citet{samira2019}, \citet{Pelegrina2021} and \citet{KamaniPCA}.

\paragraph{Information bottleneck method} As pointed out by one of the reviewers, there might be a 
closer relationship between our 
formulation 
of fair PCA and the information bottleneck method \citep{tishby1999}, where the goal is to find a compression of a signal variable $X$ while preserving information about a relevance variable~$Y$. In particular, when $X$ and $Y$ are jointly multivariate Gaussian variables, the optimal projection matrix is obtained by solving an eigenvalue problem involving the cross-covariance matrix~$\mathbf{\Sigma}_{XY}=(\E[(X_i-\E[X_i])(Y_j-\E[Y_j])])_{ij}$ \citep{chechik2005}.

\newcommand{\scaleExpMMDFairPCAa}{0.2}
\begin{figure*}[t]
    \centering
    \includegraphics[scale=\scaleExpMMDFairPCAa]{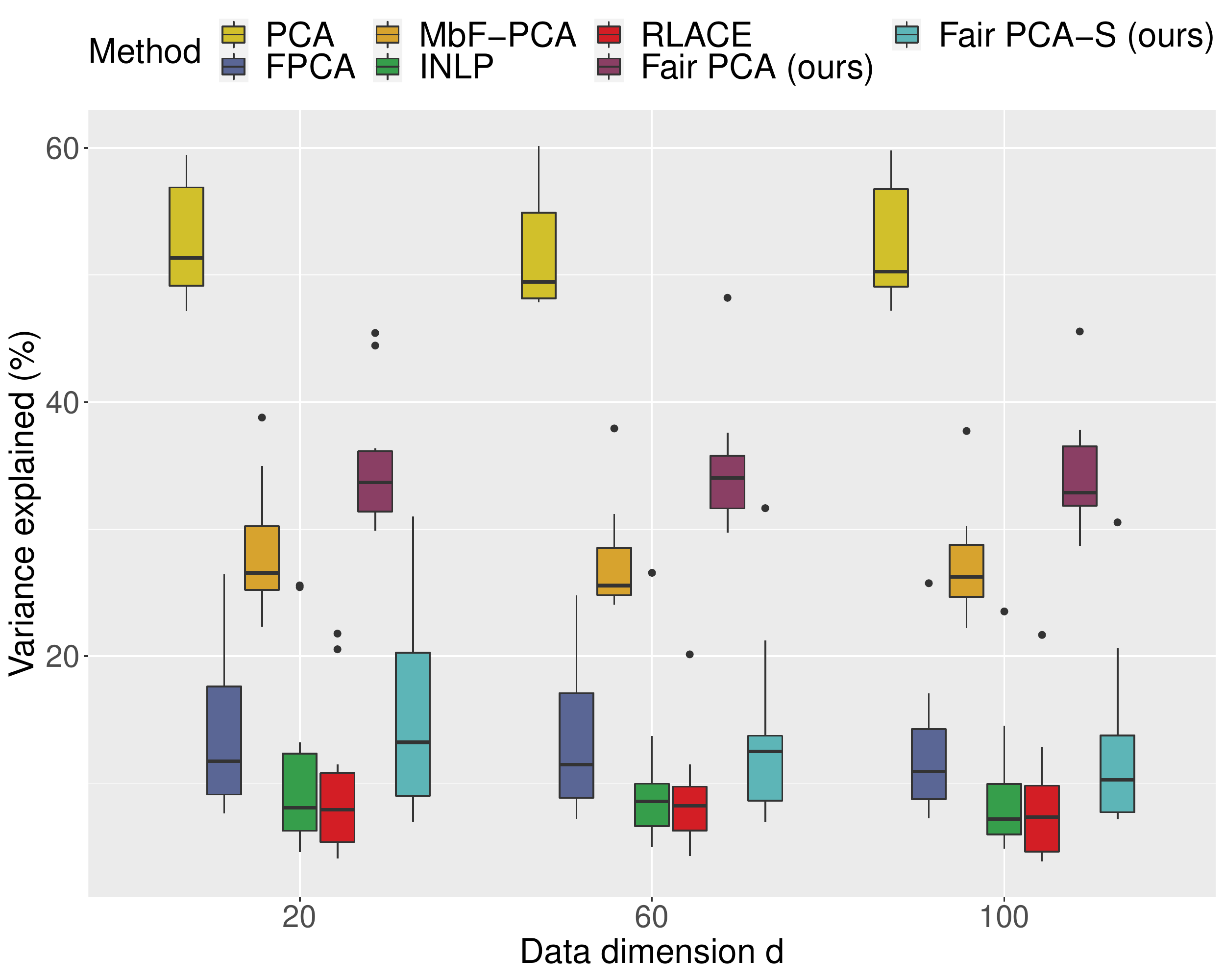}
    \hspace{5mm}
    \includegraphics[scale=\scaleExpMMDFairPCAa]{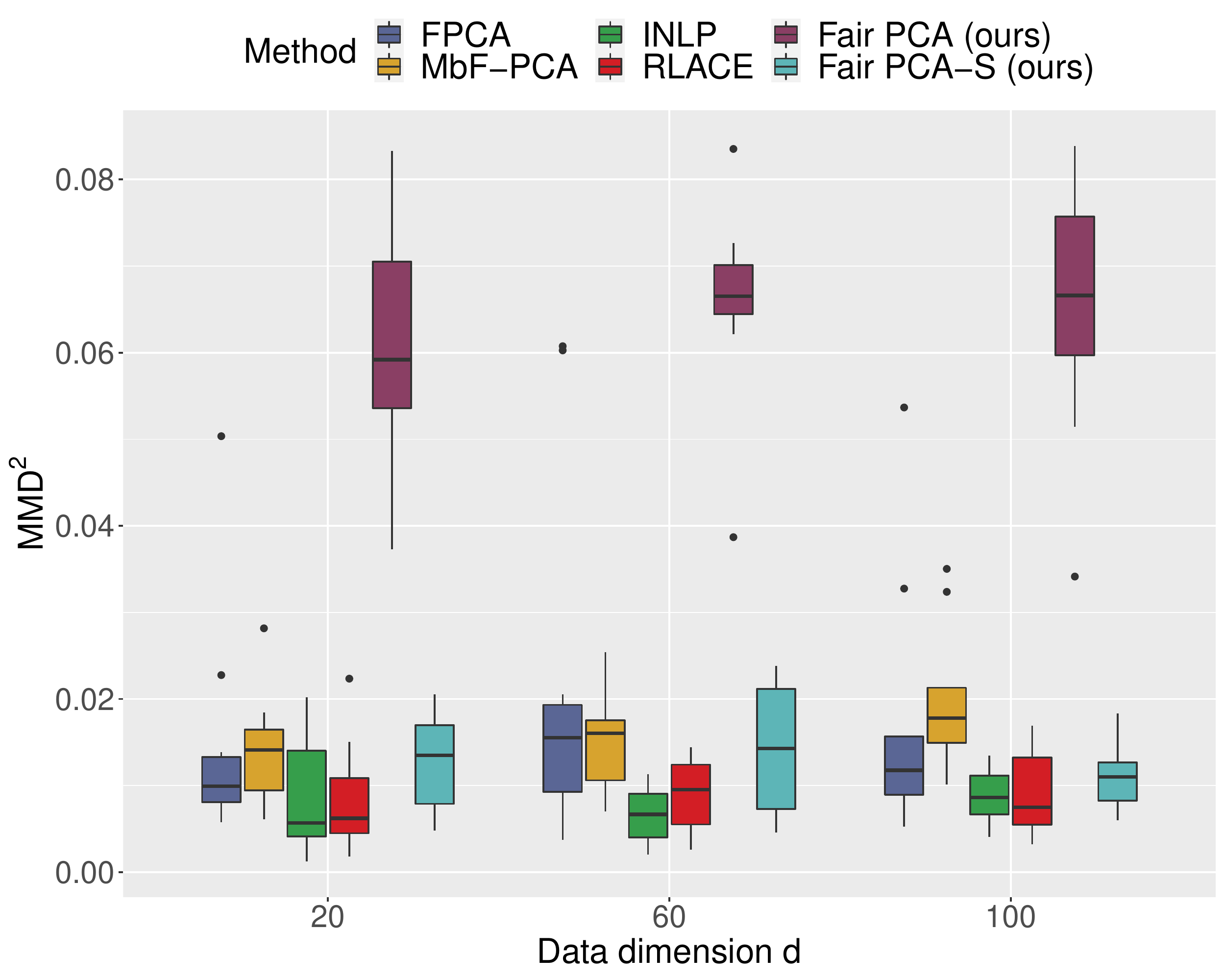}
    \hspace{5mm}
    \includegraphics[scale=\scaleExpMMDFairPCAa]{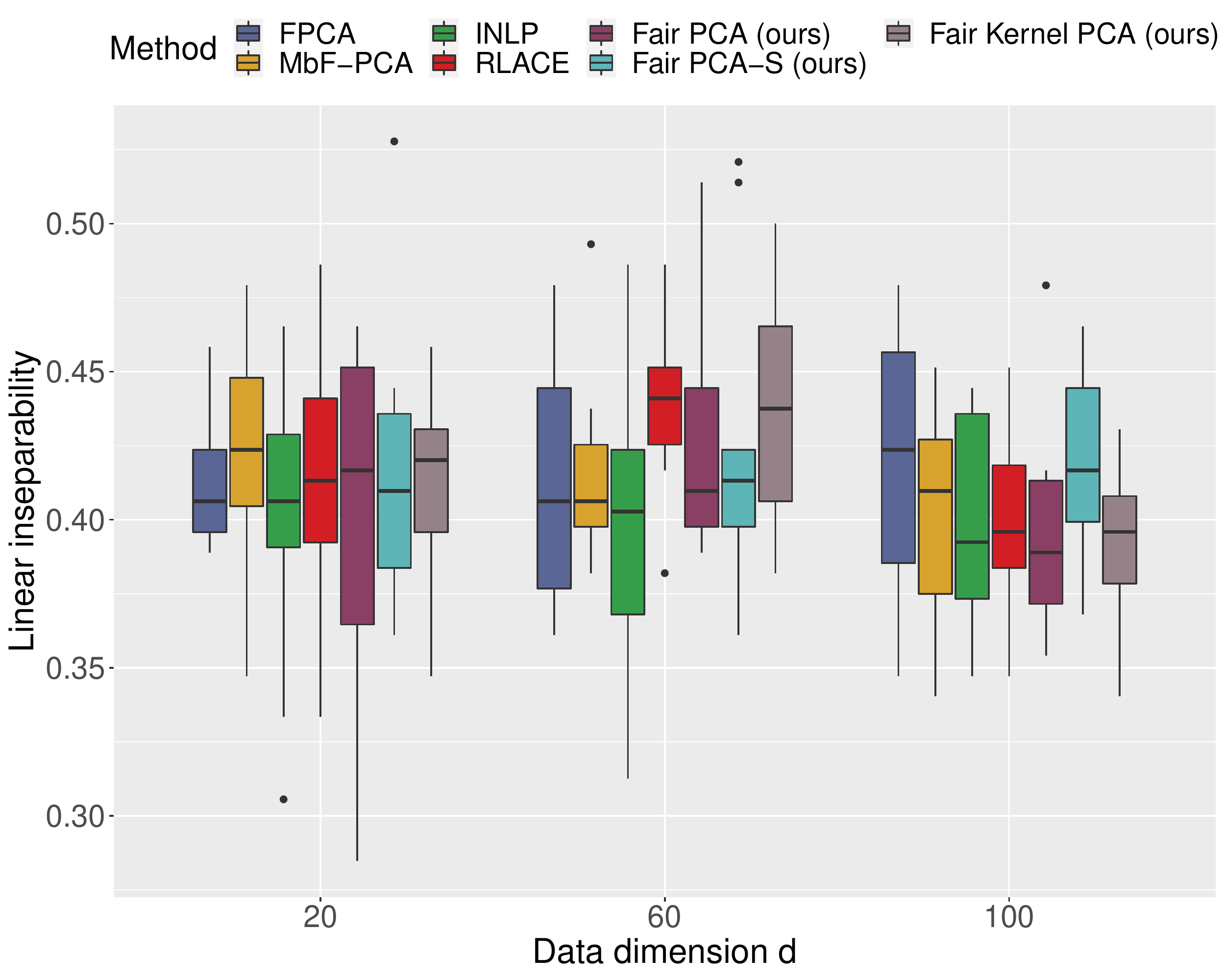}
    \caption{We compare our proposed algorithms to standard PCA and the methods by \citet{olfat2019} (FPCA), \citet{Lee2022} (MbF-PCA),  \citet{ravfogel2020} (INLP), and 
\citet{ravfogel2022} (RLACE), in the experimental setup of \citeauthor{Lee2022}. 
Variance explained (left plot; higher is better) measures how well the representation approximates the 
data; MMD$^2$ (middle plot; lower is better) and linear inseparability (right plot; higher is better) measure the fairness of the representation---see the running text for details. 
The left and middle plots do not show results for fair kernel PCA since 
its 
projection 
space 
lies in a 
reproducing kernel Hilbert space \citep[e.g.,][]{book_learning_with_kernels} 
and the two metrics are 
not comparable between fair kernel PCA and the other methods. 
The middle and right plots do not show results for standard PCA since its values are too high and low, respectively 
($\text{MMD}^2 > 0.5$ and $\text{linear inseparability}< 0.02$ for~all~data~dimensions).
}
    \label{fig:MMD_fair_pca_exp_1}
\end{figure*}

\section{EXPERIMENTS}\label{sec:experiments}

In this section, we present a number of experiments.\footnote{Code available on \url{https://github.com/amazon-science/fair-pca}.}
We first 
rerun and extend the experiments performed by \citet{Lee2022} in order to compare our algorithms to the existing methods for fair PCA by \citet{olfat2019} and \citet{Lee2022} and to the methods for linear guarding by \citet{ravfogel2020} and   
\citet{ravfogel2022}. We also apply our version of fair PCA to the CelebA dataset of facial images to illustrate 
its 
applicability 
to large high-dimensional~datasets. 
We then demonstrate the usefulness of our proposed algorithms 
as 
means of 
bias mitigation and compare their performance to 
the reductions approach of \citet{agarwal_reductions_approach}, which is the  state-of-the-art in-processing %
method implemented in Fairlearn (\url{https://fairlearn.org/}).
Some implementation details and details about 
datasets 
are \mbox{provided~in~Appendix~\ref{app:implementation_details}~and~\ref{app:details_about_datasets}.}

\setcounter{footnote}{1}
\stepcounter{footnote}

\newcommand{\scaleExpMMDFairPCAb}{0.19}
\begin{figure}[t]
    \centering
    \includegraphics[scale=\scaleExpMMDFairPCAb]{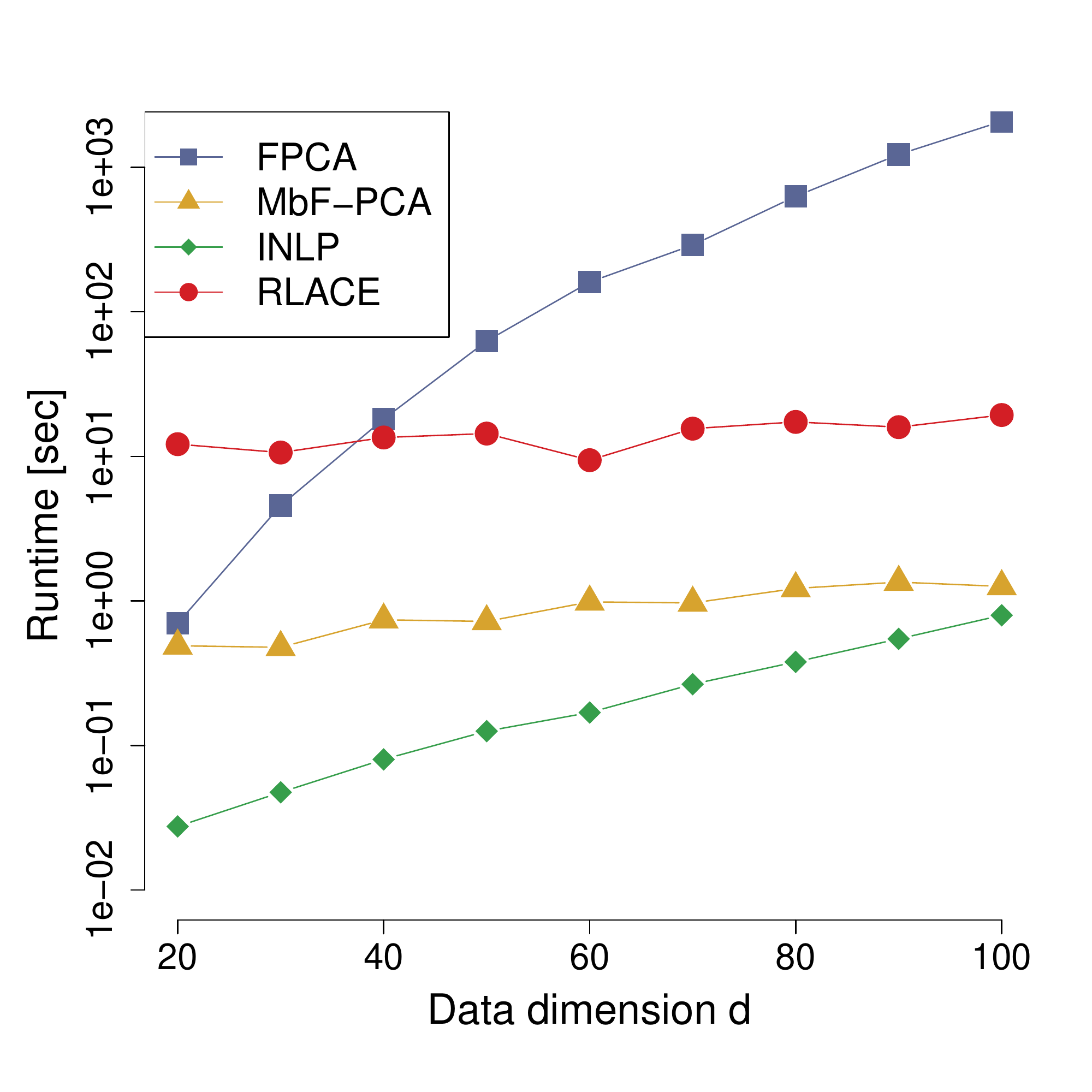}
    \hspace{2mm}
    \includegraphics[scale=\scaleExpMMDFairPCAb]{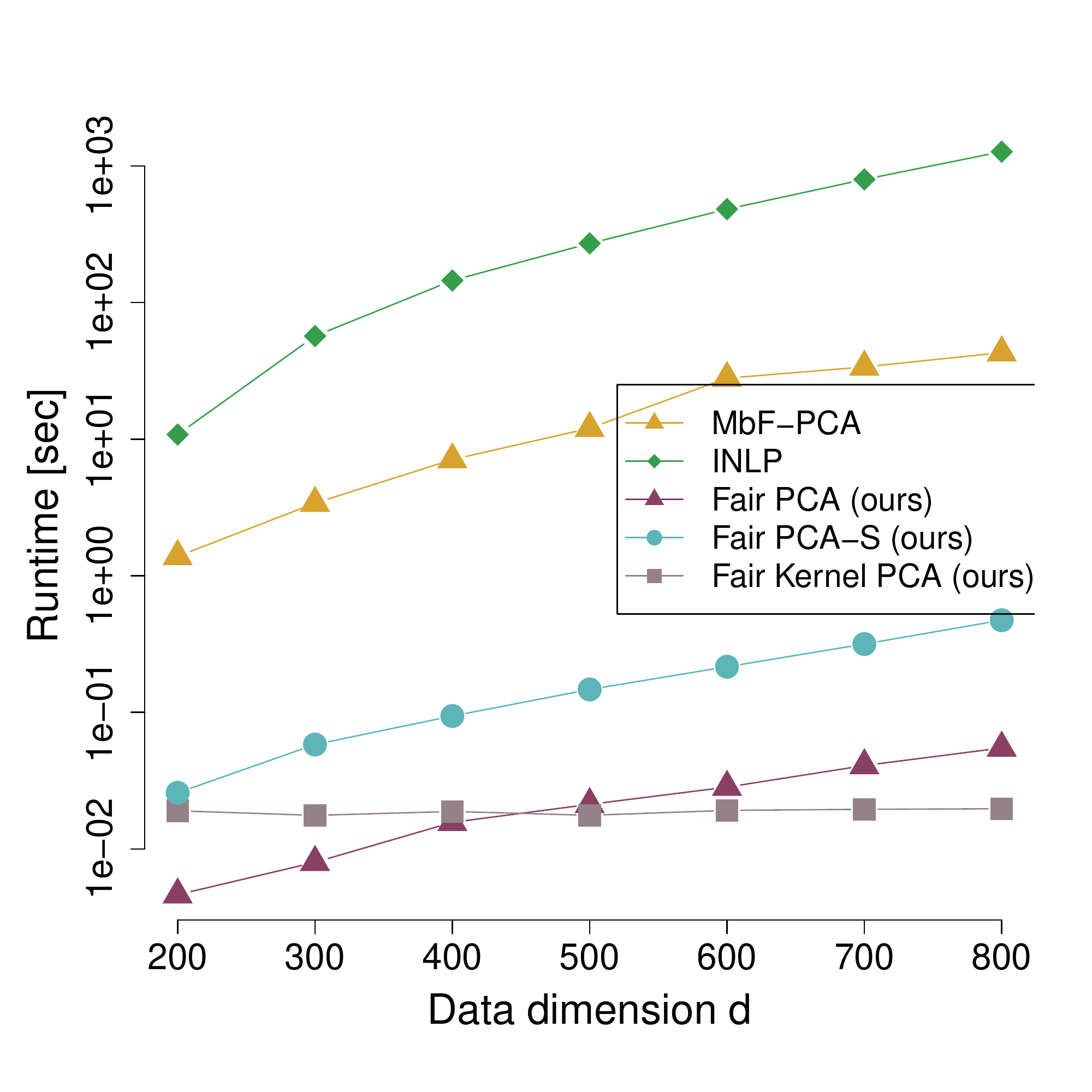}
\caption{The running time of the various methods as a function of the data dimension~$d$.
The 
target 
dimension~$k$ is 5 independent of $d$. Note the logarithmic y-axes and that 
the 
x-axes 
are different for the two plots.\protect\footnotemark}
    \label{fig:MMD_fair_pca_exp_2}
\end{figure}

\begin{table*}[t]
    \centering
\caption{Comparison of our proposed algorithms with standard PCA and the fair methods FPCA, MbF-PCA, INLP, and RLACE on the Adult Income dataset in the setup of \citet{Lee2022}. The top half shows the results for the target dimension~$k=2$, the lower half for $k=10$. Within each block of methods (standard PCA / fair competitors / our fair methods) results with the best mean values are shown in bold. For fair kernel PCA,
\%Var and MMD$^2$ are not~meaningful.}
    \label{tab:Adult}

    \vspace{3mm}
\renewcommand{\arraystretch}{1.2}
\begin{scriptsize}
\begin{tabular}{c|c|cccccccc}
\hline
\multicolumn{10}{c}{\normalsize{\textbf{Adult Income} [$\text{feature dim}=97$, $\Psymb(Y=1)=0.2489$]}}\\
\multirow{2}{*}{$k$} & \multirow{2}{*}{Algorithm} & \multirow{2}{*}{\%Var{\scriptsize ($\uparrow$)}} & \multirow{2}{*}{MMD$^2${\scriptsize ($\downarrow$)}} & \%Acc{\scriptsize ($\uparrow$)} & $\Delta_{DP}${\scriptsize ($\downarrow$)}  & \%Acc{\scriptsize ($\uparrow$)}  & $\Delta_{DP}${\scriptsize ($\downarrow$)}  & \%Acc{\scriptsize ($\uparrow$)}  & $\Delta_{DP}${\scriptsize ($\downarrow$)} \\
 & & & & \multicolumn{2}{c}{Kernel SVM} & \multicolumn{2}{c}{Linear SVM} & \multicolumn{2}{c}{MLP}\\\hline
\multirow{11}{*}{2} & PCA & $\mathbf{7.78_{0.77}}$ & $\mathbf{0.349_{0.026}}$ & $\mathbf{82.03_{1.09}}$ & $\mathbf{0.2_{0.05}}$ & $\mathbf{79.87_{1.11}}$ & $\mathbf{0.2_{0.04}}$ & $\mathbf{81.21_{1.22}}$ & $\mathbf{0.22_{0.03}}$ \\
\cdashline{2-10}
 & FPCA (0.1, 0.01) & $4.05_{0.93}$ & $0.016_{0.011}$ & $77.44_{2.81}$ & $0.04_{0.03}$ & $75.54_{1.98}$ & $\mathbf{0.0_{0.0}}$ & $76.19_{2.44}$ & $0.02_{0.03}$ \\
 & FPCA (0, 0.01) & $3.65_{0.92}$ & $0.005_{0.004}$ & $77.05_{3.02}$ & $\mathbf{0.01_{0.01}}$ & $75.51_{1.93}$ & $0.01_{0.02}$ & $76.24_{2.81}$ & $\mathbf{0.01_{0.01}}$ \\
 & MbF-PCA ($10^{-3}$) & $\mathbf{6.08_{0.59}}$ & $0.005_{0.004}$ & $\mathbf{79.46_{1.21}}$ & $0.02_{0.01}$ & $\mathbf{76.97_{1.71}}$ & $0.02_{0.01}$ & $\mathbf{78.6_{1.38}}$ & $0.02_{0.02}$ \\
 & MbF-PCA ($10^{-6}$) & $5.83_{0.54}$ & $0.005_{0.004}$ & $79.12_{1.08}$ & $\mathbf{0.01_{0.01}}$ & $76.7_{1.86}$ & $0.02_{0.02}$ & $77.69_{1.46}$ & $0.02_{0.01}$ \\
 & INLP & $2.09_{0.18}$ & $\mathbf{0.003_{0.001}} $& $75.94_{1.4}$ & $\mathbf{0.01_{0.01}} $& $75.11_{1.66}$ & $\mathbf{0.0_{0.0}} $& $75.3_{1.68}$ & $\mathbf{0.01_{0.01}}$ \\
 & RLACE & $1.98_{0.19}$ & $0.007_{0.008} $& $76.24_{1.37}$ & $0.02_{0.03} $& $75.11_{1.66}$ & $\mathbf{0.0_{0.0}} $& $75.8_{1.31}$ & $0.02_{0.02}$ \\
\cdashline{2-10}
 & Fair PCA & $\mathbf{6.37_{0.65}}$ & $0.009_{0.003}$ & $\mathbf{80.24_{1.57}}$ & $0.06_{0.02}$ & $\mathbf{77.26_{1.79}}$ & $0.02_{0.02}$ & $\mathbf{78.63_{1.36}}$ & $0.04_{0.02}$ \\
 & Fair Kernel PCA & $n/a_{}$ & $n/a_{}$ & $75.11_{1.66}$ & $\mathbf{0.0_{0.0}}$ & $75.11_{1.66}$ & $\mathbf{0.0_{0.0}}$ & $77.08_{1.78}$ & $0.03_{0.03}$ \\
 & Fair PCA-S (0.5) & $3.05_{0.3}$ & $\mathbf{0.002_{0.002}}$ & $75.85_{1.59}$ & $0.01_{0.01}$ & $75.11_{1.66}$ & $\mathbf{0.0_{0.0}}$ & $75.26_{1.63}$ & $0.01_{0.01}$ \\
 & Fair PCA-S (0.85) & $4.27_{0.34}$ & $0.003_{0.002}$ & $76.07_{1.34}$ & $0.01_{0.01}$ & $75.11_{1.66}$ & $\mathbf{0.0_{0.0}}$ & $75.01_{1.76}$ & $\mathbf{0.0_{0.01}}$  \\
\hline
\multirow{11}{*}{10} & PCA & $\mathbf{21.77_{1.95}}$ & $\mathbf{0.195_{0.006}}$ & $\mathbf{93.64_{0.87}}$ & $\mathbf{0.16_{0.01}}$ & $\mathbf{82.68_{0.96}}$ & $\mathbf{0.18_{0.02}}$ & $\mathbf{89.06_{2.07}}$ & $\mathbf{0.2_{0.03}}$ \\
\cdashline{2-10}
 & FPCA (0.1, 0.01) & $15.75_{1.14}$ & $0.006_{0.003}$ & $91.94_{0.84}$ & $0.13_{0.02}$ & $78.1_{2.15}$ & $0.03_{0.02}$ & $\mathbf{87.17_{1.1}}$ & $0.11_{0.04}$ \\
 & FPCA (0, 0.01) & $15.52_{1.12}$ & $0.004_{0.002}$ & $91.66_{0.92}$ & $0.13_{0.02}$ & $77.72_{2.06}$ & $0.03_{0.02}$ & $85.38_{2.08}$ & $0.09_{0.03}$ \\
 & MbF-PCA ($10^{-3}$) & $\mathbf{18.86_{1.46}}$ & $0.005_{0.002}$ & $\mathbf{93.06_{0.85}}$ & $0.15_{0.01}$ & $\mathbf{80.53_{1.31}}$ & $0.03_{0.02}$ & $86.83_{2.05}$ & $0.08_{0.03}$ \\
 & MbF-PCA ($10^{-6}$) & $12.36_{4.15}$ & $\mathbf{0.002_{0.001}}$ & $83.58_{3.58}$ & $\mathbf{0.05_{0.02}}$ & $75.11_{1.66}$ & $\mathbf{0.0_{0.0}}$ & $80.27_{3.55}$ & $\mathbf{0.04_{0.04}}$ \\
 & INLP & $10.79_{0.84}$ & $0.004_{0.001} $& $89.01_{1.19}$ & $0.1_{0.03} $& $75.11_{1.66}$ & $\mathbf{0.0_{0.0}} $& $85.24_{1.2}$ & $0.11_{0.03}$ \\
 & RLACE & $10.3_{0.49}$ & $0.007_{0.005} $& $90.96_{1.04}$ & $0.12_{0.04} $& $75.11_{1.66}$ & $\mathbf{0.0_{0.0}} $& $86.61_{1.69}$ & $0.11_{0.05}$\\
\cdashline{2-10}
 & Fair PCA & $\mathbf{19.62_{1.73}}$ & $0.014_{0.003}$ & $\mathbf{93.42_{0.8}}$ & $0.16_{0.01}$ & $\mathbf{81.31_{1.23}}$ & $0.04_{0.02}$ & $\mathbf{88.16_{1.45}}$ & $0.16_{0.03}$ \\
 & Fair Kernel PCA & $n/a_{}$ & $n/a_{}$ & $79.9_{1.54}$ & $\mathbf{0.04_{0.03}}$ & $78.34_{1.21}$ & $0.02_{0.02}$ & $80.52_{2.05}$ & $\mathbf{0.06_{0.03}}$ \\
 & Fair PCA-S (0.5) & $12.75_{1.31}$ & $\mathbf{0.004_{0.001}}$ & $86.85_{1.79}$ & $0.09_{0.02}$ & $75.11_{1.66}$ & $\mathbf{0.0_{0.0}}$ & $82.89_{2.01}$ & $0.07_{0.04}$ \\
 & Fair PCA-S (0.85) & $15.79_{1.05}$ & $0.005_{0.001}$ & $91.81_{1.05}$ & $0.15_{0.02}$ & $75.11_{1.66}$ & $\mathbf{0.0_{0.0}}$ & $87.07_{1.4}$ & $0.15_{0.02}$ \\
\hline
\end{tabular}
\end{scriptsize}

\end{table*}

\subsection{Comparison with 
Existing 
Methods for 
Fair PCA 
}\label{subsec:experiments_linear_guarding}

\paragraph{Experiments as in \citet{Lee2022}} 
We used the code provided by \citet{Lee2022} to rerun their experiments and perform a comparison with our proposed algorithms. We 
additionally 
compared to 
the methods by \citet{ravfogel2020} and \citet{ravfogel2022} 
using the code provided by 
those authors, where we set all parameters to their default values except for the maximum number of outer iterations for the second method, which we decreased from 75000 to 10000 to get a somewhat acceptable running time. 
We extended the experimental evaluation of \citeauthor{Lee2022} by reporting additional metrics, but other than that  did not modify their code or experimental setting  in any way.

In their first experiment (Section 8.2 in their paper), 
\citet{Lee2022} applied standard PCA, their method (referred to as MbF-PCA) 
and the method by \citet{olfat2019} 
(FPCA) 
to synthetic data 
sampled 
from a mixture of two Gaussians of varying dimension~$d$. The two Gaussians correspond to two demographic groups. The target dimension~$k$ is held constant at~$5$. We reran 
the code of \citeauthor{Lee2022} 
and additionally 
applied the methods of 
\citet{ravfogel2020} (INLP) and 
\citet{ravfogel2022} (RLACE) and
our algorithms for fair PCA, fair kernel PCA 
with a Gaussian kernel, 
and 
the variant of fair PCA that 
additionally 
aims to   
equalize group-conditional covariance matrices (referred to as \mbox{Fair PCA-S}; we set $l=\lfloor 0.85d\rfloor$---cf. Section~\ref{subsec:covariance_extension}).
\citeauthor{Lee2022} reported the fraction of explained variance of the projected data (i.e., $\trace(\Ub^\transpose\Xb\Xb^\transpose\Ub)/\trace(\Xb\Xb^\transpose)$ for the projection defined by $\Ub$---higher means better approximation of the original data), the 
squared 
maximum mean discrepancy (MMD$^2$) based on a Gaussian kernel between the two groups after the projection (lower means the representation is more fair) and the running time of the methods. We additionally report the error of a linear classifier trained to predict the demographic information from the projected data (higher means the representation is more fair; we refer to this metric as linear inseparability). Figure~\ref{fig:MMD_fair_pca_exp_1} and Figure~\ref{fig:MMD_fair_pca_exp_2} show the results,  
where the boxplots are obtained from considering ten random splits into training and test data and the runtime curves show an average over the ten splits. 
While standard PCA does best in approximating the 
original 
data (variance about 50\%), it does not yield a fair representation with high values for MMD$^2$ (more than 0.5) and low values for linear inseparability 
($\approx 0$).
Our algorithm for fair PCA does worse in approximating the data (variance above 30\%), but drastically reduces the unfairness of standard PCA (MMD$^2$ smaller than 0.07).  The other methods yield even lower values for MMD$^2$, but this comes at the cost of a worse approximation of the data. Our variant Fair PCA-S performs similarly to FPCA by \citet{olfat2019}. All methods except standard PCA perform similarly in terms of linear inseparability. The biggest difference is in 
the methods' 
running times: while FPCA runs for more than 2000 seconds,  RLACE for about 20 seconds, MbF-PCA for about 1.3 seconds, and INLP for about 0.8 seconds when the data dimension~$d$ is as small as 100, none of our algorithms runs for more than 0.5 seconds even when $d=800$. In the latter case,  RLACE runs for about 260 seconds, MbF-PCA for about 43 seconds, and INLP for about 1270 seconds. 
\footnotetext{We ran these experiments on a MacBook Pro with 2.6 GHz 6-Core Intel Core i7 processor 
and 16 GB 2667 MHz DDR4 memory. MbF-PCA is implemented in Matlab while all other methods are implemented %
in Python---by running time we mean wall time.} 
In Appendix~\ref{app:runtime_comparison}, we study the running time of the methods as a function of the target dimension~$k$ and observe that the running time of MbF-PCA drastically increases with~$k$ 
(about 290 seconds when $d=100$ and $k=50$). This shows that none of the existing methods can be applied when both $d$ and $k$ are large (such as in the experiment on the CelebA dataset below) and provides strong evidence for the benefit of~our~proposed~methods.

In their second experiment (Section 8.3 in their paper), 
\citet{Lee2022} applied standard PCA, MbF-PCA and FPCA to three real-world datasets: Adult Income and German Credit from the UCI repository \citep{Dua:2019}, and COMPAS \citep{angwin2016}. 
\citeauthor{Lee2022} ran MbF-PCA and FPCA for two different parameter configurations, indicated by the numbers in parentheses after a method's name in the tables below. Similarly, we ran our proposed method Fair PCA-S from Section~\ref{subsec:covariance_extension} for 
$l=\max\{k,\lfloor 0.5d\rfloor\}$  as well as $l=\max\{k,\lfloor 0.85d\rfloor\}$.
\citeauthor{Lee2022} reported the explained variance and MMD$^2$ as above. Furthermore, they reported the accuracy and the 
DP 
violation~$\Delta_{DP}:=|\Psymb(\hat{Y}=1|Z=0)-\Psymb(\hat{Y}=1|Z=1)|$ of a Gaussian kernel 
support vector machine (SVM)  %
trained to solve a downstream task on the projected data (e.g., for the Adult Income dataset the downstream task is to predict whether a person's income exceeds 
\$50k
or not). We additionally report 
the 
accuracy and $\Delta_{DP}$ of a linear SVM 
and a 
multilayer perceptron (MLP) with two hidden layers of size 10 and 5, respectively. Table~\ref{tab:Adult} provides the results for Adult Income; the tables for 
German Credit and COMPAS 
can be found in Appendix~\ref{app:tables}. 
The reported results are average results (together with standard deviations in subscript) over ten random splits into training and test data.  
We see that there is no single best method. Methods that allow for a high downstream accuracy tend to suffer from higher DP violation and the other way around. The parameters of FPCA and MbF-PCA allow to trade-off accuracy vs. fairness and so does the parameter~$l$ in Fair PCA-S (note that Fair PCA is equivalent to Fair PCA-S(1.0)). Except on COMPAS, whose data dimension is very small, Fair PCA-S always achieves smaller DP violation than fair PCA for the non-linear classifiers and fair kernel PCA achieves the smallest DP violation, among all methods, for the kernel SVM. 
Overall, we consider the results for our proposed methods to be similar as for the existing methods.

One of the reviewers asked for a comparison with the method of \citet{samira2018}, which aims to balance the excess reconstruction error of PCA across different demographic groups (cf. Section~\ref{sec:related_work}). We emphasize once more that this fairness notion is incomparable to ours \citep[also see the discussion in Appendix~A of][]{Lee2022}. Still, 
we provide the results for the method of \citet{samira2018} on the three real-world datasets in Appendix~\ref{app:comparison_samira}. As expected, their method yields much higher DP violations than our methods or~the~other~competitors.

\newcommand{\scaleBiasMitigation}{0.21}

\begin{figure*}[t!]
    \centering
    \includegraphics[scale=\scaleBiasMitigation]{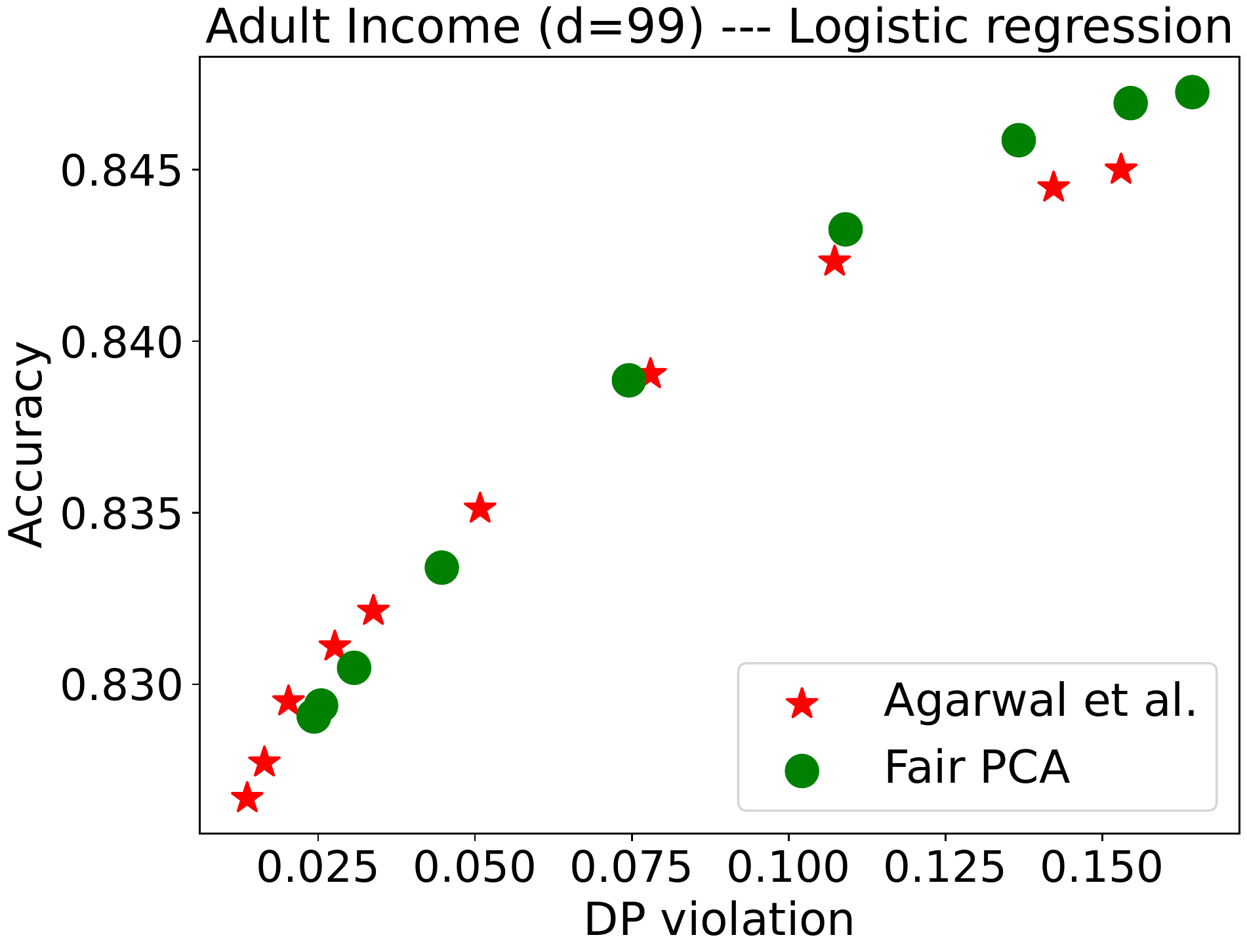}
    \includegraphics[scale=\scaleBiasMitigation]{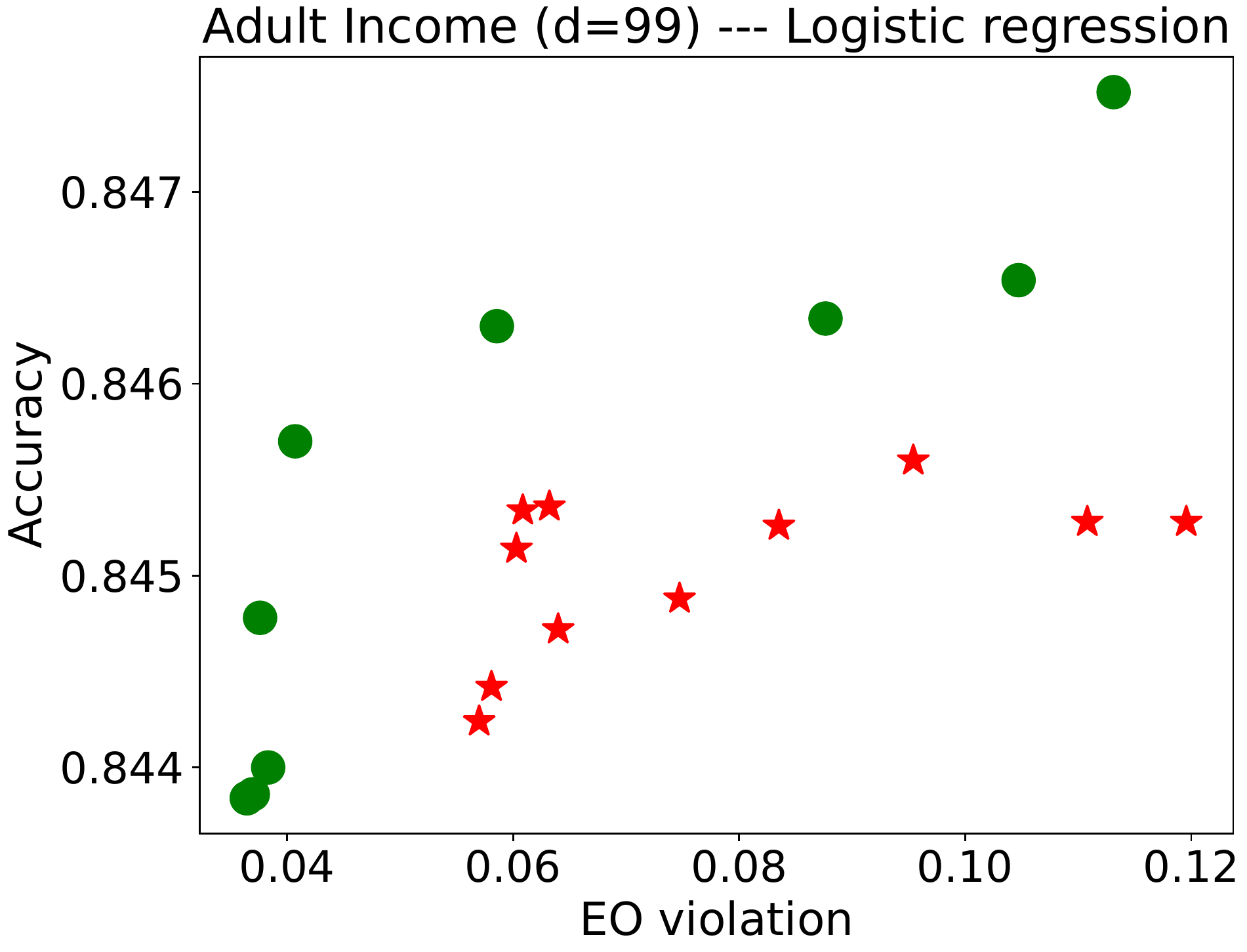}
    \includegraphics[scale=\scaleBiasMitigation]{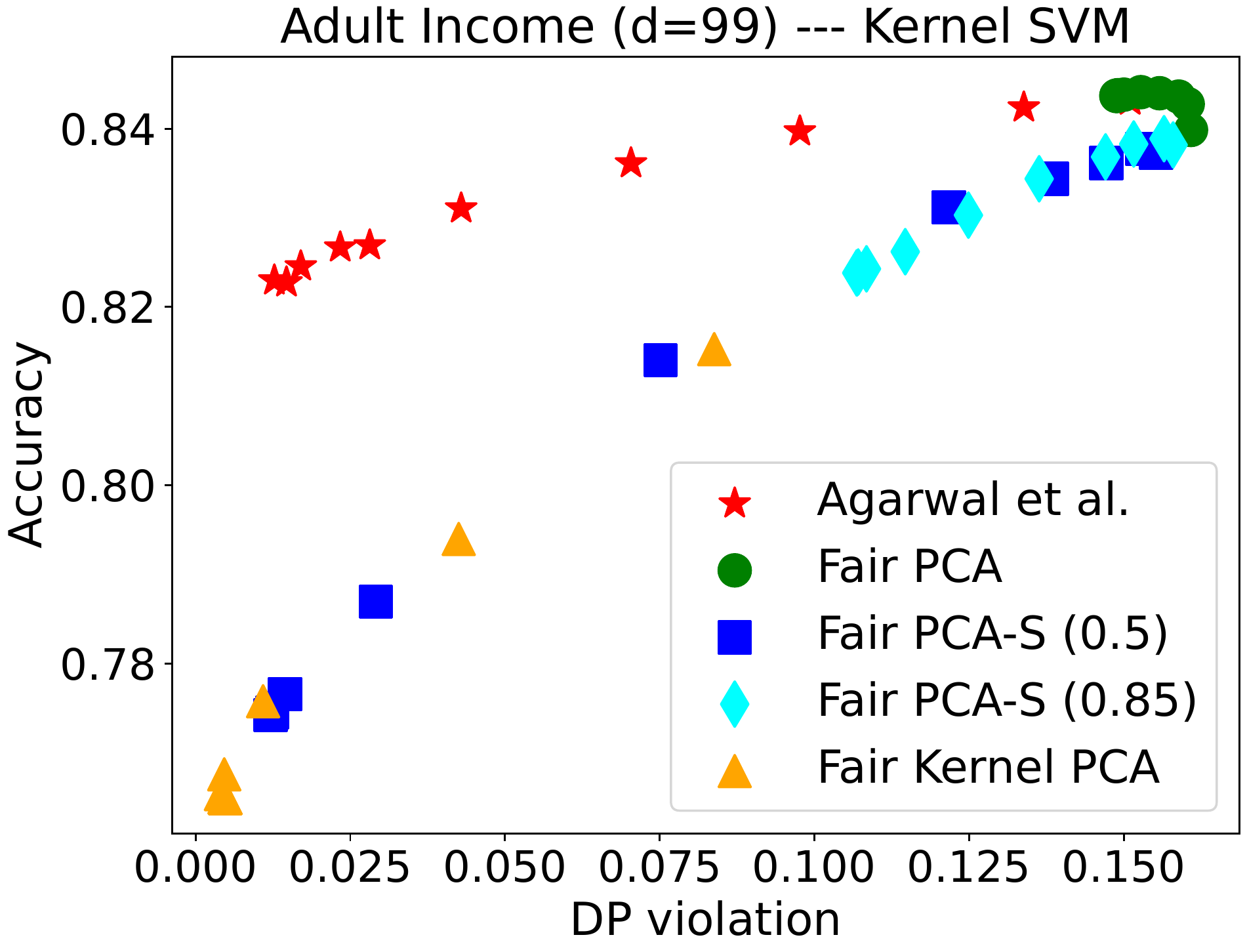}
    \includegraphics[scale=\scaleBiasMitigation]{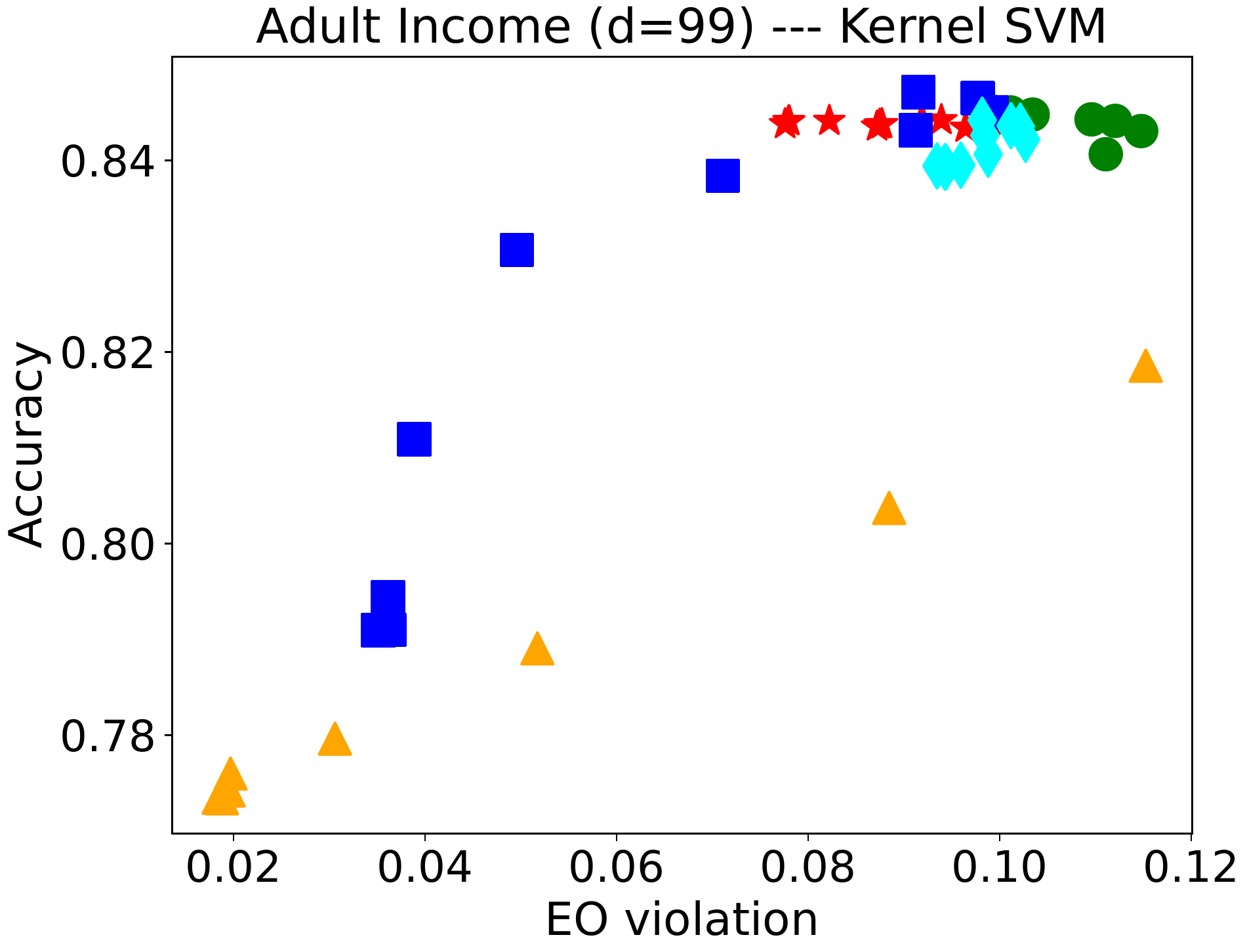}

    \vspace{3mm}
    \includegraphics[scale=\scaleBiasMitigation]{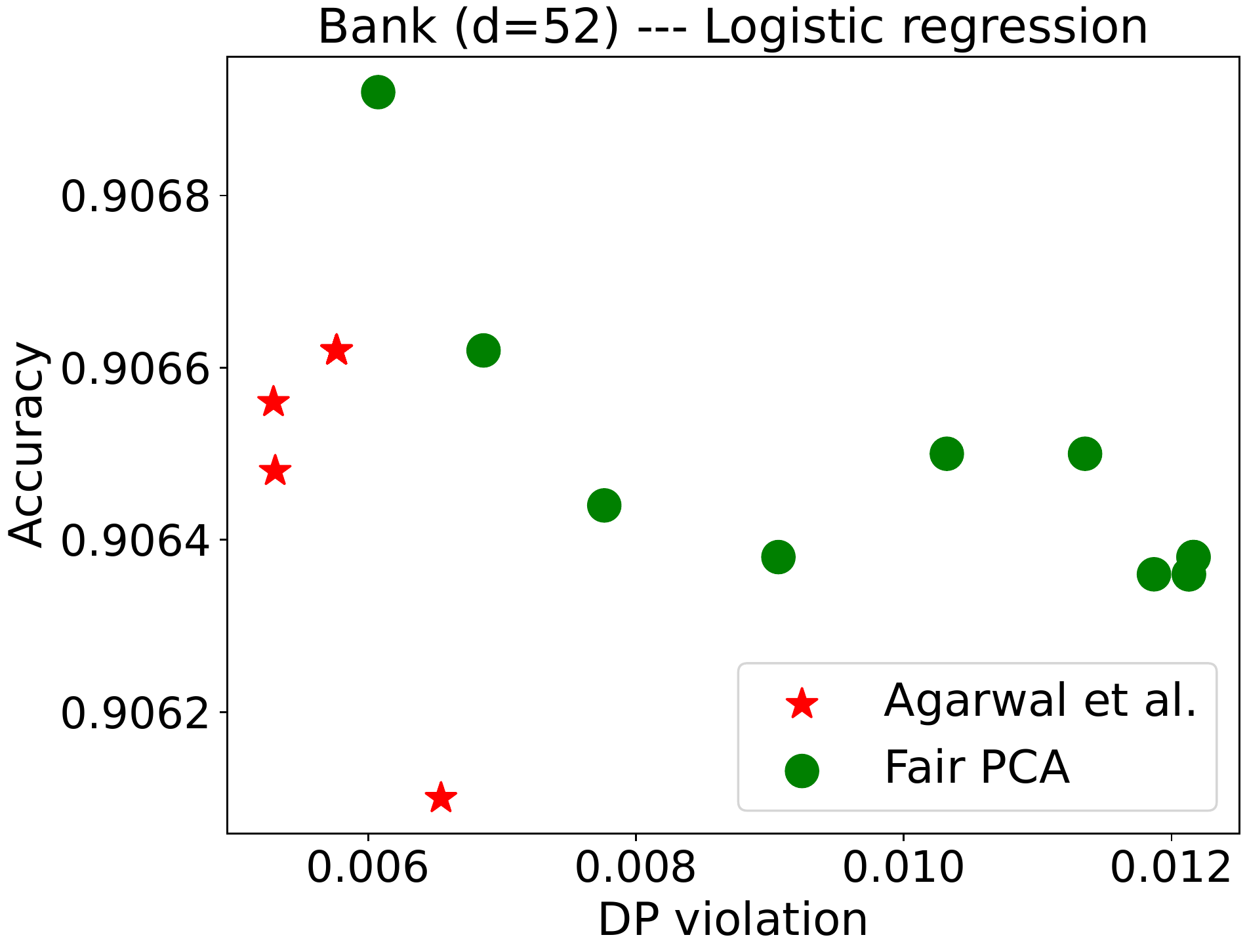}
    \includegraphics[scale=\scaleBiasMitigation]{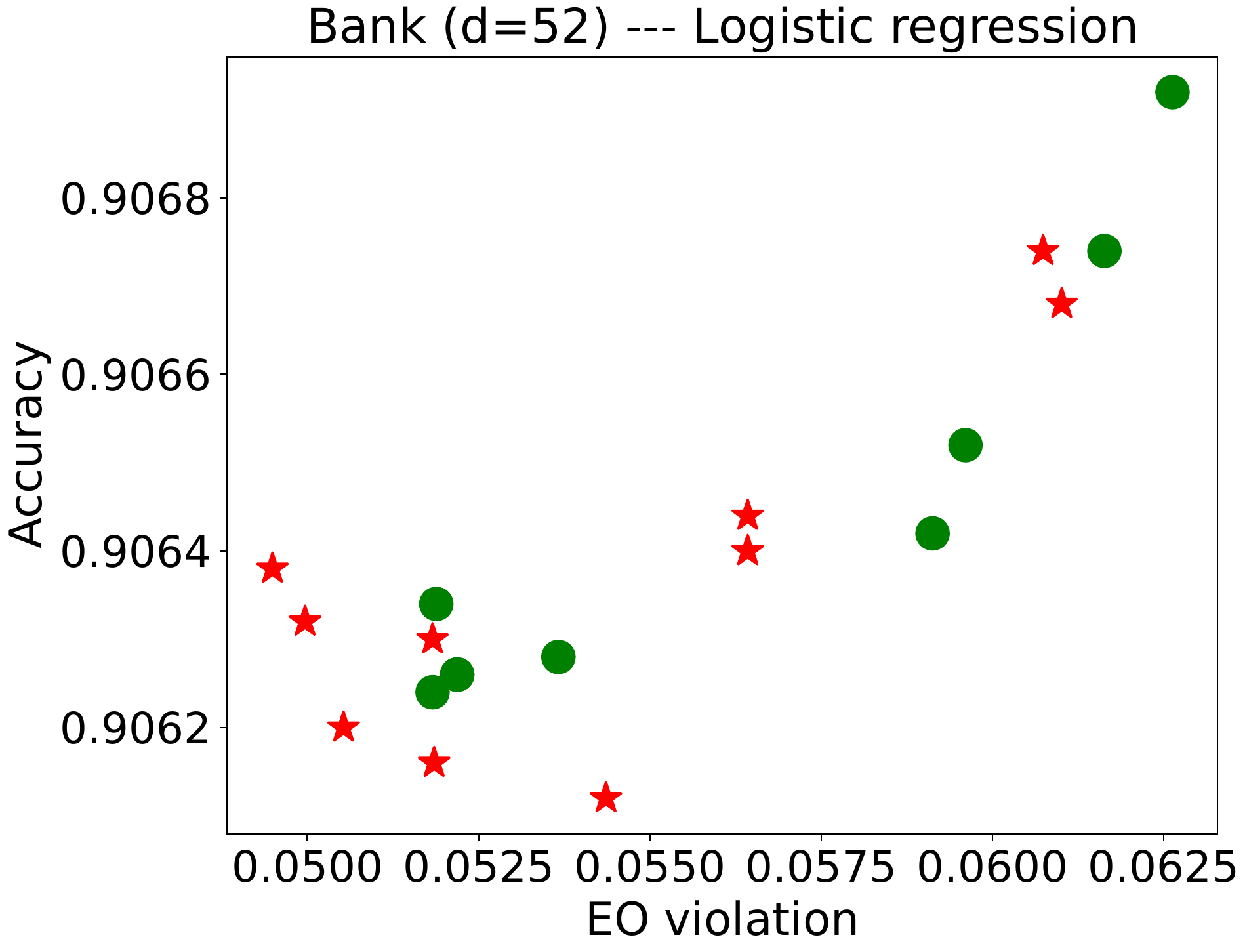}
    \includegraphics[scale=\scaleBiasMitigation]{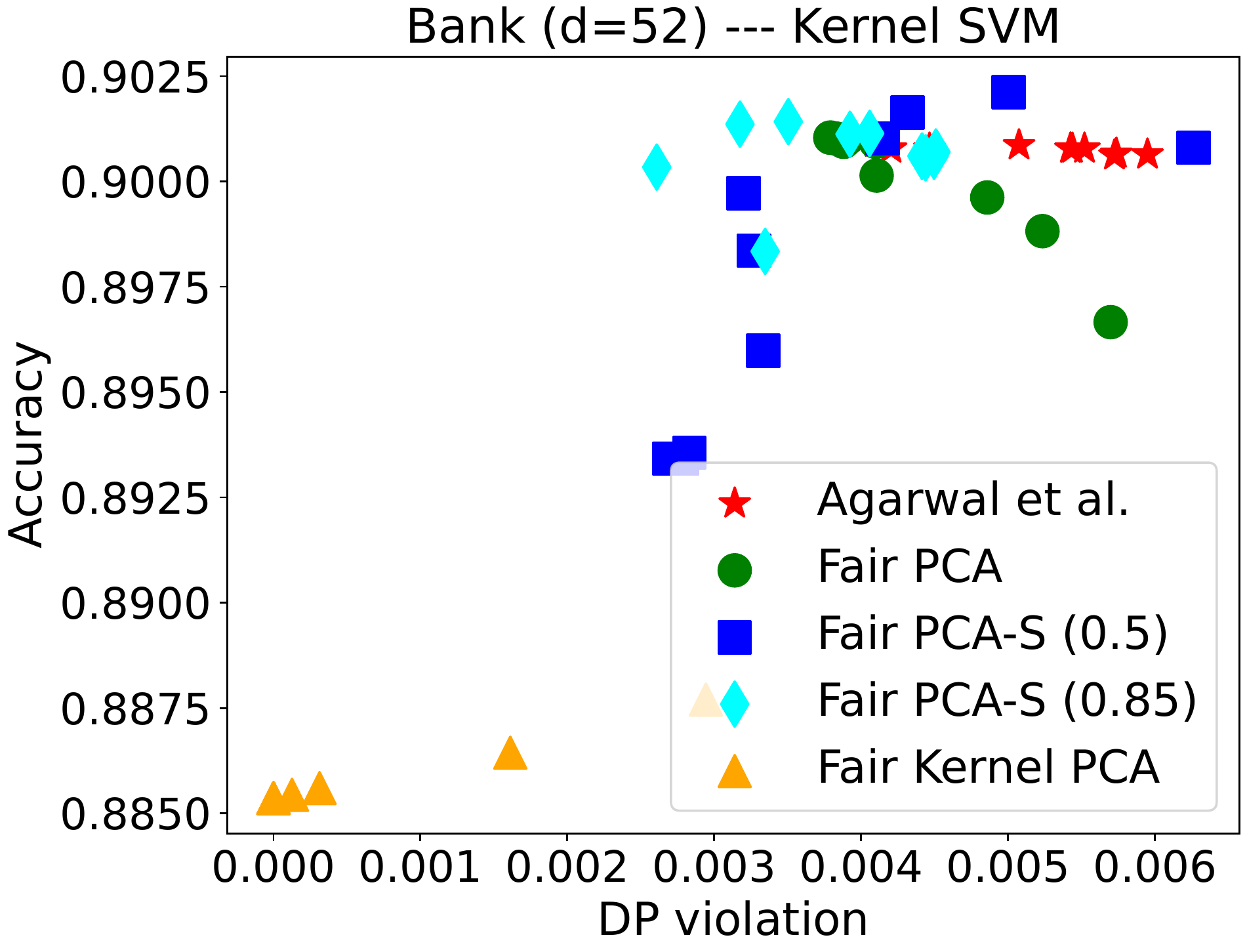}
    \includegraphics[scale=\scaleBiasMitigation]{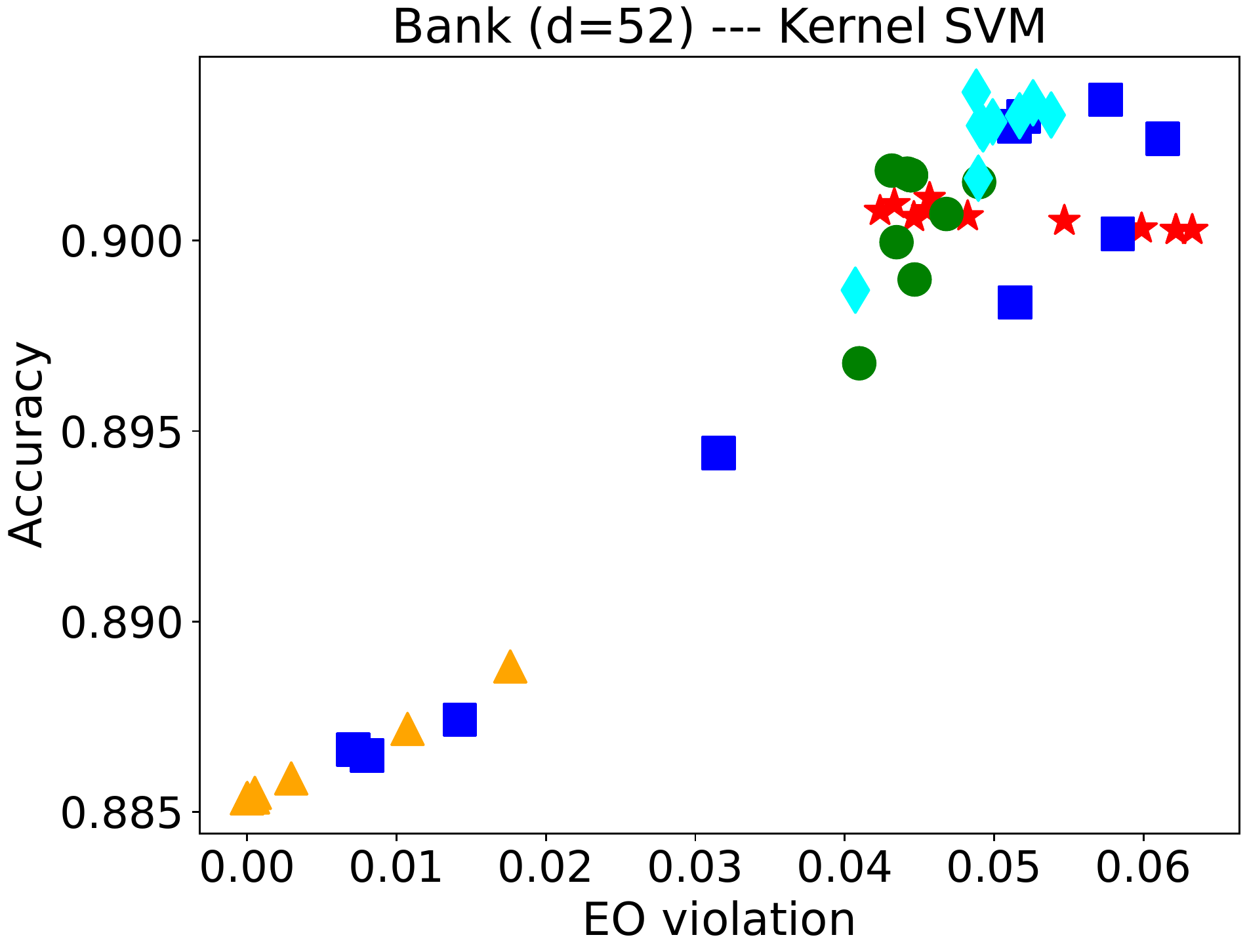}

    \caption{We compare our proposed algorithms, used as pre-processing methods for bias mitigation, to the state-of-the-art in-processing method of \citet{agarwal_reductions_approach}. 
    The first row shows results for the Adult Income dataset, the second row for the Bank Marketing dataset.
    Our algorithms generate 
    comparable 
    trade-off curves, but run much faster (cf. Appendix~\ref{appendix_agarwal_addendum}).}
    \label{fig:bias_mitigation_exp}
\end{figure*}

\newcommand{\scaleCelebA}{1.4cm}

\begin{figure}
    \centering
    \begin{turn}{90} 
     \begin{minipage}{\scaleCelebA}
    \begin{center}
    \begin{small}
    Original
    \\
    dim$=$6400
    \end{small}
    \end{center}
    \end{minipage}
    \end{turn}
     \hspace{-1pt}
    \includegraphics[height=\scaleCelebA]{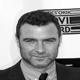}
    \includegraphics[height=\scaleCelebA]{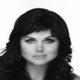}
    \includegraphics[height=\scaleCelebA]{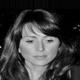}
    \includegraphics[height=\scaleCelebA]{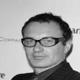}
    \includegraphics[height=\scaleCelebA]{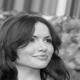}

    \vspace{-1mm}
    \par\noindent\rule{0.485\textwidth}{0.6pt}
    \vspace{-1mm}
    
    \centering
    \begin{turn}{90} 
    \begin{minipage}{\scaleCelebA}
    \begin{center}
    \begin{small}
    Fair PCA
    \\
    $k=6399$
    \end{small}
    \end{center}
    \end{minipage}
    \end{turn}
    \hspace{-1pt}
    \includegraphics[height=\scaleCelebA]{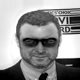}
    \includegraphics[height=\scaleCelebA]{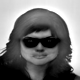}
    \includegraphics[height=\scaleCelebA]{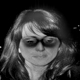}
    \includegraphics[height=\scaleCelebA]{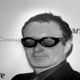}
    \includegraphics[height=\scaleCelebA]{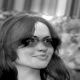}

     \vspace{2mm}
    \centering
    \begin{turn}{90} 
    \begin{minipage}{\scaleCelebA}
    \begin{center}
    \begin{small}
    Fair PCA
    \\
    $k=400$
    \end{small}
    \end{center}
    \end{minipage}
    \end{turn}
    \hspace{-1pt}
    \includegraphics[height=\scaleCelebA]{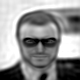}
    \includegraphics[height=\scaleCelebA]{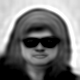}
    \includegraphics[height=\scaleCelebA]{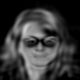}
    \includegraphics[height=\scaleCelebA]{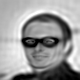}
    \includegraphics[height=\scaleCelebA]{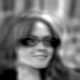}

    \caption{
    Fair PCA 
    applied to the CelebA dataset 
    to erase the concept of ``glasses''. 
    See App.~\ref{app:CelebA} for more examples.  
    }
    \label{fig:celeba_experiment}
\end{figure}

\paragraph{Applying fair PCA to CelebA similarly to \citet{ravfogel2022}}
Similarly to \citet{ravfogel2022}, we applied our fair PCA method to the CelebA dataset \citep{liu2015faceattributes} to erase concepts such as ``glasses'' or ``mustache'' from facial images. The CelebA dataset comprises 202599 pictures of faces of celebrities. 
We rescaled all images to $80\times 80$ grey-scale images 
and applied our Algorithm~\ref{alg:fair_PCA} 
to the flattened raw-pixel vectors, using one of the \emph{bald}, \emph{beard}, \emph{eyeglasses}, \emph{hat}, \emph{mustache}, or \emph{smiling} annotations 
as 
demographic attributes.
Figure~\ref{fig:celeba_experiment} shows 
some 
results for \emph{eyeglasses}; we provide more 
results, also for the other attributes, 
and a discussion 
in Appendix~\ref{app:CelebA}.
Due to their high running time, we were not able to apply the methods by \citet{olfat2019}, \citet{Lee2022},  \citet{ravfogel2020}, or  
\citet{ravfogel2022} to this 
large and high-dimensional dataset. 
However, results for the method of \citet{ravfogel2022} for a smaller resolution~can~be~found~in~their~paper.

\subsection{Comparison with 
\citet{agarwal_reductions_approach}}\label{subsec:experiments_bias_mitigation}

We compare our proposed algorithms as means of bias mitigation to the  state-of-the-art in-processing method of \citet{agarwal_reductions_approach}. While our algorithms 
learn a fair representation and 
perform standard training (without fairness considerations) 
on top of that representation 
to learn a fair classifier, 
the approach of \citeauthor{agarwal_reductions_approach} modifies the training procedure.
Concretely, their approach solves a sequence of cost-sensitive classification problems. We apply the various methods to 
the Adult Income and the Bank Marketing dataset \citep{moro2014},  
which are both available on the UCI repository \citep{Dua:2019}.  The goal for each method is to produce good accuracy vs. fairness trade-off curves---every point on a trade-off curve corresponds to a specific classifier. Note that the approach of \citeauthor{agarwal_reductions_approach} yields randomized classifiers, which is 
problematic if a classifier strongly affects humans' lives \citep{cotter_neurips2019}. 
For our algorithms we deploy the strategy of Section~\ref{subsec:tradeoff} to produce the trade-off curves.
Figure~\ref{fig:bias_mitigation_exp} shows the results.
All results are average results obtained from considering ten random draws of train and test data (see Appendix~\ref{app:details_about_datasets} for details).
The plots show on the y-axis the accuracy of a classifier and on the 
x-axis its fairness violation, which is $\Delta_{DP}=|\Psymb(\hat{Y}=1|Z=0)-\Psymb(\hat{Y}=1|Z=1)|$ as in Section~\ref{subsec:experiments_linear_guarding} when aiming for DP and $\Delta_{EO}:=|\Psymb(\hat{Y}=1|Z=0, Y=1)-\Psymb(\hat{Y}=1|Z=1, Y=1)|$ when aiming for EO. 
In the first and the second plot of each row we learn a logistic regression classifier, aiming to satisfy DP or EO.
We see that fair PCA produces similar curves as the method by \citeauthor{agarwal_reductions_approach} (note that in the bottom left plot $\Delta_{DP}$ is very small for all classifiers). However, fair PCA runs much faster: including the classifier training, 
fair PCA runs for 0.04 seconds on average while the method by \citeauthor{agarwal_reductions_approach} runs for 4.6 seconds (see Appendix~\ref{appendix_agarwal_addendum} for details).
These plots 
do not show results for Fair PCA-S and fair kernel PCA since they cannot compete (we provide those results in Appendix~\ref{appendix_agarwal_addendum}). Fair PCA-S and fair kernel PCA can compete when training a kernel SVM classifier though (third and fourth plot of each row).

\section{DISCUSSION}\label{sec:discussion}

We provided a new derivation 
of 
fair PCA,
aiming for 
a fair representation that does not contain demographic information. Our derivation is simple and 
allows for 
efficient algorithms based on eigenvector computations similar to standard PCA. Compared to existing methods for fair PCA, our proposed algorithms run 
much 
faster while achieving similar results. 
In a comparison with a state-of-the-art in-processing bias mitigation method we saw that our algorithms provide a 
significantly
faster alternative \mbox{to~train~fair~classifiers.}

\newpage

\bibliography{bibfile,bibfile2,bibfile3}

\begin{thebibliography}{49}
\providecommand{\natexlab}[1]{#1}
\providecommand{\url}[1]{\texttt{#1}}
\expandafter\ifx\csname urlstyle\endcsname\relax
  \providecommand{\doi}[1]{doi: #1}\else
  \providecommand{\doi}{doi: \begingroup \urlstyle{rm}\Url}\fi

\bibitem[Adel et~al.(2019)Adel, Valera, Ghahramani, and Weller]{adel2019}
T.~Adel, I.~Valera, Z.~Ghahramani, and A.~Weller.
\newblock One-network adversarial fairness.
\newblock In \emph{AAAI Conference on Artificial Intelligence}, 2019.

\bibitem[Agarwal et~al.(2018)Agarwal, Beygelzimer, Dudik, Langford, and
  Wallach]{agarwal_reductions_approach}
A.~Agarwal, A.~Beygelzimer, M.~Dudik, J.~Langford, and H.~Wallach.
\newblock A reductions approach to fair classification.
\newblock In \emph{International Conference on Machine Learning (ICML)}, 2018.
\newblock Implemented in Fairlearn: \url{https://fairlearn.org/}.

\bibitem[Alvi et~al.(2019)Alvi, Zisserman, and Nellaker]{alvi2019a}
M.~Alvi, A.~Zisserman, and C.~Nellaker.
\newblock Turning a blind eye: Explicit removal of biases and variation from
  deep neural network embeddings.
\newblock In \emph{European Conference on Computer Vision (ECCV) - Workshop},
  2019.

\bibitem[Angwin et~al.(2016)Angwin, Larson, Mattu, and Kirchner]{angwin2016}
J.~Angwin, J.~Larson, S.~Mattu, and L.~Kirchner.
\newblock Propublica---machine bias, 2016.
\newblock
  \url{https://www.propublica.org/article/machine-bias-risk-assessments-in-criminal-sentencing}.

\bibitem[Balunovic et~al.(2022)Balunovic, Ruoss, and Vechev]{balunovic2022}
M.~Balunovic, A.~Ruoss, and M.~Vechev.
\newblock Fair normalizing flows.
\newblock In \emph{International Conference on Learning Representations
  (ICLR)}, 2022.

\bibitem[Barocas et~al.(2018)Barocas, Hardt, and
  Narayanan]{barocas-hardt-narayanan}
S.~Barocas, M.~Hardt, and A.~Narayanan.
\newblock \emph{Fairness and Machine Learning}.
\newblock fairmlbook.org, 2018.
\newblock \url{http://www.fairmlbook.org}.

\bibitem[Beutel et~al.(2017)Beutel, Chen, Zhao, and Chi]{Beutel2017DataDA}
A.~Beutel, J.~Chen, Z.~Zhao, and E.~Chi.
\newblock Data decisions and theoretical implications when adversarially
  learning fair representations.
\newblock arXiv:1707.00075 [cs.LG], 2017.

\bibitem[Chechik et~al.(2005)Chechik, Globerson, Tishby, and
  Weiss]{chechik2005}
G.~Chechik, A.~Globerson, N.~Tishby, and Y.~Weiss.
\newblock Information bottleneck for {Gaussian} variables.
\newblock \emph{Journal of Machine Learning Research (JMLR)}, 6:\penalty0
  165--188, 2005.

\bibitem[Chierichetti et~al.(2017)Chierichetti, Kumar, Lattanzi, and
  Vassilvitskii]{chierichetti2017fair}
F.~Chierichetti, R.~Kumar, S.~Lattanzi, and S.~Vassilvitskii.
\newblock Fair clustering through fairlets.
\newblock In \emph{Neural Information Processing Systems (NeurIPS)}, 2017.

\bibitem[Chin and Suter(2006)]{chin2006}
T.-J. Chin and D.~Suter.
\newblock Improving the speed of kernel {PCA} on large scale datasets.
\newblock In \emph{IEEE International Conference on Video and Signal Based
  Surveillance}, 2006.

\bibitem[Cotter et~al.(2019)Cotter, Narasimhan, and Gupta]{cotter_neurips2019}
A.~Cotter, H.~Narasimhan, and M.~Gupta.
\newblock On making stochastic classifiers deterministic.
\newblock In \emph{Neural Information Processing Systems (NeurIPS)}, 2019.

\bibitem[d'Alessandro et~al.(2017)d'Alessandro, O'Neil, and
  LaGatta]{alessandro2017}
B.~d'Alessandro, C.~O'Neil, and T.~LaGatta.
\newblock Conscientious classification: A data scientist's guide to
  discrimination-aware classification.
\newblock \emph{Big Data}, 5\penalty0 (2):\penalty0 120--134, 2017.

\bibitem[Dua and Graff(2017)]{Dua:2019}
D.~Dua and C.~Graff.
\newblock {UCI} machine learning repository, 2017.
\newblock URL \url{http://archive.ics.uci.edu/ml}.

\bibitem[Edwards and Storkey(2016)]{edwards2016}
H.~Edwards and A.~Storkey.
\newblock Censoring representations with an adversary.
\newblock In \emph{International Conference on Learning Representations
  (ICLR)}, 2016.

\bibitem[Feng et~al.(2019)Feng, Yang, Lyu, Tan, Sun, and Wang]{feng2019}
R.~Feng, Y.~Yang, Y.~Lyu, C.~Tan, Y.~Sun, and C.~Wang.
\newblock Learning fair representations via an adversarial framework.
\newblock arXiv:1904.13341 [cs.LG], 2019.

\bibitem[Ghojogh et~al.(2019)Ghojogh, Karray, and Crowley]{ghojogh2019}
B.~Ghojogh, F.~Karray, and M.~Crowley.
\newblock Eigenvalue and generalized eigenvalue problems: Tutorial.
\newblock arXiv:1903.11240 [stat.ML], 2019.

\bibitem[Haghighatkhah et~al.(2021)Haghighatkhah, Meulemans, Speckmann,
  Urhausen, and Verbeek]{haghighatkhah2021}
P.~Haghighatkhah, W.~Meulemans, B.~Speckmann, J.~Urhausen, and K.~Verbeek.
\newblock Obstructing classification via projection.
\newblock In \emph{International Symposium on Mathematical Foundations of
  Computer Science}, 2021.

\bibitem[Hardt et~al.(2016)Hardt, Price, and Srebro]{hardt2016equality}
M.~Hardt, E.~Price, and N.~Srebro.
\newblock Equality of opportunity in supervised learning.
\newblock In \emph{Neural Information Processing Systems (NIPS)}, 2016.

\bibitem[Jia et~al.(2018)Jia, Lansdall-Welfare, and Cristianini]{jia2018right}
S.~Jia, T.~Lansdall-Welfare, and N.~Cristianini.
\newblock Right for the right reason: Training agnostic networks.
\newblock In \emph{International Symposium on Intelligent Data Analysis}, 2018.

\bibitem[Kamani et~al.(2022)Kamani, Haddadpour, Forsati, and
  Mahdavi]{KamaniPCA}
M.~Kamani, F.~Haddadpour, R.~Forsati, and M.~Mahdavi.
\newblock Efficient fair principal component analysis.
\newblock \emph{Machine Learning}, 2022.

\bibitem[Kamiran and Calders(2011)]{kamiran2011}
F.~Kamiran and T.~Calders.
\newblock Data preprocessing techniques for classification without
  discrimination.
\newblock \emph{Knowledge and Information Systems}, 33:\penalty0 1--33, 2011.

\bibitem[Kim et~al.(2005)Kim, Franz, and Schölkopf]{Kim2005}
K.~I. Kim, M.~Franz, and B.~Schölkopf.
\newblock Iterative kernel principal component analysis for image modeling.
\newblock \emph{IEEE Transactions on Pattern Analysis and Machine
  Intelligence}, 27\penalty0 (9):\penalty0 1351--1366, 2005.

\bibitem[Koller and Sahami(1996)]{Koller1996}
D.~Koller and M.~Sahami.
\newblock Toward optimal feature selection.
\newblock In \emph{International Conference on Machine Learning (ICML)}, 1996.

\bibitem[Lee et~al.(2022)Lee, Kim, Olfat, Hasegawa-Johnson, and Yoo]{Lee2022}
J.~Lee, G.~Kim, M.~Olfat, M.~Hasegawa-Johnson, and C.~Yoo.
\newblock Fast and efficient {MMD}-based fair {PCA} via optimization over
  {Stiefel} manifold.
\newblock In \emph{AAAI Conference on Artificial Intelligence (AAAI)}, 2022.
\newblock Code available on
  \url{https://github.com/nick-jhlee/fair-manifold-pca}.

\bibitem[Liu et~al.(2015)Liu, Luo, Wang, and Tang]{liu2015faceattributes}
Z.~Liu, P.~Luo, X.~Wang, and X.~Tang.
\newblock Deep learning face attributes in the wild.
\newblock In \emph{International Conference on Computer Vision (ICCV)}, 2015.
\newblock Dataset available on
  \url{https://mmlab.ie.cuhk.edu.hk/projects/CelebA.html}.

\bibitem[Louizos et~al.(2016)Louizos, Swersky, Li, Welling, and
  Zemel]{louizos2016}
C.~Louizos, K.~Swersky, Y.~Li, M.~Welling, and R.~Zemel.
\newblock The variational fair autoencoder.
\newblock In \emph{International Conference on Learning Representations
  (ICLR)}, 2016.

\bibitem[Madras et~al.(2018)Madras, Creager, Pitassi, and Zemel]{madras2018}
D.~Madras, E.~Creager, T.~Pitassi, and R.~Zemel.
\newblock Learning adversarially fair and transferable representations.
\newblock In \emph{International Conference on Machine Learning (ICML)}, 2018.

\bibitem[Moro et~al.(2014)Moro, Cortez, and Rita]{moro2014}
S.~Moro, P.~Cortez, and P.~Rita.
\newblock A data-driven approach to predict the success of bank telemarketing.
\newblock \emph{Decision Support Systems}, 62:\penalty0 22--31, 2014.

\bibitem[Moyer et~al.(2018)Moyer, Gao, Brekelmans, Steeg, and
  Galstyan]{moyer2018invariant}
D.~Moyer, S.~Gao, R.~Brekelmans, G.~Steeg, and A.~Galstyan.
\newblock Invariant representations without adversarial training.
\newblock In \emph{Conference on Neural Information Processing Systems
  (NeurIPS)}, 2018.

\bibitem[Oh et~al.(2022)Oh, Won, So, Kim, Kim, Choi, and Song]{oh2022}
C.~Oh, H.~Won, J.~So, T.~Kim, Y.~Kim, H.~Choi, and K.~Song.
\newblock Learning fair representation via distributional contrastive
  disentanglement.
\newblock In \emph{ACM SIGKDD International Conference on Knowledge Discovery
  \& Data Mining (KDD)}, 2022.

\bibitem[Olfat and Aswani(2019)]{olfat2019}
M.~Olfat and A.~Aswani.
\newblock Convex formulations for fair principal component analysis.
\newblock In \emph{AAAI Conference on Artificial Intelligence (AAAI)}, 2019.

\bibitem[Pelegrina et~al.(2021)Pelegrina, Brotto, Duarte, Attux, and
  Romano]{Pelegrina2021}
G.~Pelegrina, R.~Brotto, L.~Duarte, R.~Attux, and J.~Romano.
\newblock A novel multi-objective-based approach to analyze trade-offs in fair
  principal component analysis.
\newblock arXiv:2006.06137 [cs.LG], 2021.

\bibitem[Raff and Sylvester(2018)]{raff2018gradientreversal}
E.~Raff and J.~Sylvester.
\newblock Gradient reversal against discrimination: A fair neural network
  learning approach.
\newblock In \emph{IEEE International Conference on Data Science and Advanced
  Analytics (DSAA)}, 2018.

\bibitem[Ravfogel et~al.(2020)Ravfogel, Elazar, Gonen, Twiton, and
  Goldberg]{ravfogel2020}
S.~Ravfogel, Y.~Elazar, H.~Gonen, M.~Twiton, and Y.~Goldberg.
\newblock Null it out: Guarding protected attributes by iterative nullspace
  projection.
\newblock In \emph{Annual Meeting of the Association for Computational
  Linguistics}, 2020.
\newblock Code available on
  \url{https://github.com/shauli-ravfogel/nullspace_projection}.

\bibitem[Ravfogel et~al.(2022)Ravfogel, Twiton, Goldberg, and
  Cotterell]{ravfogel2022}
S.~Ravfogel, M.~Twiton, Y.~Goldberg, and R.~Cotterell.
\newblock Linear adversarial concept erasure.
\newblock In \emph{International Conference on Machine Learning (ICML)}, 2022.
\newblock Code available on
  \url{https://github.com/shauli-ravfogel/rlace-icml}.

\bibitem[Samadi et~al.(2018)Samadi, Tantipongpipat, Morgenstern, Singh, and
  Vempala]{samira2018}
S.~Samadi, U.~Tantipongpipat, J.~Morgenstern, M.~Singh, and S.~Vempala.
\newblock The price of fair {PCA}: One extra dimension.
\newblock In \emph{Neural Information Processing Systems (NeurIPS)}, 2018.
\newblock Code available on \url{https://github.com/samirasamadi/Fair-PCA}.

\bibitem[Sarhan et~al.(2020)Sarhan, Navab, Eslami, and Albarqouni]{sarhan2020}
M.~H. Sarhan, N.~Navab, A.~Eslami, and S.~Albarqouni.
\newblock Fairness by learning orthogonal disentangled representations.
\newblock In \emph{European Conference on Computer Vision (ECCV)}, 2020.

\bibitem[Sch\"{o}lkopf et~al.(2001)Sch\"{o}lkopf, Herbrich, and
  Smola]{bernhard_representer_theorem}
B.~Sch\"{o}lkopf, R.~Herbrich, and A.~J. Smola.
\newblock A generalized representer theorem.
\newblock In \emph{Annual Conference on Computational Learning Theory (COLT)},
  2001.

\bibitem[Schölkopf and Smola(2002)]{book_learning_with_kernels}
B.~Schölkopf and A.~Smola.
\newblock \emph{Learning with kernels}.
\newblock MIT Press, 2002.

\bibitem[Shalev-Shwartz and Ben-David(2014)]{shalev2014understanding}
S.~Shalev-Shwartz and S.~Ben-David.
\newblock \emph{Understanding machine learning: From theory to algorithms}.
\newblock Cambridge University Press, 2014.

\bibitem[Shao et~al.(2022)Shao, Ziser, and Cohen]{Shao2022}
S.~Shao, Y.~Ziser, and S.~B. Cohen.
\newblock Gold doesn't always glitter: Spectral removal of linear and nonlinear
  guarded attribute information.
\newblock arXiv:2203.07893 [cs.CL], 2022.

\bibitem[Song et~al.(2019)Song, Kalluri, Grover, Zhao, and Ermon]{song2019}
J.~Song, P.~Kalluri, A.~Grover, S.~Zhao, and S.~Ermon.
\newblock Learning controllable fair representations.
\newblock \emph{Proceedings of Machine Learning Research (PMLR)}, 89:\penalty0
  2164--2173, 2019.

\bibitem[Tantipongpipat et~al.(2019)Tantipongpipat, Samadi, Singh, Morgenstern,
  and Vempala]{samira2019}
U.~Tantipongpipat, S.~Samadi, M.~Singh, J.~Morgenstern, and S.~Vempala.
\newblock Multi-criteria dimensionality reduction with applications to
  fairness.
\newblock In \emph{Neural Information Processing Systems (NeurIPS)}, 2019.

\bibitem[Tishby et~al.(1999)Tishby, Pereira, and Bialek]{tishby1999}
N.~Tishby, F.~C. Pereira, and W.~Bialek.
\newblock The information bottleneck method.
\newblock In \emph{Allerton Conference on Communication, Control, and
  Computing}, 1999.

\bibitem[Williams and Seeger(2000)]{williams2000}
C.~Williams and M.~Seeger.
\newblock Using the {Nyström} method to speed up kernel machines.
\newblock In \emph{Neural Information Processing Systems (NIPS)}, 2000.

\bibitem[Xie et~al.(2017)Xie, Dai, Du, Hovy, and Neubig]{xie2017controllable}
Q.~Xie, Z.~Dai, Y.~Du, E.~Hovy, and G.~Neubig.
\newblock Controllable invariance through adversarial feature learning.
\newblock arXiv:1705.11122 [cs.LG], 2017.

\bibitem[Yu and Liu(2004)]{Yu2004}
L.~Yu and H.~Liu.
\newblock Efficient feature selection via analysis of relevance and redundancy.
\newblock \emph{Journal of Machine Learning Research (JMLR)}, 5:\penalty0
  1205--1224, 2004.

\bibitem[Zafar et~al.(2019)Zafar, Valera, Rodriguez, and Gummadi]{zafar2019}
M.~B. Zafar, I.~Valera, M.~G. Rodriguez, and K.~P. Gummadi.
\newblock Fairness constraints: A flexible approach for fair classification.
\newblock \emph{Journal of Machine Learning Research (JMLR)}, 20\penalty0
  (75):\penalty0 1--42, 2019.

\bibitem[Zemel et~al.(2013)Zemel, Wu, Swersky, Pitassi, and Dwork]{zemel2013}
R.~Zemel, Y.~Wu, K.~Swersky, T.~Pitassi, and C.~Dwork.
\newblock Learning fair representations.
\newblock In \emph{International Conference on Machine Learning (ICML)}, 2013.

\end{thebibliography}

\clearpage
\appendix

\thispagestyle{empty}

\onecolumn

\section*{APPENDIX}

\section{ADDENDUM TO SECTION~\ref{sec:methods}}

\subsection{Problem~\eqref{eq:fair_PCA} May Not Be Well Defined}\label{app:not_well_defined}

Let $n=2n'$, $\xb_1=\xb_2=\ldots=\xb_{n'}=\nullb\in\R^2$, and $\xb_{n'+1}, \ldots \xb_{2n'}\in\R^2$ be equidistantly spread on a circle with center~$\nullb$. Let $z_1=\ldots=z_{n'}=0$,  $z_{n'+1}= \ldots =z_{2n'}=1$, and $k=1$. Any projection onto a 1-dimensional linear subspace maps $\xb_1,\ldots,\xb_{n'}$ to $\nullb$ 
and $\xb_{n'+1}, \ldots, \xb_{2n'}$ onto a line through $\nullb$ such that half the points of  $\xb_{n'+1}, \ldots, \xb_{2n'}$ lie 
on one side of $\nullb$ and the other half lies on the 
other
side of $\nullb$ (at most two of $\xb_{n'+1}, \ldots, \xb_{2n'}$ might map to $\nullb$). The function $h: \R \rightarrow\R$
with 
$h(x)=\charfct[x\neq 0]$ 
(almost) perfectly predicts $z_i$ from the projected points, showing that the set $\mathcal{U}$ defined in \eqref{eq:fair_PCA} can be empty if we require $h(\Ub^\transpose\xb_i)$ and $z_i$ to be independent for \emph{all} functions~$h$.

The same example shows that $\mathcal{U}$ can be empty if we require $h(\Ub^\transpose\xb_i)$ and $z_i$ to be \emph{uncorrelated} (rather than independent) for all functions~$h$.

It also shows that $\mathcal{U}$ can be empty if we require $h(\Ub^\transpose\xb_i)$ and $z_i$ to be  independent for all \emph{linear}  functions~$h$ (rather than all functions~$h$): for $h: \R \rightarrow\R$
with $h(x)=x$,   $h(\Ub^\transpose\xb_i)$ and $z_i$ are clearly dependent.

This shows that we have to relax Problem~\eqref{eq:fair_PCA} in two ways in order to arrive at a well defined problem.

\begin{algorithm}[t!]
   \caption{Fair PCA (for multiple demographic groups)
   }\label{alg:fair_PCA_multi_groups}
\begin{algorithmic}
   \STATE {\bfseries Input:} data matrix $\Xb\in\R^{d\times n}$; demographic attributes~$z_i^{(1)},\ldots,z_i^{(m)}\in\{0,1\}$, $i\in[n]$, where $z_i^{(l)}$ encodes membership of the $i$-th datapoint in the $l$-th group;
   target dimension
   $k\in[d-m+1]$

\vspace{1mm}
   \STATE {\bfseries Output:} a solution $\Ub$ to the multi-group version of Problem~\eqref{eq:fair_PCA_relaxed}

   \begin{itemize}[leftmargin=*]
   \setlength{\itemsep}{-2pt}
   \item set $\Zb\in\R^{n\times m}$ with the $l$-th column of $\Zb$ equaling $(z_1^{(l)}-\bar{z}^{(l)},\ldots,z_n^{(l)}-\bar{z}^{(l)})^\transpose$ with $\bar{z}^{(l)}=\frac{1}{n} \sum_{i=1}^n z_i^{(l)}$ 
\item compute an orthonormal basis of the nullspace of $\Zb^\transpose\Xb^\transpose$ and build  matrix~$\Rb$ comprising the basis vectors as columns
  \item compute orthonormal eigenvectors, corresponding to the largest $k$ eigenvalues, of $\Rb^\transpose\Xb\Xb^\transpose\Rb$ and build matrix~$\Lamb$ comprising the eigenvectors as columns
  \item return $\Ub=\Rb\Lamb$
   \end{itemize}
\end{algorithmic}
\end{algorithm}

\section{ADDENDUM TO SECTION~\ref{sec:extensions}}

\subsection{Fair PCA for Multiple Demographic Groups}\label{app:multiple_groups}

In fair PCA for multiple groups we want to solve
\begin{align}\label{eq:fair_PCA_multi_groups_app}
    \argmax_{\Ub\in\R^{d\times k}:\, \Ub^\transpose\Ub=\Idk}\trace(\Ub^\transpose\Xb\Xb^\transpose\Ub)\quad \text{subject to}\quad \Zb^\transpose\Xb^\transpose\Ub=\nullb,
\end{align}
where $\Zb\in\R^{n\times m}$ and the $l$-th column of $\Zb$ equals $(z_1^{(l)}-\bar{z}^{(l)},\ldots,z_n^{(l)}-\bar{z}^{(l)})^\transpose$ with $\bar{z}^{(l)}=\frac{1}{n} \sum_{i=1}^n z_i^{(l)}$ and $z_i^{(l)}=1$ if $\xb_i$ belongs to group~$l$ and  $z_i^{(l)}=0$ otherwise. Assuming that no group is empty, the rank of $\Zb$ 
is $m-1$ as $\sum_{l=1}^m \Zb^{(l)}_i=0$, $i\in[n]$, and in any linear combination of $(m-1)$ many columns of $\Zb$ equaling zero all coefficients must be zero. Hence, $\rank(\Zb^\transpose\Xb^\transpose)\leq \rank(\Zb^\transpose)=\rank(\Zb)=m-1$ and the nullspace of $\Zb^\transpose\Xb^\transpose$ has dimension at least $d-m+1$.
Let $\Rb\in\R^{d\times s}$ with $s\geq d-m+1$ comprise as columns an orthonormal basis of 
the nullspace of $\Zb^\transpose\Xb^\transpose$. We can then substitute $\Ub=\Rb\Lamb$ for $\Lamb\in\R^{s\times k}$. The constraint $\Ub^\transpose\Ub=\Idk$ becomes $\Lamb^\transpose\Lamb=\Idk$, and the objective $\trace(\Ub^\transpose\Xb\Xb^\transpose\Ub)$ becomes $\trace(\Lamb^\transpose\Rb^\transpose\Xb\Xb^\transpose\Rb\Lamb)$. Hence, we can compute $\Lamb$ by computing  eigenvectors, corresponding to the largest $k$ eigenvalues, of $\Rb^\transpose\Xb\Xb^\transpose\Rb$. This requires $k\leq s$, which is guaranteed to hold for 
$k\leq d-m+1$.

If $m=2$, then the first and the second column of $\Zb$ coincide up to  multiplication by $-1$ and the nullspace of $\Zb^\transpose\Xb^\transpose$ is the same as if we removed one of the two columns from $\Zb$. This shows that for two groups, fair PCA as presented here is equivalent to fair PCA as presented in Section~\ref{sec:methods}.  

Finally, the interpretation of fair PCA provided in Section~\ref{sec:methods} also applies to the case of multiple groups: $\Zb^\transpose\Xb^\transpose\Ub = \nullb$ is equivalent to  
\begin{align*}
\frac{1}{|\{i:\xb_i\in \text{group $l$}\}|}\sum_{i:\, \xb_i\,\in \,\text{group $l$}} \Ub^\transpose\xb_i = \frac{1}{|\{i:\xb_i\notin \text{group $l$}\}|}\sum_{i:\, \xb_i\,\notin\, \text{group $l$}} \Ub^\transpose\xb_i,\quad l=1,\ldots,m,
\end{align*}
which in turn is equivalent to the projected data's group-conditional means to coincide for all groups. Hence, 
an analogous version of Proposition~\ref{prop:gaussian_data} holds true for multiple groups.

The pseudo code of fair PCA for multiple demographic groups is provided in Algorithm~\ref{alg:fair_PCA_multi_groups}. The pseudo code of fair kernel PCA for multiple demographic groups is provided in Algorithm~\ref{alg:fair_PCA_kernelized}.

\begin{algorithm}[t]
   \caption{Fair Kernel  PCA (for multiple demographic groups)
   }\label{alg:fair_PCA_kernelized}
\begin{algorithmic}
   \STATE {\bfseries Input:} kernel matrix $\Kb\in\R^{n\times n}$ with $\Kb_{ij}=k(\xb_i,\xb_j)$ for some kernel function~$k$; demographic~attributes $z_i^{(1)},\ldots,z_i^{(m)}\in\{0,1\}$, $i\in[n]$, where $z_i^{(l)}$ encodes membership of $\xb_i$ in the $l$-th group;
   target dimension~$k\in[n-m+1]$; \emph{optional:} kernel matrix $\mathbf{\hat{K}}\in\R^{n\times n'}$ with $\mathbf{\hat{K}}_{ij}=k(\xb_i,\xb'_j)$, $i\in[n],j\in[n']$, %
    for test data~$\xb'_1,\ldots,\xb'_{n'}$

\vspace{1mm}
   \STATE {\bfseries Output:} $k$-dimensional representation of the training data~$\xb_1,\ldots,\xb_n$; \emph{optional:} $k$-dimensional representation of the test data~$\xb'_1,\ldots,\xb'_{n'}$

   \begin{itemize}[leftmargin=*]
   \setlength{\itemsep}{-2pt}
   \item set $\Zb\in\R^{n\times m}$ with the $l$-th column of $\Zb$ equaling $(z_1^{(l)}-\bar{z}^{(l)},\ldots,z_n^{(l)}-\bar{z}^{(l)})^\transpose$ with $\bar{z}^{(l)}=\frac{1}{n} \sum_{i=1}^n z_i^{(l)}$ 
\item compute an orthonormal basis of the nullspace of $\Zb^\transpose\Kb$ and build  matrix~$\Rb$ comprising the basis vectors as columns
  \item compute orthonormal eigenvectors, corresponding to the largest $k$ eigenvalues, of the generalized eigenvalue problem $\Rb^\transpose\Kb\Kb\Rb\Lamb=\Rb^\transpose\Kb\Rb\Lamb \newblock$; here, the matrix~$\Lamb$ comprises the eigenvectors as columns and $\Wb$ is a diagonal matrix containing the eigenvalues
  \item return $\Lamb^\transpose\Rb^\transpose\Kb$ as the representation of the training data;  \emph{optional:} return $\Lamb^\transpose\Rb^\transpose\mathbf{\hat{K}}$ as the representation of the test data
   \end{itemize}
\end{algorithmic}
\end{algorithm}

\section{ADDENDUM TO SECTION~\ref{sec:experiments}}

\subsection{Implementation Details}\label{app:implementation_details}

\paragraph{General details}

\begin{itemize}

\item \textbf{Solving generalized eigenvalue problem for fair kernel PCA:} Fair kernel PCA requires to solve a generalized eigenvalue problem of the form $\Ab\xb=\lambda \Bb\xb$ for square matrices~$\Ab$ and $\Bb$
that are given as input. 
In fair kernel PCA, 
$\Bb$ is 
guaranteed to be symmetric positive semi-definite, but not 
necessarily 
positive definite 
(and so is $\Ab$). We use the function \texttt{eigsh} from SciPy (\url{https://docs.scipy.org/doc/scipy/reference/generated/scipy.sparse.linalg.eigsh.html}) to solve the generalized eigenvalue problem. While \texttt{eigsh} allows for a 
positive semi-definite $\Bb$, it requires a parameter~\texttt{sigma} to use the shift-invert mode in this case (this is in contrast to the function~\texttt{eig} in Matlab, which does not require such a parameter and automatically chooses the best algorithm to solve the  generalized eigenvalue problem in case of a singular $\Bb$; cf. \url{https://de.mathworks.com/help/matlab/ref/eig.html}). In order to avoid having to look for an appropriate value of \texttt{sigma}, when \texttt{eigsh} would require its specification, we simply add $10^{-5}\cdot \Ib$ to $\Bb$, where $\Ib$ is the identity matrix, to guarantee that $\Bb$ is positive definite.  This is a common practice in the context of kernel methods to avoid numerical instabilities \citep[see, e.g., ][Section 1.2]{williams2000}.

    \item \textbf{Bandwith for fair kernel PCA:} When running our 
proposed 
fair kernel PCA algorithm with a Gaussian kernel, we set the 
parameter~$\gamma$ of the kernel function (cf. \url{https://scikit-learn.org/stable/modules/generated/sklearn.metrics.pairwise.rbf_kernel.html#sklearn.metrics.pairwise.rbf_kernel}) to $1/(d\cdot \Var(\text{training data}))$, where $d$ is the dimension of the data (i.e., number of features)  and $\Var(\text{training data})$ the variance of the flattened training data array. This value of $\gamma$ is the default value in Scikit-learn's kernel SVM implementation (cf. \url{https://scikit-learn.org/stable/modules/generated/sklearn.svm.SVC.html}).
\end{itemize}

\paragraph{Details for the experiments of Section~\ref{subsec:experiments_linear_guarding}}

We used the experimental setup and code of \citet{Lee2022}. Hence, most implementation details can be found in their paper or code repository. In addition, we provide the following details:

\begin{itemize}

\item \textbf{Training additional classifiers for evaluating representations:} In addition to the metrics reported by \citet{Lee2022}, we reported the 
accuracy and $\Delta_{DP}$ of a linear support vector machine (SVM)  
and a 
multilayer perceptron (MLP). We trained the linear SVM in Matlab using the function \texttt{fitcsvm} (\url{https://de.mathworks.com/help/stats/fitcsvm.html}) and the MLP using Scikit-learn's \texttt{MLPClassifier} class (\url{https://scikit-learn.org/stable/modules/generated/sklearn.neural_network.MLPClassifier.html}) with all parameters set to the default values (except for \texttt{hidden\_layer\_sizes}, \texttt{max\_iter}, and \texttt{random\_state} for the MLP).

\end{itemize}

\paragraph{Details for the experiments of Section~\ref{subsec:experiments_bias_mitigation}} 

\begin{itemize}
    \item \textbf{Data normalization:} We normalized the data to have zero mean and unit variance on the training data.

\item \textbf{Target dimension for our methods:} As target dimension~$k$ we chose $k=d-1$, where $d$ is the data dimension, for fair PCA and fair kernel PCA, $k=\lfloor d/4\rfloor$ for Fair PCA-S (0.5), and $k=\lfloor d/2\rfloor$ for Fair PCA-S (0.85).

\item \textbf{Controlling accuracy vs. fairness trade-off:} For our methods, we deployed the strategy described in Section~\ref{subsec:tradeoff} to trade off accuracy vs. fairness. For the reductions approach of \citet{agarwal_reductions_approach}, we controlled the trade-off by varying the parameter \texttt{difference\_bound} in the classes \texttt{DemographicParity} or \texttt{TruePositiveRateParity}, which implement the fairness constraints. For all methods, we used 11 parameter values for generating the trade-off curves. For our methods, we set the fairness parameter~$\lambda$ of Section~\ref{subsec:tradeoff} to $(i/10)^3$, $i=0,1,\ldots, 10$. For the approach of \citeauthor{agarwal_reductions_approach} we set 
\texttt{difference\_bound} to 
 0.001, 0.005, 0.01, 0.015, 0.02, 0.03, 0.05, 0.07, 0.1, 0.15, 0.2.

\item \textbf{Regularization parameters:} We trained the logistic regression classifier using Scikit-learn (\url{https://scikit-learn.org/stable/modules/generated/sklearn.linear_model.LogisticRegression.html}) with regularization parameter~$C=1/(2\cdot \text{size of training data}\cdot  0.01)$ and the kernel SVM classifier using Scikit-learn (\url{https://scikit-learn.org/stable/modules/generated/sklearn.svm.SVC.html}) with regularization parameter~$C=1/(2\cdot \text{size of training data}\cdot 0.00005)$. By default, both classifiers are trained with $l_2$-regularization.

\end{itemize}

\newcommand{\scaleDistributionPlots}{0.18}
\newcommand{\abstDistributionPlots}{-3mm}
\begin{figure}[t]
    \centering
    \includegraphics[scale=\scaleDistributionPlots]{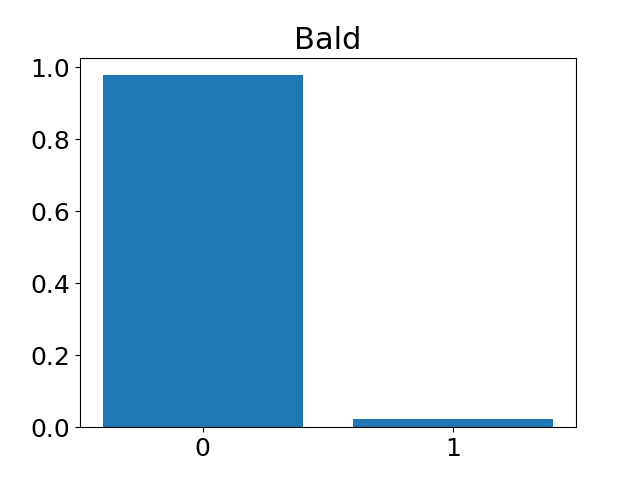}
    \hspace{\abstDistributionPlots}
    \includegraphics[scale=\scaleDistributionPlots]{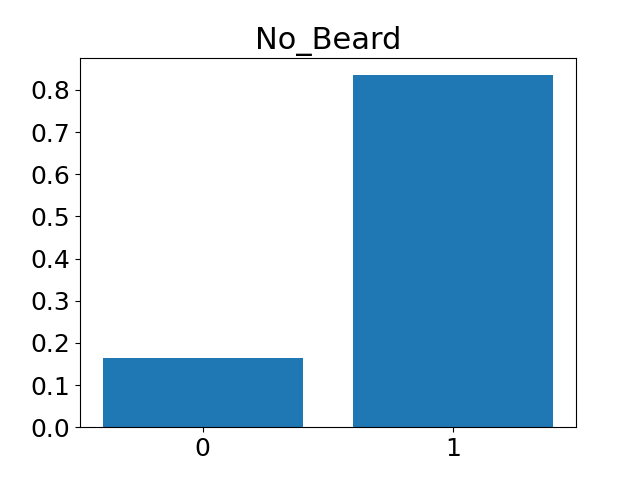}
    \hspace{\abstDistributionPlots}
    \includegraphics[scale=\scaleDistributionPlots]{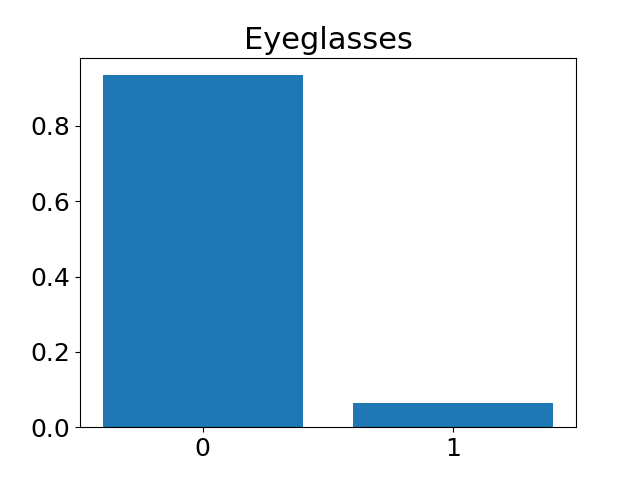}
    \hspace{\abstDistributionPlots}
    \includegraphics[scale=\scaleDistributionPlots]{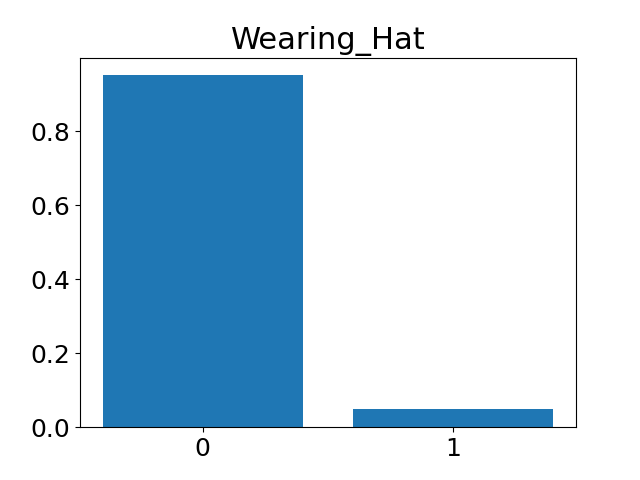} 
    \hspace{\abstDistributionPlots}
    \includegraphics[scale=\scaleDistributionPlots]{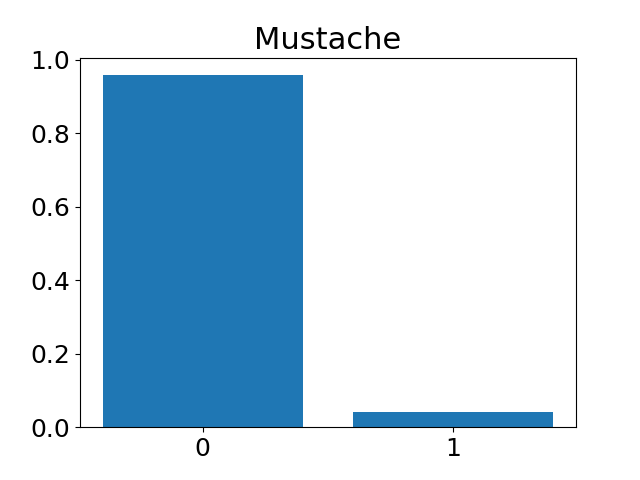}
    \hspace{\abstDistributionPlots}
    \includegraphics[scale=\scaleDistributionPlots]{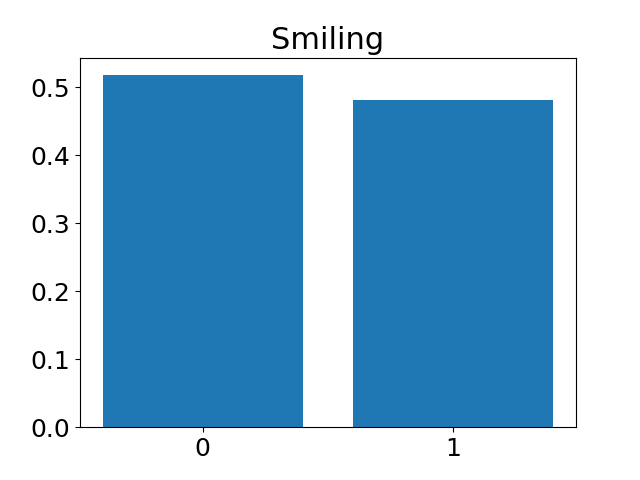}
    \caption{Distributions of the 
    attributes \emph{bald}, \emph{beard}, \emph{eyeglasses}, \emph{hat}, \emph{mustache}, and \emph{smiling} in the CelebA dataset.}
    \label{fig:CelebA_distribution_attributes}
\end{figure}

\subsection{Details about Datasets}\label{app:details_about_datasets}

\paragraph{Adult Income dataset 
\citep{Dua:2019}
} The Adult Income dataset is available on the UCI repository \citep{Dua:2019}. Each record comprises 14 features (before one-hot encoding categorical ones) for an individual, such as their education or marital status, and the task is to predict whether 
an individual makes 
more than \$50k per year 
or not (distribution: 23.9\% yes - 76.1\% no). In Section~\ref{subsec:experiments_linear_guarding}, we used the dataset as provided by  \citet{Lee2022}. They removed the features ``fnlwgt'' and ``race'', and they subsampled the dataset to comprise 2261 records (cf. Appendix~I.3 in their paper). They used the binary feature ``sex'' as demographic attribute (distribution: 66.8\% male - 33.2\% female). In our comparison with the method of \citet{agarwal_reductions_approach} presented in Section~\ref{subsec:experiments_bias_mitigation}, we also used  ``sex'' as demographic attribute; however, we did not remove any features and we 
randomly 
subsampled the dataset to comprise 5000 records for training and 5000 different records for evaluation (i.e., computing a classifier's accuracy and fairness violation). 
In the runtime comparison of Appendix~\ref{appendix_agarwal_addendum} we used between 1000 and 40000 
randomly sampled 
records for training.

\paragraph{Bank Marketing dataset \citep{moro2014,Dua:2019}}
The Bank Marketing dataset is available on the UCI repository \citep{Dua:2019}. There are four versions available. We worked with the file \texttt{bank-additional-full.csv}. Each record comprises 20 features (before one-hot encoding categorical ones) for an individual, and the task is to predict whether 
an individual subscribes a term deposit  
or not (distribution: 11.3\% yes - 88.7\% no). We used 
a person's binarized age 
(older than 40 vs. not older than 40)
as demographic attribute (distribution: 42.3\% older than 40 - 57.7\% not older than 40), and we 
randomly 
subsampled the dataset to comprise 5000 records for training and 5000 different records for evaluation.

\paragraph{CelebA dataset \citep{liu2015faceattributes}}  The CelebA dataset comprises 202599 pictures of faces of celebrities
together with 
40 binary 
attribute 
annotations for each picture. The dataset comes in two versions: one that provides in-the-wild images, which may not only show a person's face, but also their upper body, and one that provides aligned-and-cropped images, which only show a person's face. For our experiment, we used the latter one. We used one of  the \emph{bald}, \emph{beard}, \emph{eyeglasses}, \emph{hat}, \emph{mustache}, or \emph{smiling} annotations as demographic attributes. The distributions of these attributes can be seen in Figure~\ref{fig:CelebA_distribution_attributes}.

\paragraph{COMPAS dataset \citep{angwin2016}} The COMPAS dataset is available on \url{https://github.com/propublica/compas-analysis}. We used the dataset as provided by \citet{Lee2022}. They subsampled the dataset to comprise 2468 datapoints, removed the features ``sex'' and ``c\_charge\_desc'', and used the feature ``Race'' for defining the demographic attribute (cf. Appendix~I.1 in their paper).

\paragraph{German Credit \citep{Dua:2019}} The German Credit dataset  is available on the UCI repository \citep{Dua:2019}. It  comprises 1000 datapoints. We used the dataset as provided by \citet{Lee2022}. They removed the features ``sex'' and ``personal\_status'' and used the feature ``Age'' for defining the demographic attribute (cf. Appendix~I.2 in their paper).

\begin{figure}[t]
    \centering
    \includegraphics[scale=0.3]{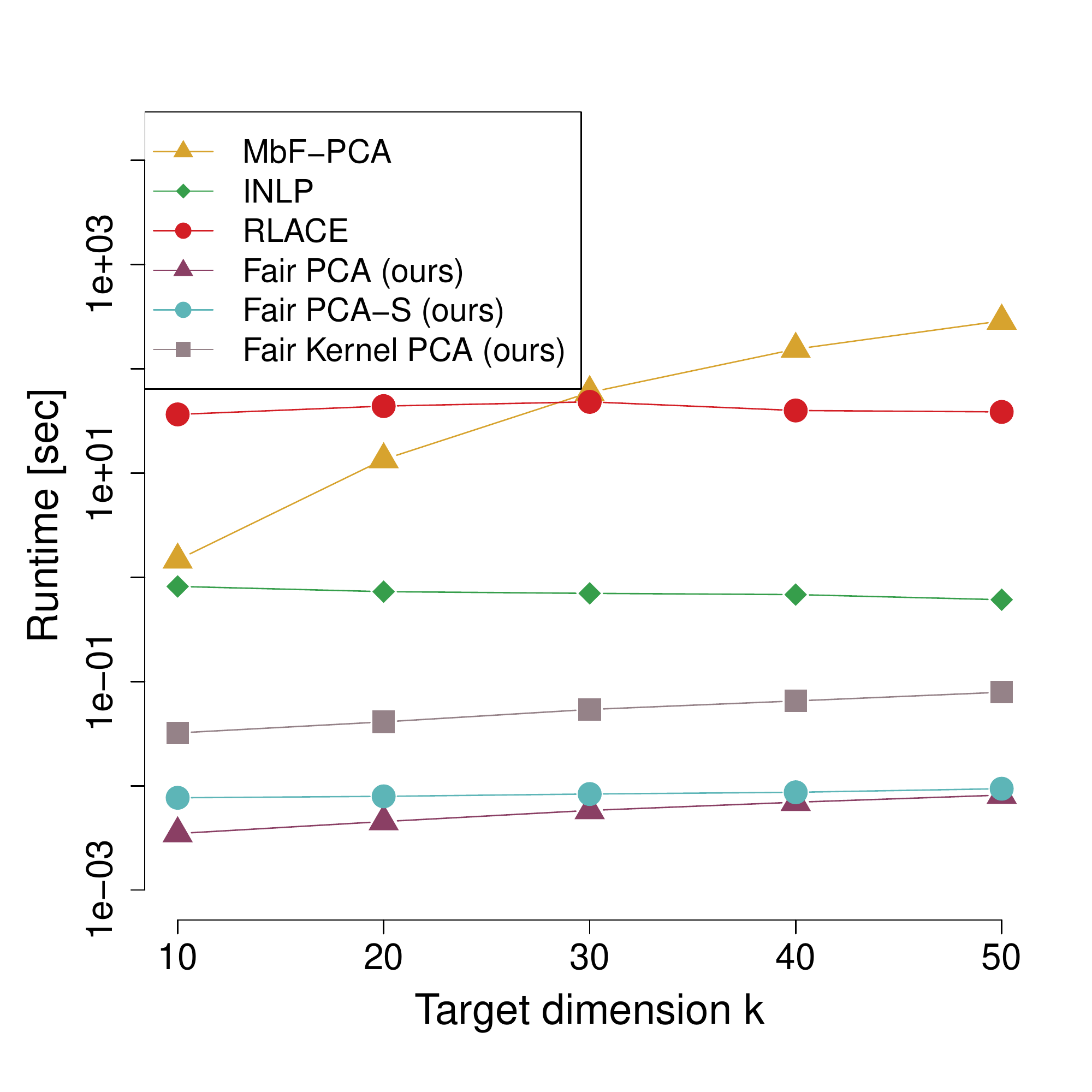}
    \caption{The running time of  MbF-PCA \citep{Lee2022}, INLP \citep{ravfogel2020}, RLACE 
\citep{ravfogel2022} and our proposed methods as a function of the target dimension~$k$. The data dimension~$d$ equals 100. Note the logarithmic~y-axis.}
    \label{fig:runtime_as_function_of_k}
\end{figure}

\subsection{Another Runtime Comparison}\label{app:runtime_comparison}

Figure~\ref{fig:runtime_as_function_of_k} provides a comparison of the running times of the various methods (except for FPCA, which we  already have seen to run extremely slow in Figure~\ref{fig:MMD_fair_pca_exp_2}) in a 
related, but different 
setting as in the experiments of Section~\ref{subsec:experiments_linear_guarding} on the synthetic data. The data is generated in the same way as in Section~\ref{subsec:experiments_linear_guarding}, but now we vary the target dimension~$k$ and hold the data dimension~$d$ constant at 100. We see that while the running time of our proposed methods only moderately increases with~$k$, the running time of MbF-PCA drastically increases with~$k$. Note that the running time of INLP even decreases with~$k$, which is by 
the 
design of the method (cf. Section~\ref{sec:related_work}).

\subsection{Tables for German Credit and COMPAS}\label{app:tables}

Table~\ref{tab:German} and Table~\ref{tab:COMPAS} 
provide the results of the experiments of Section~\ref{subsec:experiments_linear_guarding} on the real  data for the German Credit 
dataset 
and 
the 
COMPAS dataset, respectively. 
Note that the dimension of the COMPAS dataset is rather small---in particular, for $k=10$, we do not expect Fair PCA-S (0.5) or Fair PCA-S (0.85) to behave any differently than fair PCA since $l=\max\{k,\lfloor f\cdot d\rfloor\}=k=d-1$ for both $f=0.5$ and $f=0.85$.

\begin{table*}[t]
    \centering
\caption{Similar table as Table~\ref{tab:Adult} for the German Credit dataset.}
    \label{tab:German}

    \vspace{3mm}
\renewcommand{\arraystretch}{1.15}
\begin{scriptsize}
\begin{tabular}{c|c|cccccccc}
\hline
\multicolumn{10}{c}{\normalsize{\textbf{German Credit} [$\text{feature dim}=57$, $\Psymb(Y=1)=0.3020$]}}\\
\multirow{2}{*}{$k$} & \multirow{2}{*}{Algorithm} & \multirow{2}{*}{\%Var{\scriptsize ($\uparrow$)}} & \multirow{2}{*}{MMD$^2${\scriptsize ($\downarrow$)}} & \%Acc{\scriptsize ($\uparrow$)} & $\Delta_{DP}${\scriptsize ($\downarrow$)}  & \%Acc{\scriptsize ($\uparrow$)}  & $\Delta_{DP}${\scriptsize ($\downarrow$)}  & \%Acc{\scriptsize ($\uparrow$)}  & $\Delta_{DP}${\scriptsize ($\downarrow$)} \\
 & & & & \multicolumn{2}{c}{Kernel SVM} & \multicolumn{2}{c}{Linear SVM} & \multicolumn{2}{c}{MLP}\\\hline
\multirow{11}{*}{2} & PCA & $\mathbf{11.42_{0.45}}$ & $\mathbf{0.147_{0.047}}$ & $\mathbf{76.87_{1.32}}$ & $\mathbf{0.12_{0.06}}$ & $\mathbf{69.8_{1.21}}$ & $\mathbf{0.0_{0.0}}$ & $\mathbf{71.7_{1.83}}$ & $\mathbf{0.09_{0.09}}$ \\
\cdashline{2-10}
 & FPCA (0.1, 0.01) & $7.43_{0.56}$ & $0.017_{0.009}$ & $72.17_{1.04}$ & $0.03_{0.02}$ & $\mathbf{69.8_{1.21}}$ & $\mathbf{0.0_{0.0}}$ & $70.27_{1.38}$ & $0.01_{0.01}$ \\
 & FPCA (0, 0.01) & $7.33_{0.54}$ & $0.015_{0.01}$ & $71.77_{1.52}$ & $0.03_{0.02}$ & $\mathbf{69.8_{1.21}}$ & $\mathbf{0.0_{0.0}}$ & $69.83_{1.49}$ & $\mathbf{0.0_{0.01}}$ \\
 & MbF-PCA ($10^{-3}$) & $\mathbf{10.34_{0.57}}$ & $0.019_{0.014}$ & $\mathbf{74.87_{1.92}}$ & $0.04_{0.04}$ & $\mathbf{69.8_{1.21}}$ & $\mathbf{0.0_{0.0}}$ & $\mathbf{71.43_{2.08}}$ & $0.04_{0.05}$ \\
 & MbF-PCA ($10^{-6}$) & $9.38_{0.3}$ & $0.016_{0.009}$ & $73.97_{1.59}$ & $0.03_{0.02}$ & $\mathbf{69.8_{1.21}}$ & $\mathbf{0.0_{0.0}}$ & $70.83_{1.66}$ & $0.03_{0.03}$ \\
 & INLP & $2.99_{0.39}$ & $\mathbf{0.007_{0.004}} $& $70.93_{1.27}$ & $\mathbf{0.02_{0.02}} $& $\mathbf{69.8_{1.21}}$ & $\mathbf{0.0_{0.0}} $& $70.17_{1.56}$ & $0.01_{0.02}$ \\
 & RLACE & $3.62_{0.27}$ & $0.042_{0.027} $& $71.5_{1.75}$ & $\mathbf{0.02_{0.02}} $& $\mathbf{69.8_{1.21}}$ & $\mathbf{0.0_{0.0}} $& $70.23_{1.5}$ & $0.02_{0.01}$ \\
\cdashline{2-10}
 & Fair PCA & $\mathbf{10.85_{0.55}}$ & $0.025_{0.016}$ & $\mathbf{75.6_{1.89}}$ & $0.06_{0.05}$ & $\mathbf{69.8_{1.21}}$ & $\mathbf{0.0_{0.0}}$ & $\mathbf{72.03_{1.98}}$ & $0.04_{0.05}$ \\
 & Fair Kernel PCA & $n/a_{}$ & $n/a_{}$ & $69.8_{1.21}$ & $\mathbf{0.0_{0.0}}$ & $\mathbf{69.8_{1.21}}$ & $\mathbf{0.0_{0.0}}$ & $69.8_{1.21}$ & $\mathbf{0.0_{0.0}}$ \\
 & Fair PCA-S (0.5) & $4.73_{0.43}$ & $\mathbf{0.01_{0.006}}$ & $72.47_{3.14}$ & $0.02_{0.02}$ & $\mathbf{69.8_{1.21}}$ & $\mathbf{0.0_{0.0}}$ & $70.93_{2.21}$ & $0.01_{0.01}$ \\
 & Fair PCA-S (0.85) & $7.43_{0.42}$ & $0.018_{0.011}$ & $72.93_{2.05}$ & $0.02_{0.02}$ & $\mathbf{69.8_{1.21}}$ & $\mathbf{0.0_{0.0}}$ & $70.0_{1.83}$ & $0.02_{0.02}$ \\
\hline
\multirow{11}{*}{10} & PCA & $\mathbf{38.24_{0.92}}$ & $\mathbf{0.13_{0.018}}$ & $\mathbf{99.93_{0.13}}$ & $\mathbf{0.12_{0.07}}$ & $\mathbf{74.8_{1.93}}$ & $\mathbf{0.15_{0.11}}$ & $\mathbf{96.87_{2.08}}$ & $\mathbf{0.11_{0.08}}$ \\
\cdashline{2-10}
 & FPCA (0.1, 0.01) & $29.85_{0.82}$ & $0.02_{0.005}$ & $\mathbf{99.93_{0.13}}$ & $0.12_{0.07}$ & $71.13_{2.75}$ & $0.02_{0.03}$ & $\mathbf{96.77_{1.8}}$ & $0.1_{0.07}$ \\
 & FPCA (0, 0.01) & $29.74_{0.84}$ & $0.02_{0.005}$ & $\mathbf{99.93_{0.13}}$ & $0.12_{0.07}$ & $70.87_{2.38}$ & $0.02_{0.04}$ & $96.4_{1.77}$ & $0.1_{0.07}$ \\
 & MbF-PCA ($10^{-3}$) & $\mathbf{34.07_{1.0}}$ & $0.019_{0.007}$ & $\mathbf{99.93_{0.13}}$ & $0.12_{0.07}$ & $\mathbf{73.7_{2.58}}$ & $0.05_{0.04}$ & $96.67_{1.04}$ & $0.11_{0.06}$ \\
 & MbF-PCA ($10^{-6}$) & $16.82_{1.11}$ & $\mathbf{0.011_{0.007}}$ & $94.37_{2.63}$ & $0.12_{0.06}$ & $70.1_{0.63}$ & $\mathbf{0.0_{0.0}}$ & $80.07_{3.52}$ & $\mathbf{0.06_{0.05}}$ \\
 & INLP & $15.5_{0.92}$ & $\mathbf{0.011_{0.002}} $& $98.83_{0.79}$ & $\mathbf{0.11_{0.07}} $& $69.8_{1.21}$ & $\mathbf{0.0_{0.0}} $& $94.2_{2.26}$ & $0.1_{0.06}$ \\
 & RLACE & $17.24_{0.75}$ & $0.03_{0.023} $& $99.73_{0.29}$ & $0.12_{0.07} $& $70.97_{2.31}$ & $0.02_{0.03} $& $95.43_{2.89}$ & $0.12_{0.07}$ \\
\cdashline{2-10}
 & Fair PCA & $\mathbf{36.63_{1.04}}$ & $0.022_{0.008}$ & $\mathbf{99.93_{0.13}}$ & $0.12_{0.07}$ & $\mathbf{74.1_{2.23}}$ & $0.05_{0.04}$ & $96.03_{2.26}$ & $0.11_{0.04}$ \\
 & Fair Kernel PCA & $n/a_{}$ & $n/a_{}$ & $70.1_{1.18}$ & $\mathbf{0.0_{0.01}}$ & $69.8_{1.21}$ & $\mathbf{0.0_{0.0}}$ & $74.07_{2.43}$ & $\mathbf{0.06_{0.03}}$ \\
 & Fair PCA-S (0.5) & $20.51_{0.79}$ & $\mathbf{0.013_{0.006}}$ & $99.87_{0.22}$ & $0.12_{0.08}$ & $71.6_{2.83}$ & $0.02_{0.04}$  & $95.13_{2.52}$ & $0.08_{0.06}$ \\
 & Fair PCA-S (0.85) & $28.83_{0.82}$ & $0.018_{0.007}$ & $\mathbf{99.93_{0.13}}$ & $0.12_{0.07}$ & $71.87_{2.62}$ & $0.03_{0.04}$& $\mathbf{96.27_{2.0}}$ & $0.09_{0.07}$ \\
\hline
\end{tabular}
\end{scriptsize}
\end{table*}

\begin{table*}[t]
    \centering
\caption{Similar table as Table~\ref{tab:Adult} for the COMPAS dataset. }
    \label{tab:COMPAS}

    \vspace{3mm}
\renewcommand{\arraystretch}{1.15}
\begin{scriptsize}
\begin{tabular}{c|c|cccccccc}
\hline
\multicolumn{10}{c}{\normalsize{\textbf{COMPAS} [$\text{feature dim}=11$, $\Psymb(Y=1)=0.4548$]}}\\
\multirow{2}{*}{$k$} & \multirow{2}{*}{Algorithm} & \multirow{2}{*}{\%Var{\scriptsize ($\uparrow$)}} & \multirow{2}{*}{MMD$^2${\scriptsize ($\downarrow$)}} & \%Acc{\scriptsize ($\uparrow$)} & $\Delta_{DP}${\scriptsize ($\downarrow$)}  & \%Acc{\scriptsize ($\uparrow$)}  & $\Delta_{DP}${\scriptsize ($\downarrow$)}  & \%Acc{\scriptsize ($\uparrow$)}  & $\Delta_{DP}${\scriptsize ($\downarrow$)} \\
 & & & & \multicolumn{2}{c}{Kernel SVM} & \multicolumn{2}{c}{Linear SVM} & \multicolumn{2}{c}{MLP}\\\hline
\multirow{11}{*}{2} & PCA & $\mathbf{39.28_{4.91}}$ & $\mathbf{0.092_{0.009}}$ & $\mathbf{64.53_{1.38}}$ & $\mathbf{0.29_{0.08}}$ & $\mathbf{56.69_{1.52}}$ & $\mathbf{0.2_{0.09}}$ & $\mathbf{61.77_{2.81}}$ & $\mathbf{0.28_{0.06}}$ \\
\cdashline{2-10}
 & FPCA (0.1, 0.01) & $\mathbf{35.06_{4.9}}$ & $0.012_{0.007}$ & $61.65_{1.11}$ & $0.1_{0.06}$ & $56.23_{1.19}$ & $0.04_{0.03}$ & $57.61_{1.67}$ & $0.08_{0.04}$ \\
 & FPCA (0, 0.01) & $34.43_{4.76}$ & $0.011_{0.006}$ & $60.86_{1.03}$ & $0.11_{0.06}$ & $55.9_{1.26}$ & $0.03_{0.03}$ & $56.9_{1.88}$ & $0.09_{0.03}$ \\
 & MbF-PCA ($10^{-3}$) & $34.24_{3.68}$ & $0.006_{0.003}$ & $\mathbf{64.78_{0.96}}$ & $0.12_{0.05}$ & $56.92_{2.7}$ & $0.07_{0.06}$ & $\mathbf{60.53_{1.46}}$ & $0.1_{0.06}$ \\
 & MbF-PCA ($10^{-6}$) & $13.52_{2.76}$ & $0.002_{0.002}$ & $58.26_{1.29}$ & $0.03_{0.02}$ & $55.01_{0.9}$ & $0.01_{0.03}$ & $56.15_{1.52}$ & $0.04_{0.04}$ \\
 & INLP & $0.42_{1.25}$ & $\mathbf{0.0_{0.0}}$ & $54.95_{1.51}$ & $\mathbf{0.01_{0.02}}$ & $54.52_{0.7}$ & $\mathbf{0.0_{0.0}}$ & $54.95_{1.51}$ & $\mathbf{0.01_{0.02}}$ \\
 & RLACE & $19.18_{4.03}$ & $0.008_{0.007} $& $63.36_{1.96}$ & $0.1_{0.06} $& $\mathbf{59.64_{3.04}}$ & $0.06_{0.05} $& $62.16_{2.67}$ & $0.07_{0.04}$\\
\cdashline{2-10}
 & Fair PCA & $\mathbf{35.56_{4.52}}$ & $0.019_{0.007}$ & $62.82_{0.88}$ & $0.11_{0.08}$ & $54.55_{0.76}$ & $0.03_{0.04}$ & $60.65_{2.29}$ & $0.13_{0.11}$ \\
 & Fair Kernel PCA & $n/a_{}$ & $n/a_{}$ & $57.8_{1.82}$ & $\mathbf{0.08_{0.06}}$ & $54.74_{1.21}$ & $\mathbf{0.02_{0.04}}$ & $57.67_{1.57}$ & $\mathbf{0.05_{0.04}}$ \\
 & Fair PCA-S (0.5) & $25.11_{5.14}$ & $\mathbf{0.006_{0.004}}$ & $\mathbf{64.1_{1.49}}$ & $0.15_{0.06}$ & $\mathbf{58.08_{2.92}}$ & $0.07_{0.05}$ &  $\mathbf{62.65_{1.75}}$ & $0.14_{0.07}$ \\
 & Fair PCA-S (0.85) & $35.42_{4.49}$ & $0.027_{0.005}$ & $60.93_{0.6}$ & $0.14_{0.04}$ & $55.43_{0.94}$ & $0.06_{0.08}$ & $56.63_{1.09}$ & $0.2_{0.1}$ \\
\hline
\multirow{11}{*}{10} & PCA & $\mathbf{100.0_{0.0}}$ & $\mathbf{0.241_{0.005}}$ & $\mathbf{73.14_{1.16}}$ & $\mathbf{0.21_{0.06}}$ & $\mathbf{64.78_{0.99}}$ & $\mathbf{0.18_{0.05}}$ & $\mathbf{69.81_{1.8}}$ & $\mathbf{0.23_{0.08}}$ \\
\cdashline{2-10}
 & FPCA (0.1, 0.01) & $87.79_{1.21}$ & $0.015_{0.003}$ & $72.25_{0.88}$ & $\mathbf{0.16_{0.06}}$ & $64.7_{1.67}$ & $0.06_{0.05}$ & $\mathbf{69.73_{1.95}}$ & $0.15_{0.07}$ \\
 & FPCA (0, 0.01) & $87.44_{1.28}$ & $0.015_{0.002}$ & $\mathbf{72.32_{0.88}}$ & $\mathbf{0.16_{0.07}}$ & $64.82_{1.64}$ & $\mathbf{0.05_{0.04}}$ & $68.52_{1.25}$ & $0.08_{0.07}$ \\
 & MbF-PCA ($10^{-3}$) & $87.75_{1.29}$ & $\mathbf{0.013_{0.002}}$ & $72.19_{0.88}$ & $\mathbf{0.16_{0.06}}$ & $64.97_{1.53}$ & $0.08_{0.04}$ & $68.89_{1.61}$ & $0.11_{0.06}$ \\
 & MbF-PCA ($10^{-6}$) & $87.75_{1.29}$ & $\mathbf{0.013_{0.002}}$ & $72.19_{0.88}$ & $\mathbf{0.16_{0.06}}$ & $\mathbf{65.01_{1.49}}$ & $0.08_{0.04}$ & $68.14_{1.14}$ & $\mathbf{0.07_{0.05}}$ \\
 & INLP & $\mathbf{91.09_{0.88}}$ & $0.034_{0.005}$ & $71.4_{0.9}$ & $0.17_{0.03}$ & $64.93_{0.84}$ & $0.18_{0.04} $& $68.19_{1.46}$ & $0.2_{0.04}$ \\
 & RLACE & $87.47_{1.27}$ & $0.015_{0.002} $& $72.29_{0.85}$ & $\mathbf{0.16_{0.06}} $& $64.75_{1.73}$ & $\mathbf{0.05_{0.03}} $& $68.3_{2.07}$ & $0.1_{0.07}$ \\
\cdashline{2-10}
 & Fair PCA & $\mathbf{87.44_{1.28}}$ & $\mathbf{0.015_{0.002}}$ & $\mathbf{72.32_{0.9}}$ & $\mathbf{0.16_{0.06}}$ & $64.72_{1.69}$ & $\mathbf{0.05_{0.03}}$ & $67.94_{1.43}$ & $\mathbf{0.09_{0.07}}$ \\
 & Fair Kernel PCA & $n/a_{}$ & $n/a_{}$ & $65.96_{1.12}$ & $0.26_{0.07}$ & $64.33_{0.8}$ & $\mathbf{0.05_{0.04}}$ & $66.41_{1.03}$ & $0.14_{0.07}$ \\
 & Fair PCA-S (0.5) & $\mathbf{87.44_{1.28}}$ & $\mathbf{0.015_{0.002}}$ & $72.31_{0.88}$ & $\mathbf{0.16_{0.06}}$ & $\mathbf{64.75_{1.7}}$ & $\mathbf{0.05_{0.03}}$ & $\mathbf{69.24_{2.02}}$ & $0.12_{0.06}$ \\
 & Fair PCA-S (0.85) & $\mathbf{87.44_{1.28}}$ & $\mathbf{0.015_{0.002}}$ & $72.31_{0.88}$ & $\mathbf{0.16_{0.06}}$ & $\mathbf{64.75_{1.7}}$ & $\mathbf{0.05_{0.03}}$ & $\mathbf{69.24_{2.02}}$ & $0.12_{0.06}$ \\
\hline
\end{tabular}
\end{scriptsize}
\end{table*}

\clearpage

\begin{table*}[t!]
    \centering
\caption{We applied the fair PCA method of \citet{samira2018} to the three real-world datasets considered in Section~\ref{subsec:experiments_linear_guarding}. The tables do not provide the metrics \%Var and MMD$^2$ since the method of \citeauthor{samira2018} is not guaranteed to yield an embedding of the desired target dimension, and hence their method and the 
methods 
studied in 
Section~\ref{subsec:experiments_linear_guarding} are not comparable w.r.t. 
\%Var and MMD$^2$. 
}
    \label{tab:samadi}

    \vspace{3mm}
\renewcommand{\arraystretch}{1.2}

\begin{tabular}{c|c|cccccc}
\hline
 \multicolumn{8}{c}{\normalsize{\textbf{Adult Income} [$\text{feature dim}=97$, $\Psymb(Y=1)=0.2489$]}}\\
  \multirow{2}{*}{$k$} & \multirow{2}{*}{Algorithm} &  \%Acc{\scriptsize ($\uparrow$)} & $\Delta_{DP}${\scriptsize ($\downarrow$)}  & \%Acc{\scriptsize ($\uparrow$)}  & $\Delta_{DP}${\scriptsize ($\downarrow$)}  & \%Acc{\scriptsize ($\uparrow$)}  & $\Delta_{DP}${\scriptsize ($\downarrow$)} \\
    & & \multicolumn{2}{c}{Kernel SVM} & \multicolumn{2}{c}{Linear SVM} & \multicolumn{2}{c}{MLP}\\\hline
2 & Fair PCA of \citet{samira2018} &  $81.59_{1.12}$ & $0.14_{0.03}$ & $ 80.8_{1.09}$ & $0.13_{0.04}$ & $82.02_{1.03}$ & $0.18_{0.04}$ \\
 \hline
 10 & Fair PCA of \citet{samira2018} &  $87.78_{0.87}$ & $0.18_{0.02}$ & $83.2_{1.09}$ & $0.13_{0.03}$ & $91.34_{2.04}$ & $0.18_{0.03}$ \\
 \hline

\hline
 \multicolumn{8}{c}{\normalsize{\textbf{German Credit} [$\text{feature dim}=57$, $\Psymb(Y=1)=0.3020$]}}\\
  \multirow{2}{*}{$k$} & \multirow{2}{*}{Algorithm} &  \%Acc{\scriptsize ($\uparrow$)} & $\Delta_{DP}${\scriptsize ($\downarrow$)}  & \%Acc{\scriptsize ($\uparrow$)}  & $\Delta_{DP}${\scriptsize ($\downarrow$)}  & \%Acc{\scriptsize ($\uparrow$)}  & $\Delta_{DP}${\scriptsize ($\downarrow$)} \\
    & & \multicolumn{2}{c}{Kernel SVM} & \multicolumn{2}{c}{Linear SVM} & \multicolumn{2}{c}{MLP}\\\hline
2 & Fair PCA of \citet{samira2018} &  $73.73_{1.38}$ & $0.05_{0.03}$ & $ 69.97_{0.82}$ & $0.0_{0.01}$ & $72.97_{1.4}$ & $0.04_{0.02}$ \\
 \hline
 10 & Fair PCA of \citet{samira2018} &  $98.73_{0.7}$ & $0.1_{0.07}$ & $76.8_{1.9}$ & $0.09_{0.07}$ & $98.13_{0.69}$ & $0.12_{0.07}$ \\
 \hline

\hline
 \multicolumn{8}{c}{\normalsize{\textbf{COMPAS} [$\text{feature dim}=11$, $\Psymb(Y=1)=0.4548$]}}\\
  \multirow{2}{*}{$k$} & \multirow{2}{*}{Algorithm} &  \%Acc{\scriptsize ($\uparrow$)} & $\Delta_{DP}${\scriptsize ($\downarrow$)}  & \%Acc{\scriptsize ($\uparrow$)}  & $\Delta_{DP}${\scriptsize ($\downarrow$)}  & \%Acc{\scriptsize ($\uparrow$)}  & $\Delta_{DP}${\scriptsize ($\downarrow$)} \\
    & & \multicolumn{2}{c}{Kernel SVM} & \multicolumn{2}{c}{Linear SVM} & \multicolumn{2}{c}{MLP}\\\hline
2 & Fair PCA of \citet{samira2018} &  $63.89_{1.86}$ & $0.2_{0.05}$ & $ 57.94_{2.22}$ & $0.12_{0.05}$ & $63.95_{3.98}$ & $0.18_{0.03}$ \\
 \hline
 10 & Fair PCA of \citet{samira2018} &  $73.12_{1.17}$ & $0.21_{0.06}$ & $64.79_{0.96}$ & $0.18_{0.05}$ & $69.46_{1.04}$ & $0.27_{0.09}$ \\
 \hline
\end{tabular}
\end{table*}

\subsection{Comparison with \citet{samira2018}}\label{app:comparison_samira}

We applied the fair PCA method of \citet{samira2018} to the three real-world datasets considered in Section~\ref{subsec:experiments_linear_guarding}. As discussed in Section~\ref{sec:related_work} and Section~\ref{subsec:experiments_linear_guarding}, the fairness notion underlying the method of \citeauthor{samira2018} is incomparable to our notion of fair PCA. \citeauthor{samira2018} provide theoretical guarantees for an algorithm that relies on solving a semidefinite program (SDP), but then propose to use a multiplicative weight update method for solving the SDP approximately in order to speed up computation. We observed that this can result in embedding dimensions that are much  larger than the desired target dimension. We used the code provided by \citeauthor{samira2018} without modifications; in particular, we used the same parameters for the multiplicative weight update algorithm  as they used in their experiment on the LFW dataset.

Table~\ref{tab:samadi} provides the results. We see that the downstream classifiers trained on the fair PCA representation of \citeauthor{samira2018} have roughly similar values of accuracy and DP violation as standard PCA. Clearly,  the DP violations are much higher than for our methods or the other competitors. 
Since the dimension of the fair PCA representation of \citeauthor{samira2018} is not guaranteed to equal the desired target dimension~$k$, we do not report the metrics \%Var and MMD$^2$ in Table~\ref{tab:samadi}.

\subsection{Fair PCA Applied to the CelebA Dataset}\label{app:CelebA}

Figures~\ref{fig:celeba_experiment_glasses} to~\ref{fig:celeba_experiment_hat} show examples of original CelebA images (top row of each figure) together with the results of applying fair PCA (middle and bottom row) for the various demographic attributes. 
We 
see that fair PCA 
adds something looking like glasses / a mustache / a beard to 
the 
faces, making it hard to tell whether an original face 
features 
those,  
and successfully obfuscates 
the demographic information for these attributes. Still the projected faces resemble the original ones to a good extent. For the attribute ``smiling'', fair PCA also succeeds in obfuscating the demographic information, but the whole faces become more 
perturbed 
and less similar to the original ones. For the attributes ``bald'' and ``hat'', fair PCA 
appears 
to fail, and we can tell for all of the faces under consideration that they \emph{do not} feature baldness / a hat. We suspect that 
the reason for this might be 
the high diversity of hats 
or 
\emph{non-bald} faces
(see 
Figure~\ref{fig:celebA_hat_examples} 
for 
some example images).

\begin{figure*}[t!]
    \centering
    \includegraphics[scale=\scaleBiasMitigation]{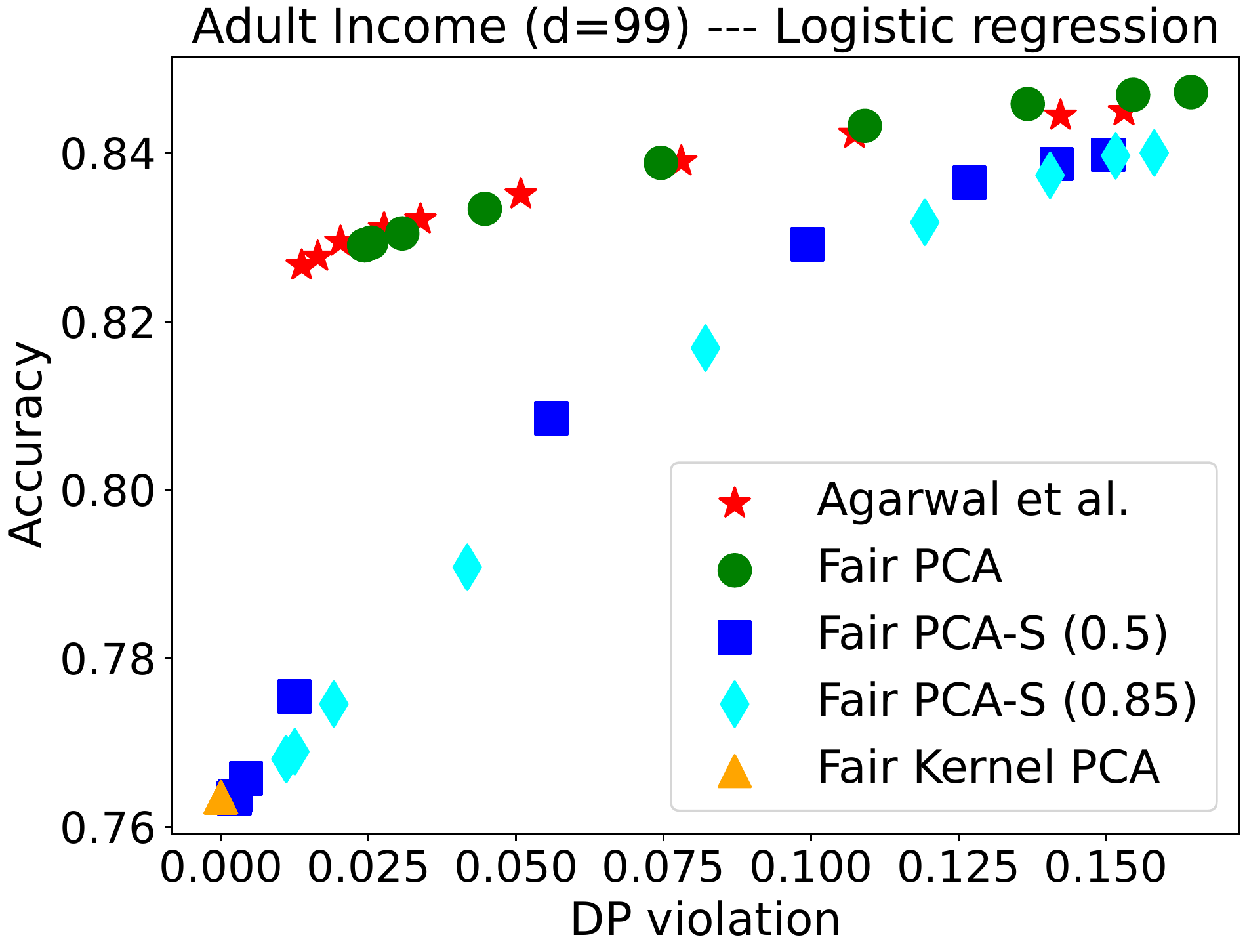}
    \includegraphics[scale=\scaleBiasMitigation]{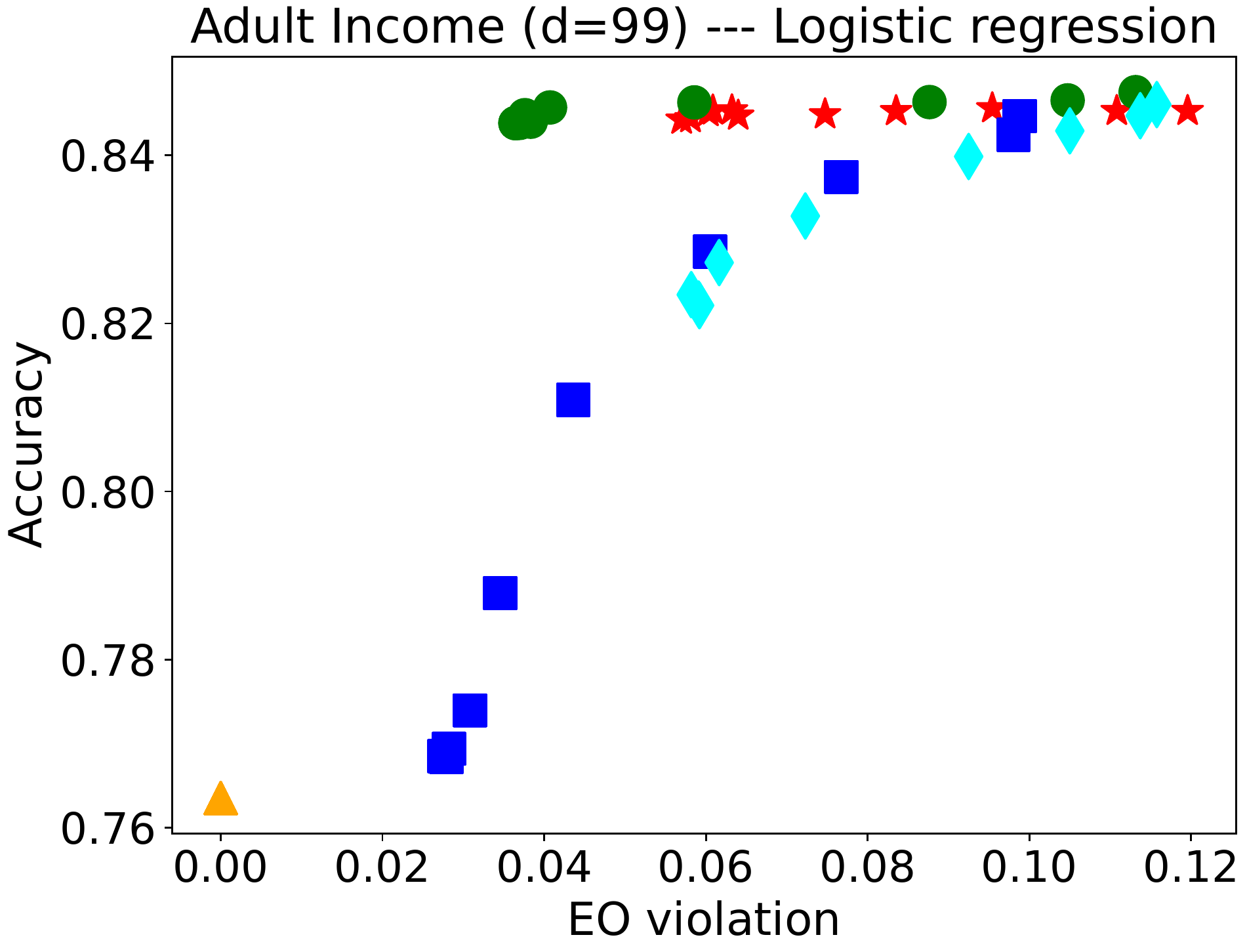}
    \includegraphics[scale=\scaleBiasMitigation]{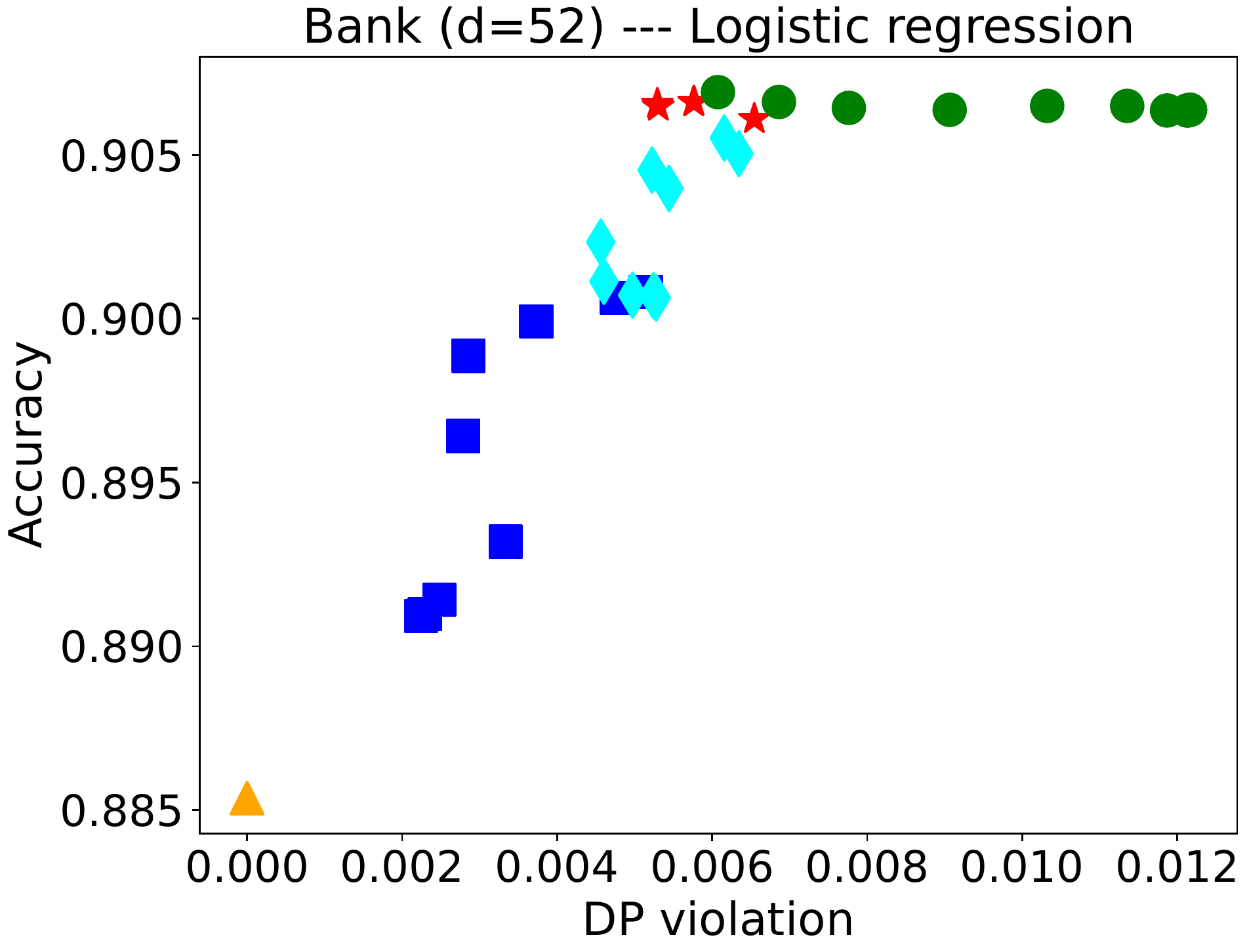}
    \includegraphics[scale=\scaleBiasMitigation]{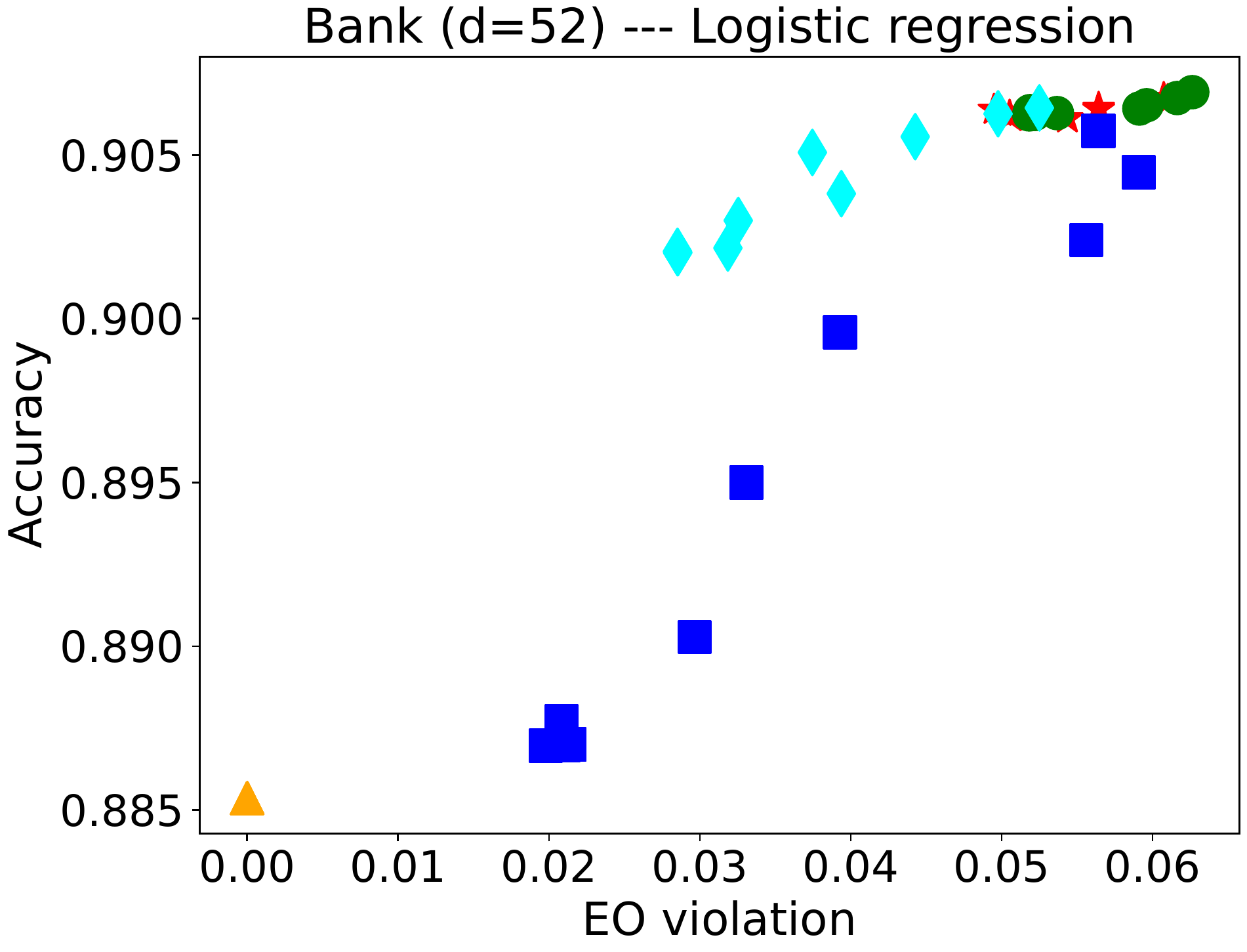}
    \caption{Comparison with the state-of-the-art reductions approach of \citet{agarwal_reductions_approach} 
    when training a logistic regression classifier on the Adult Income dataset (first and second plot) and  
    the Bank Marketing dataset (third and fourth plot). Compared to the plots in Figure~\ref{fig:bias_mitigation_exp}, these plots also show the results for Fair PCA-S and fair kernel PCA.}
    \label{fig:bias_mitigation_exp_Bank_Marketing}
\end{figure*}

\newcommand{\scaleBiasMitigationRuntime}{0.3}

\begin{figure*}[t!]
    \centering
    \includegraphics[scale=\scaleBiasMitigationRuntime]{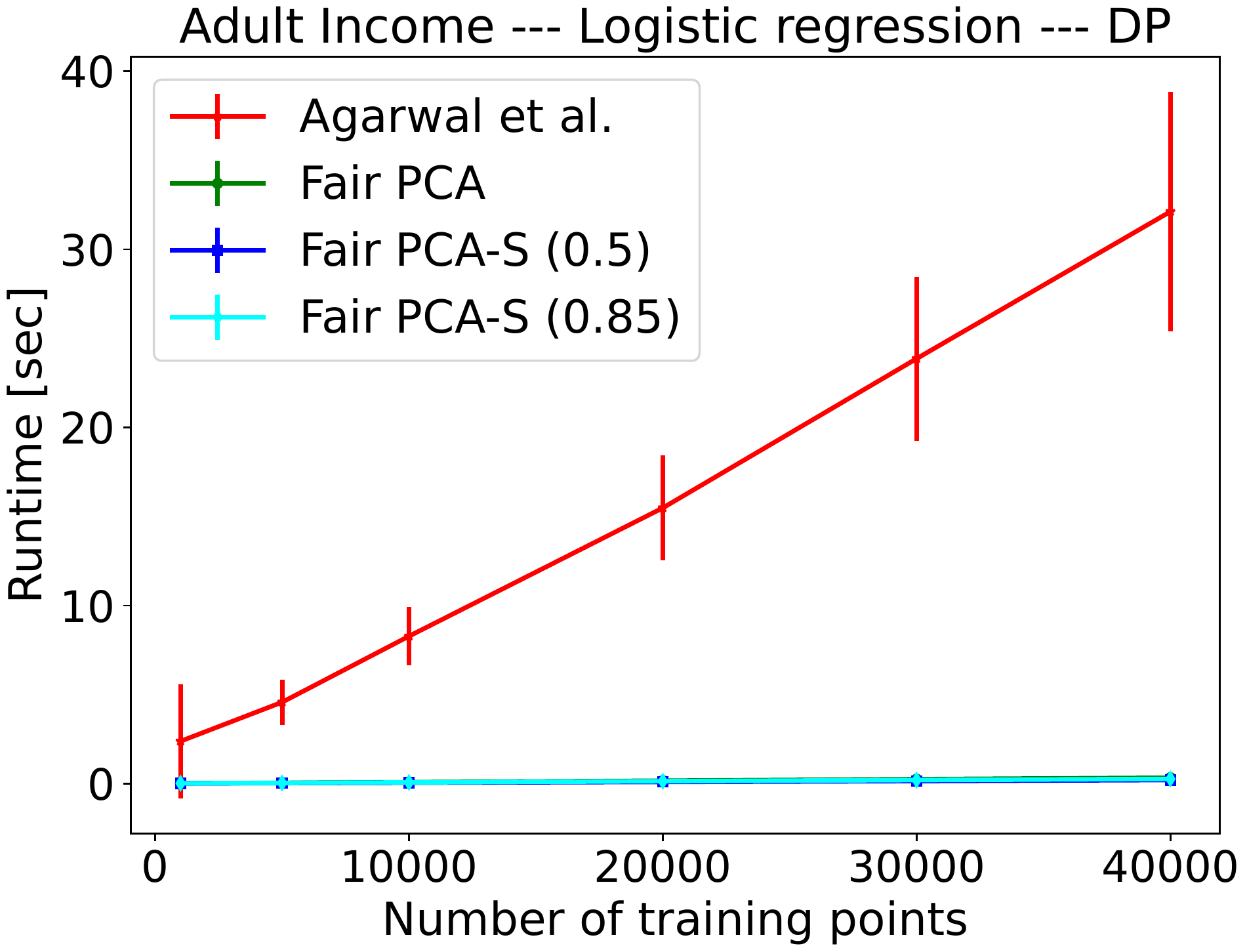}
    \hspace{1cm}
    \includegraphics[scale=\scaleBiasMitigationRuntime]{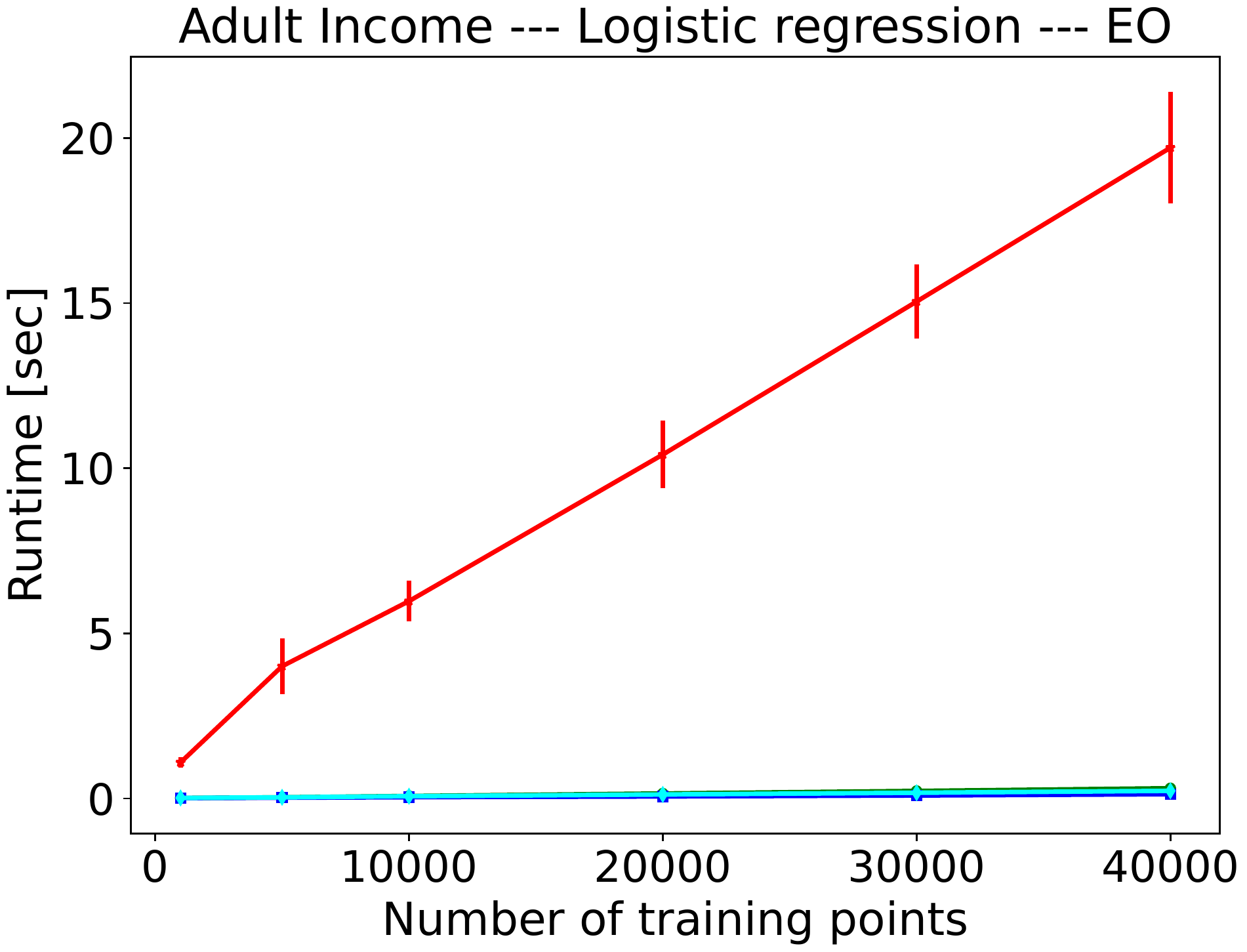}
    
    \caption{Runtime comparison between our methods and the method of \citet{agarwal_reductions_approach}. For our methods, the running time includes the time it takes to fit the logistic regression classifier on top of the fair representation.}
    \label{fig:bias_mitigation_exp_runtime}
\end{figure*}

\subsection{Comparison with 
\citet{agarwal_reductions_approach}}\label{appendix_agarwal_addendum}

Figure~\ref{fig:bias_mitigation_exp_Bank_Marketing} shows the results of the comparison with the reductions approach of \citet{agarwal_reductions_approach} when training a logistic regression classifier
for Fair PCA-S and fair kernel PCA (next to the results for fair PCA and the method of \citeauthor{agarwal_reductions_approach}, which we have already seen in Figure~\ref{fig:bias_mitigation_exp} in Section~\ref{subsec:experiments_bias_mitigation}).
We see that Fair PCA-S produces smooth trade-off curves and can achieve lower fairness violation than fair PCA or the method of \citeauthor{agarwal_reductions_approach} in some cases. 
However, the representation learned by fair kernel PCA only allows for a constant logistic regression classifier (with zero fairness violation and an accuracy equaling the probability of the predominant label---cf. Appendix~\ref{app:details_about_datasets}).

The plots of Figure~\ref{fig:bias_mitigation_exp_runtime} show the running time of the various methods as a function of the number of training points 
on the Adult Income dataset and when training a logistic regression classifier. 
The curves show the average over the eleven values of the fairness parameter / the parameter~\texttt{difference\_bound} (cf.~Appendix~\ref{app:implementation_details}) and over ten random draws of training  data together with the standard deviation as error bars. Note that for our methods the running time includes the time it takes to train the classifier on a representation produced by our method. While none of our methods ever runs for more than 0.5 seconds, the method of \citeauthor{agarwal_reductions_approach}, on average, runs for more than 32 seconds when training with 40000 datapoints and aiming for DP. 
 The plots do not show curves for fair kernel PCA, which we cannot simply apply to this large number of training points due to its cubic running time  in the number of datapoints. We leave it as an interesting question for future work to develop scalable approximation techniques for fair kernel PCA similarly to those that have been developed for standard kernel PCA or other kernel methods \citep[e.g.,][]{williams2000,Kim2005,chin2006}.

\clearpage

\renewcommand{\scaleCelebA}{1.4cm}

\newcommand{\attribute}{glasses}
\begin{figure}
    \centering
    \begin{turn}{90} 
     \begin{minipage}{\scaleCelebA}
    \begin{center}
    \begin{small}
    Original
    \\
    dim$=$6400
    \end{small}
    \end{center}
    \end{minipage}
    \end{turn}
     \hspace{-1pt}
    \includegraphics[height=\scaleCelebA]{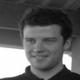}
    \includegraphics[height=\scaleCelebA]{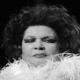}
    \includegraphics[height=\scaleCelebA]{CelebA_experiment/original/053950.jpg}
    \includegraphics[height=\scaleCelebA]{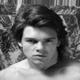}
    \includegraphics[height=\scaleCelebA]{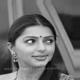}
    \includegraphics[height=\scaleCelebA]{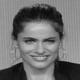}
    \includegraphics[height=\scaleCelebA]{CelebA_experiment/original/050288.jpg}
    \includegraphics[height=\scaleCelebA]{CelebA_experiment/original/016531.jpg}
    \includegraphics[height=\scaleCelebA]{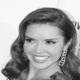}
    \includegraphics[height=\scaleCelebA]{CelebA_experiment/original/094247.jpg}

    \vspace{0mm}
    \par\noindent\rule{\linewidth}{0.6pt}
    \vspace{0mm}
    
    \centering
    \begin{turn}{90} 
    \begin{minipage}{\scaleCelebA}
    \begin{center}
    \begin{small}
    Fair PCA
    \\
    $k=6399$
    \end{small}
    \end{center}
    \end{minipage}
    \end{turn}
    \hspace{-1pt}
    \includegraphics[height=\scaleCelebA]{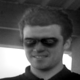}
    \includegraphics[height=\scaleCelebA]{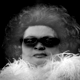}
    \includegraphics[height=\scaleCelebA]{CelebA_experiment/FairPCA_dim6399/\attribute/53950.png}
    \includegraphics[height=\scaleCelebA]{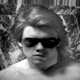}
    \includegraphics[height=\scaleCelebA]{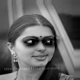}
    \includegraphics[height=\scaleCelebA]{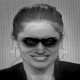}
\includegraphics[height=\scaleCelebA]{CelebA_experiment/FairPCA_dim6399/\attribute/50288.png}
    \includegraphics[height=\scaleCelebA]{CelebA_experiment/FairPCA_dim6399/\attribute/16531.png}
    \includegraphics[height=\scaleCelebA]{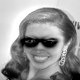}
    \includegraphics[height=\scaleCelebA]{CelebA_experiment/FairPCA_dim6399/\attribute/94247.png}

     \vspace{2mm}
    \centering
    \begin{turn}{90} 
    \begin{minipage}{\scaleCelebA}
    \begin{center}
    \begin{small}
    Fair PCA
    \\
    $k=400$
    \end{small}
    \end{center}
    \end{minipage}
    \end{turn}
    \hspace{-1pt}
     \includegraphics[height=\scaleCelebA]{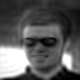}
    \includegraphics[height=\scaleCelebA]{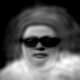}
    \includegraphics[height=\scaleCelebA]{CelebA_experiment/FairPCA_dim400/\attribute/53950.png}
    \includegraphics[height=\scaleCelebA]{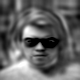}
    \includegraphics[height=\scaleCelebA]{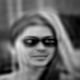}
    \includegraphics[height=\scaleCelebA]{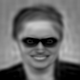}
\includegraphics[height=\scaleCelebA]{CelebA_experiment/FairPCA_dim400/\attribute/50288.png}
    \includegraphics[height=\scaleCelebA]{CelebA_experiment/FairPCA_dim400/\attribute/16531.png}
    \includegraphics[height=\scaleCelebA]{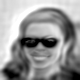}
    \includegraphics[height=\scaleCelebA]{CelebA_experiment/FairPCA_dim400/\attribute/94247.png}

    \caption{
    Fair PCA 
    applied to the CelebA dataset 
    to erase the concept of ``\attribute''. 
    }
    \label{fig:celeba_experiment_\attribute}
\end{figure}

\vspace{1cm}
\renewcommand{\attribute}{mustache}

\vspace{1cm}
\renewcommand{\attribute}{beard}

\vspace{1cm}
\renewcommand{\attribute}{smiling}

\vspace{1cm}
\renewcommand{\attribute}{bald}

\vspace{1cm}
\renewcommand{\attribute}{hat}

\clearpage

\begin{figure}
    \centering
    \includegraphics[height=\scaleCelebA]{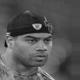}
    \includegraphics[height=\scaleCelebA]{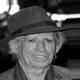}
    \includegraphics[height=\scaleCelebA]{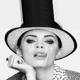}
    \includegraphics[height=\scaleCelebA]{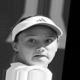}
    \includegraphics[height=\scaleCelebA]{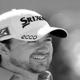}
    \includegraphics[height=\scaleCelebA]{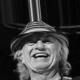}
    \includegraphics[height=\scaleCelebA]{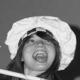}
    \includegraphics[height=\scaleCelebA]{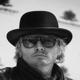}
    \includegraphics[height=\scaleCelebA]{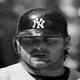}
    \includegraphics[height=\scaleCelebA]{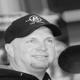}

    \vspace{3mm}
    \includegraphics[height=\scaleCelebA]{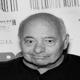}
    \includegraphics[height=\scaleCelebA]{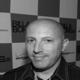}
    \includegraphics[height=\scaleCelebA]{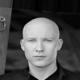}
    \includegraphics[height=\scaleCelebA]{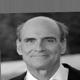}
    \includegraphics[height=\scaleCelebA]{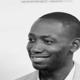}
    \includegraphics[height=\scaleCelebA]{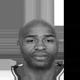}
    \includegraphics[height=\scaleCelebA]{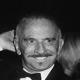}
    \includegraphics[height=\scaleCelebA]{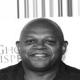}
    \includegraphics[height=\scaleCelebA]{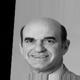}
    \includegraphics[height=\scaleCelebA]{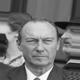}
\caption{Examples of faces in the CelebA dataset that feature a hat (top row) or baldness (bottom row). We can see that the hats are highly diverse. 
Note that 
both types of faces are rare in the dataset: only 4.8\% of the faces feature a hat, and only 2.2\% feature baldness (cf. Figure~\ref{fig:CelebA_distribution_attributes}).}\label{fig:celebA_hat_examples}
\end{figure}

\end{document}